\documentclass[11pt]{article}
\usepackage{amsmath,amssymb,amsthm,mathtools}
\usepackage{geometry}
\usepackage[ruled,vlined]{algorithm2e}
\usepackage{xcolor}
\usepackage{authblk}
\usepackage{booktabs}
\usepackage{graphicx}
\usepackage{subcaption}
\usepackage{hyperref}
\usepackage{float}
\usepackage[section]{placeins}
\graphicspath{{./}{figures/}}
\theoremstyle{plain}
\newtheorem{theorem}{Theorem}
\newtheorem{definition}[theorem]{Definition}
\newtheorem{corollary}[theorem]{Corollary}
\newtheorem{lemma}[theorem]{Lemma}
\newtheorem{proposition}[theorem]{Proposition}
\theoremstyle{remark}
\newtheorem*{remark}{Remark}

\newcommand{\E}{\mathbb{E}}
\newcommand{\R}{\mathbb{R}}
\newcommand{\Sph}{\mathbb{S}}

\newcommand{\Linf}{L^\infty}

\newcommand{\essinf}{\operatorname*{ess\,inf}}
\newcommand{\ft}{\tilde f}

\title{Local Fokker--Planck Geometry for Score Estimation:
Heat-Ball Mean-Value Representations and Exact High-Dimensional Sampling}

\author[1]{Jiayao Bai\thanks{\texttt{1793408913@qq.com}}}
\author[1]{Lang Deng\thanks{\texttt{dela0929@163.com}}}
\author[1]{Yi Du\thanks{Corresponding author: \texttt{duyidy@jnu.edu.cn}}}
\author[2]{Yifei Jia\thanks{\texttt{jiayifei333@163.com}}}
\affil[1]{Department of Mathematics, Jinan University, Guangdong, China}
\affil[2]{College of Mathematics and Statistics, Xinjiang University, Xinjiang, China}
\date{}

\begin{document}
\maketitle

%==========================================================
\begin{abstract}
Score-based generative models and Langevin samplers rely on estimating the
score function $\nabla_x\log p_t(x)$ of a forward diffusion. Classically
this is tractable when the drift is linear: the marginal density is Gaussian
and the score is a global conditional expectation~\cite{Vincent2011,Huang2021}.
For a general nonlinear, state-dependent drift the marginal density has no
closed form, and existing methods---denoising score matching (DSM) and global
Fokker--Planck residual penalties~\cite{Lai2023}---resort to global averaging
that inflates estimation error in low-density regions precisely where accuracy
is most critical.

We address this by developing a \emph{local Fokker--Planck geometric
framework} that replaces global conditioning with local parabolic averaging.
Our approach rests on three numerical-analytic contributions.
First, a time change to the cumulative-variance coordinate reduces the
variable-coefficient Fokker--Planck equation to a standard inhomogeneous heat
equation, on which we extend Evans' classical heat-ball monotonicity method
to derive \emph{exact} local mean-value representations for the score
$\nabla_x\log p$ together with the density, log-density, and entropy density
(Theorem~\ref{thm:hierarchy}, Corollary~\ref{cor:score-mean}); local
well-posedness is established under an explicit dimension-dependent drift
budget (Theorem~\ref{thm:local}).
Second, for high-dimensional Monte Carlo evaluation of the resulting
heat-ball integrals, we introduce the $\kappa$-measure---the Watson
mean-value probability measure on heat balls---and derive its exact
factorized sampler (Algorithm~\ref{alg:psi-sampler}) with unit per-sample
weight, $\chi^2_2$ radial concentration, and a provable
$\Omega(\sqrt{d})$ variance advantage over uniform importance sampling
(Corollary~\ref{cor:concentration}, Proposition~\ref{prop:variance-gap}).
Third, the $r\to0$ limit of the heat-ball residual recovers the pointwise
Fokker--Planck residual (Theorem~\ref{thm:dsm-limit}), showing that the
local framework is a one-parameter generalization of global FP-residual
methods, and that the DSM population minimizer is feasible for the
heat-ball constraint at every scale.

We validate the framework on 2D structured data---where the local constraint
demonstrably outperforms the global FP residual in low-density regions---on
256-dimensional MNIST, and on a dedicated sampler study confirming the
concentration laws of Corollary~\ref{cor:concentration}.
\end{abstract}

%==========================================================
\section{Introduction}\label{sec:intro}
%==========================================================

Score-based generative modeling~\cite{Song2019,Song2020,Ho2020} has become
a central paradigm in machine learning, enabling high-quality sample
generation by learning the score function $\nabla_x\log p_t(x)$ of a
forward diffusion. The pioneering work of Song and Ermon~\cite{Song2019}
and its unification with diffusion probabilistic
models~\cite{Ho2020,Vincent2011} established denoising score matching (DSM)
as the standard training objective, recovering the score as a global
conditional expectation $\nabla_x\log p_t(x)=\E[\,\cdot\mid X_t=x]$.

\paragraph{The nonlinear drift problem.}
The global conditional-expectation formula is exact when the drift is
constant or linear---the standard DDPM and SMLD settings where the marginal
density is Gaussian. For a general nonlinear, state-dependent drift
$\mathrm{d}X_t = f(X_t,t)\,\mathrm{d}t + g(t)\,\mathrm{d}W_t$,
the marginal density $p(x,t)$ has no closed form and global conditioning
leads to two systematic deficiencies:
\begin{enumerate}
\item \textbf{Density inaccessibility.} No closed-form expression exists for
$p(x,t)$, so the global conditional expectation cannot be evaluated or
approximated without strong assumptions on $f$.
\item \textbf{Bias inflation in low-density regions.} Global averaging mixes
information from distant, heterogeneous parts of the distribution, inflating
score estimation error in low-density tails---the regions most critical for
mode coverage and manifold fidelity.
\end{enumerate}
Recent work, FP-Diffusion~\cite{Lai2023}, addresses this by adding a global
Fokker--Planck residual penalty (PINN-style), but the score itself remains
a global conditional expectation and the low-density degradation persists
(Section~\ref{sec:experiments-2d}, Observation~1).

\paragraph{Our approach: local parabolic geometry.}
We take a different route, replacing global conditioning with
\emph{local parabolic averaging}. The central insight is that a time change
to the cumulative-variance coordinate $\tau(t)=\int_0^t\frac12 g^2(s)\,\mathrm{d}s$
reduces the variable-coefficient Fokker--Planck equation to a standard
inhomogeneous heat equation. On this equation, the classical heat-ball
monotonicity method of Watson~\cite{watson1973} and Evans~\cite{Evans2010}
yields exact mean-value representations: every pointwise value of a caloric
function equals a local weighted integral over a backward heat ball, with
the Watson kernel $|y|^2/s^2$. Extending these identities to the
nonlinear Fokker--Planck setting, we obtain:
\begin{itemize}
\item \emph{Exact} local representations for the score $\nabla_x\log p$
together with the density, log-density, and entropy density---no global
density required (Theorem~\ref{thm:hierarchy},
Corollary~\ref{cor:score-mean}).
\item \emph{Local well-posedness} with an explicit, dimension-dependent
drift budget controlling how the safe existence window shrinks in high
dimension (Theorem~\ref{thm:local}).
\item An \emph{exact high-dimensional sampler} for the Watson mean-value
measure ($\kappa$-measure, Algorithm~\ref{alg:psi-sampler}) with unit
per-sample weight, $\chi^2_2$ concentration, and a provable
$\Omega(\sqrt{d})$ variance advantage over uniform importance sampling
(Corollary~\ref{cor:concentration}, Proposition~\ref{prop:variance-gap}).
\end{itemize}
The local viewpoint reduces bias (nearby points respect local geometry
rather than enforcing global Gaussian smoothing) and variance (the
$\kappa$-sampler provides exact unbiased estimation with no importance
reweighting). Analytically, Theorem~\ref{thm:dsm-limit} shows that the
$r\to0$ limit of the heat-ball residual recovers the pointwise
Fokker--Planck residual and that the DSM population minimizer is feasible
for the heat-ball constraint at all scales, establishing the local
framework as a strict generalization of both DSM and FP-Diffusion.

While the motivation comes from score-based generative modeling, the
technical contributions are in numerical analysis and computational
mathematics:
\begin{itemize}
\item \textbf{Extension of the heat-ball mean-value method} from
constant-coefficient to variable-coefficient parabolic equations, yielding
explicit representations for $u$, $\log u$, $\nabla\log u$, and
$u\log u$ as heat-ball integrals with quantified error.
\item \textbf{An exact factorized sampler} for the Watson parabolic
probability measure, with closed-form Beta$(d/2+1,1)$ radial distribution,
rigorous concentration analysis ($\chi^2_2$ and Gaussian limits), and
non-asymptotic variance bounds.
\item \textbf{Dimension-dependent well-posedness bounds} that make precise
how the drift budget must be controlled in high dimensions for the
representations to hold.
\end{itemize}

\paragraph{Related work.}
The heat-ball mean-value identity for the constant-coefficient heat equation
is classical~\cite{watson1973,Evans2010}; the present extension to
variable-coefficient Fokker--Planck dynamics and the $\kappa$-measure sampler
are new. On the score-estimation side, DSM~\cite{Vincent2011} and its
variants~\cite{Song2019,Song2020,Ho2020} are global methods;
FP-Diffusion~\cite{Lai2023} adds a global FP residual penalty. The
Helmholtz/Hodge decomposition as a diagnostic for learned score fields
connects to flow matching~\cite{Lipman2023,Albergo2023} and divergence-free
generative models~\cite{Rozen2021}. The Jordan--Kinderlehrer--Otto
framework~\cite{JKO1998} motivates the variational Fokker--Planck
perspective; our approach is complementary, working at the level of
pointwise mean-value identities rather than gradient flows.

\paragraph{Main contributions.}
\begin{enumerate}
\item \textbf{Local mean-value hierarchy for score estimation.}
(Section~\ref{sec:heatball}) Exact heat-ball representations for $u$,
$v=\log u$, $w=u\log u$, and the score $\nabla v$
(Theorem~\ref{thm:hierarchy}, Corollary~\ref{cor:score-mean}); local
well-posedness under explicit dimension-dependent drift bounds
(Theorem~\ref{thm:local}); $r\to0$ recovery of the Fokker--Planck residual
and DSM feasibility (Theorem~\ref{thm:dsm-limit}).

\item \textbf{$\kappa$-measure and exact high-dimensional sampler.}
(Section~\ref{sec:psi-sampling}) Closed-form factorization of the Watson
measure (Proposition~\ref{prop:factorization}), exact sampler
(Algorithm~\ref{alg:psi-sampler}), $\chi^2_2$ and Gaussian concentration
laws (Corollary~\ref{cor:concentration}), and non-asymptotic
$\Omega(\sqrt{d})$ variance gap over uniform importance sampling
(Proposition~\ref{prop:variance-gap}).

\item \textbf{Numerical validation.}
(Section~\ref{sec:experiments}) 2D diagnostic study confirming local
robustness (Observation~1) and Helmholtz decomposition structure
(Observation~2); 256-dimensional MNIST scalability; sampler concentration
verification.
\end{enumerate}

\paragraph{Organization.}
Section~\ref{sec:model} sets up the SDE, Fokker--Planck equation, time
change, and Helmholtz decomposition.
Section~\ref{sec:heatball} derives the parabolic mean-value hierarchy,
well-posedness, and the FP residual limit.
Section~\ref{sec:psi-sampling} constructs the $\kappa$-measure and the
exact sampler with variance and concentration analysis.
Section~\ref{sec:experiments} reports numerical validation.
Section~\ref{sec:conclusion} concludes.
Appendix~\ref{sec:appendix} collects additional 2D diagnostics and the 1D
bistable OU verification.

%==========================================================
\section{Problem Setup and Fokker--Planck Dynamics}\label{sec:model}
%==========================================================

\subsection{SDE, Fokker--Planck equation, and time change}

Consider the It\^o SDE~\cite{Anderson1982,Pavliotis2014}
\begin{equation}\label{eq:gen_sde}
  \mathrm{d}x_t = f(x_t,t)\,\mathrm{d}t + g(t)\,\mathrm{d}W_t,
  \quad x_t\in\R^d,\quad g(t)>0,
\end{equation}
where $f:\R^d\times[0,\infty)\to\R^d$ is the drift and $W_t$ is standard
Brownian motion. The density $p(x,t)$ of $x_t$ satisfies the forward
Fokker--Planck (Kolmogorov forward) equation
\begin{equation}\label{eq:FP}
  \partial_t p = -\nabla_x\!\cdot\bigl(f\,p\bigr) + \tfrac12 g^2(t)\,\Delta_x p,
  \qquad (x,t)\in\R^d\times(0,\infty).
\end{equation}

\paragraph{Time change.}
Define the \emph{cumulative-variance coordinate}
\begin{equation}\label{eq:tau}
  \tau(t) := \int_0^t \tfrac12 g^2(s)\,\mathrm{d}s.
\end{equation}
Since $\tau'(t)=\tfrac12 g^2(t)>0$, $\tau$ is strictly increasing with
smooth inverse $t=t(\tau)$. Setting $u(x,\tau):=p(x,t(\tau))$ and
defining the \textbf{normalized drift}
\begin{equation}\label{eq:tf-def}
  \tilde f(x,\tau) := \frac{f(x,t(\tau))}{\tfrac12 g^2(t(\tau))},
\end{equation}
equation~\eqref{eq:FP} reduces to the \textbf{standard inhomogeneous heat
equation}
\begin{equation}\label{eq:heat}
  \partial_\tau u - \Delta_x u = -\nabla_x\!\cdot\bigl(\tilde f(x,\tau)\,u(x,\tau)\bigr).
\end{equation}

\begin{remark}
The time-dependent diffusion coefficient $g^2(t)$ is fully absorbed into
$\tau$; equation~\eqref{eq:heat} has constant unit diffusion coefficient,
with all drift information carried by $\tilde f$. All subsequent analysis is
carried out in $(x,\tau)$ coordinates.
\end{remark}

\subsection{Helmholtz decomposition of the drift}

To make the action of the drift explicit, we split the normalized drift by
a Helmholtz (Hodge) decomposition~\cite{Bhatia2013}:
\begin{equation}\label{eq:helmholtz}
  \tilde f = \tilde f_1 + \tilde f_2,\qquad
  \nabla_x\!\cdot\tilde f_1 = 0\ (\text{divergence-free}),\qquad
  \tilde f_2 = \nabla_x\phi\ (\text{curl-free}).
\end{equation}
In $2$D the divergence-free part is a stream-function rotation
$\tilde f_1=\nabla^\perp\chi=(-\partial_{x_2}\chi,\partial_{x_1}\chi)$.
The source of \eqref{eq:heat} then separates as
\begin{equation}\label{eq:source-split}
  -\nabla_x\!\cdot(\tilde f\,u)
  = \underbrace{-\nabla_x\!\cdot(\nabla_x\phi\,u)}_{\text{potential (from }\tilde f_2)}
    \underbrace{-\tilde f_1\!\cdot\!\nabla_x u}_{\text{transport (from }\tilde f_1)},
\end{equation}
using $\nabla_x\!\cdot\tilde f_1=0$. Since the true score $\nabla_x\log u$
is a gradient (hence curl-free), any residual rotational mass in $\tilde f_1$
of a learned model is an estimation artifact---a diagnostic we exploit in
Section~\ref{sec:experiments} (Observation~2). This connects to geometric
perspectives on vector-field flows~\cite{Lipman2023,Albergo2023} and
divergence-free generative models~\cite{Rozen2021}.

\paragraph{Role in this paper.}
The Helmholtz split~\eqref{eq:helmholtz} is used as an \emph{interpretive
and diagnostic} device only; every mean-value identity of
Section~\ref{sec:heatball} treats the drift through the single term
$-\nabla_x\!\cdot(\tilde f\,u)$ and holds without any decomposition. The
split enters only in the numerical analysis of Section~\ref{sec:experiments}
(Observation~2).

\subsection{Well-posedness of the Fokker--Planck equation}\label{sec:wellposed}

We work with equation~\eqref{eq:heat} and the three derived quantities
$v:=\log u$, $w:=u\log u$, $s:=\nabla_x\log u=\nabla_x v$.

\begin{theorem}[Local well-posedness for $u,v,w,s$]\label{thm:local}
Let $u_0\in W^{1,\infty}(\R^d)$ with
$m_0:=\essinf_{\R^d}u_0>0$,
$M_0:=\|u_0\|_{\Linf}$,
$S_0:=\|\nabla u_0\|_{\Linf}$, and let
$\ft\in L^\infty_{\mathrm{loc}}([0,\infty);C^1_b(\R^d;\R^d))$.
For $T>0$ set
\[
A(T):=\sup_{0\le\tau\le T}\|\ft(\cdot,\tau)\|_{\Linf},\quad
C(T):=\sup_{0\le\tau\le T}\|\nabla\!\cdot\!\ft(\cdot,\tau)\|_{\Linf},\quad
C_d:=\Gamma\bigl(\tfrac{d+1}{2}\bigr)/\Gamma\bigl(\tfrac{d}{2}\bigr).
\]
Then there exists $T_{\max}\in(0,\infty]$ and a unique solution
$u\in C([0,T_{\max});W^{1,\infty}(\R^d))$ such that:
\begin{enumerate}
  \item[\textnormal{(a)}] \textbf{Lifespan lower bound.}
    For every $\theta\in(0,1)$,
    \begin{equation}\label{eq:T0sharp}
      T_{\max} \ge T_0(\theta) :=
      \Bigl(\sqrt{C_d^2 + \tfrac{\theta}{A(T_0)+C(T_0)}} - C_d\Bigr)^2.
    \end{equation}
  \item[\textnormal{(b)}] \textbf{Regularity.}
    For $\tau>0$, $u(\cdot,\tau)\in C^\infty(\R^d)$ and
    $\partial_\tau u\in C((0,T_{\max});\Linf)$.
  \item[\textnormal{(c)}] \textbf{Positivity and two-sided bounds.}
    For all $(x,\tau)\in\R^d\times[0,T_{\max})$,
    \[
    m_0 e^{-C(\tau)\tau} \le u(x,\tau) \le M_0 e^{C(\tau)\tau},
    \]
    where $C(\tau):=\sup_{0\le s\le\tau}\|\nabla\!\cdot\!\ft(\cdot,s)\|_{\Linf}$;
    in particular $u>0$.
\end{enumerate}
Consequently $v,w,s$ lie in $C([0,T_{\max});\Linf)$, are $C^\infty$ for
$\tau>0$, and satisfy, with $V_0:=\max\{|\log m_0|,|\log M_0|\}$,
\[
\|v(\cdot,\tau)\|_{\Linf} \le V_0+C(\tau)\tau,\quad
\|w(\cdot,\tau)\|_{\Linf} \le M_0 e^{C(\tau)\tau}(V_0+C(\tau)\tau),\quad
\|s(\cdot,\tau)\|_{\Linf} \le \tfrac{e^{C(\tau)\tau}}{m_0}\|\nabla u(\cdot,\tau)\|_{\Linf}.
\]
\end{theorem}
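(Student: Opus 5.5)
We will prove the statement with a Duhamel fixed-point argument in $X_T:=C([0,T];W^{1,\infty}(\R^d))$, based on the heat semigroup $e^{\tau\Delta}$. The mild formulation of \eqref{eq:heat} is
\[
u(\tau)=e^{\tau\Delta}u_0-\int_0^\tau \nabla\!\cdot e^{(\tau-s)\Delta}\bigl(\ft(s)u(s)\bigr)\,\mathrm{d}s =: \Phi(u)(\tau).
\]
\textbf{Step 1: heat-kernel gradient estimate.} The key constant comes from the sharp bound
\[
\|\nabla e^{\sigma\Delta}h\|_{\Linf}\le \frac{C_d}{\sqrt{\sigma}}\,\|h\|_{\Linf}.
\]
To get it, note that $\|\nabla G_\sigma\|_{L^1}=\E|Z|/\sqrt{2\sigma}$ with $Z\sim N(0,I_d)$, and $\E|Z|=\sqrt2\,\Gamma(\tfrac{d+1}{2})/\Gamma(\tfrac d2)$. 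Hence the constant is exactly $C_d/\sqrt\sigma$, which explains why $C_d$ appears. Integrating in $s$ gives $\int_0^\tau C_d(\tau-s)^{-1/2}\,\mathrm{d}s=2C_d\sqrt\tau$.

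\textbf{Step 2: contraction in $X_T$.} The $\Linf$ part of $\Phi(u)-\Phi(u')$ is bounded by $2C_dA\sqrt T\,\|u-u'\|$. For the gradient part, we expand $\nabla\!\cdot(\ft u)=(\nabla\!\cdot\ft)u+\ft\cdot\nabla u$ and put only one derivative on the kernel. This gives a bound of the form $(A+C)\bigl(T+2C_d\sqrt T\bigr)\|u-u'\|_{X_T}$, where the $T$ term comes from the undifferentiated kernel. The condition $(A+C)(T+2C_d\sqrt T)\le\theta<1$ is equivalent to $(\sqrt T+C_d)^2\le C_d^2+\theta/(A+C)$, i.e.\ $T\le T_0(\theta)$. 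Banach's fixed point theorem then gives existence and uniqueness on $[0,T_0]$, which proves (a). The implicit dependence $A(T_0),C(T_0)$ is handled by noting that $T\mapsto T_0$ is monotone, so we may fix $T$ with $T\le T_0$ computed using $A(T),C(T)$. I expect this constant bookkeeping to be the main obstacle. If the $T$ term spoils the exact form of $T_0$, I would instead estimate the whole divergence term by $C_d(\tau-s)^{-1/2}\|\ft u\|$, and for the gradient bound $\|\ft u\|_{W^{1,\infty}}\le(A+C)\|u\|_{W^{1,\infty}}$ after absorbing the $C^1_b$ norms. Then everything fits into $2C_d(A+C)\sqrt T+(A+C)T$, as claimed. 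Continuation to a maximal $T_{\max}$ follows by restarting from $u(T)$.

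\textbf{Step 3: regularity (b).} We bootstrap through the Duhamel formula. Since the source $\nabla\!\cdot(\ft u)$ lies in $\Linf$ and the semigroup smooths by $\sigma^{-1/2}$ per derivative, we gain $C^{1,\alpha}$ for $\tau>0$ and then iterate (Schauder estimates) to obtain $C^\infty$ in $x$. Here we use the $C^1_b$ regularity of $\ft$, together with higher regularity of $\ft$ if available; otherwise we interpret the statement up to the smoothness that $\ft$ allows. The continuity $\partial_\tau u\in C((0,T_{\max});\Linf)$ then follows from the equation.

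\textbf{Step 4: positivity (c).} We rewrite \eqref{eq:heat} as $\partial_\tau u-\Delta u+\ft\cdot\nabla u+(\nabla\!\cdot\ft)u=0$ and set $\underline u=e^{C\tau}u$. Then $\underline u$ is a supersolution of an equation with nonnegative zero-order coefficient, so the maximum principle on $\R^d$ for bounded solutions (Phragmén–Lindelöf type, justified by the $W^{1,\infty}$ bound) gives $\underline u\ge m_0$. Symmetrically, $e^{-C\tau}u\le M_0$.

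\textbf{Step 5: bounds for $v,w,s$.} These follow directly from (c). Taking logarithms gives $|v|\le V_0+C\tau$, and $|w|=u|v|$ gives the stated bound on $w$. For the score, $s=\nabla u/u$ combined with $u\ge m_0e^{-C\tau}$ gives the bound on $s$. Continuity in $\Linf$ and smoothness are inherited from those of $u$, because $u$ is bounded below away from zero.
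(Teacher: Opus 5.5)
Your route is the paper's own: a Duhamel fixed point in $X_T$ built on $\|\nabla e^{\sigma\Delta}h\|_{\Linf}\le C_d\sigma^{-1/2}\|h\|_{\Linf}$, the same comparison argument for (c), and pointwise consequences for $v,w,s$. Your computation $\|\nabla G_\sigma\|_{L^1}=C_d/\sqrt\sigma$ is correct and more than the paper supplies. It is a valid upper bound, though not the sharp operator norm, which is $(\pi\sigma)^{-1/2}$.

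The real problem is the bookkeeping in Step 2. As written, it does not give a contraction up to $T_0(\theta)$. If you estimate the $\Linf$ part with the divergence on the kernel, you get $2C_dA\sqrt T$. Adding the gradient part $2C_dK\sqrt T$ (with $K=A+C$) gives $2C_d(A+K)\sqrt T$. This has no $T$ term and a larger $\sqrt T$ coefficient than $\Lambda(T)=K(T+2C_d\sqrt T)$. So the condition $2C_d(A+K)\sqrt T\le\theta$ is not implied by $T\le T_0(\theta)$; for small $\theta/K$ it is strictly more restrictive. Your fallback has the same defect. Its inequality $\|\ft u\|_{W^{1,\infty}}\le(A+C)\|u\|_{W^{1,\infty}}$ is also false, because $\nabla(\ft u)$ involves the full Jacobian of $\ft$, not just $\nabla\!\cdot\ft$. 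Any nonconstant bounded divergence-free $\ft$ with $u\equiv1$ is a counterexample.

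The fix is the paper's nondivergence form. For $u\in W^{1,\infty}$, $\nabla\!\cdot e^{\sigma\Delta}(\ft u)=e^{\sigma\Delta}N[u]$ with $N[u]=\ft\cdot\nabla u+(\nabla\!\cdot\ft)u$ and $\|N[u]\|_{\Linf}\le K\|u\|_{W^{1,\infty}}$. Bound the $\Linf$ part with no derivative on the kernel, giving $KT$. Bound the gradient part with exactly one derivative on the kernel, giving $2C_dK\sqrt T$. The sum is $\Lambda(T)$, only $\nabla\!\cdot\ft$ enters, and your algebra turning $\Lambda(T)\le\theta$ into $T\le T_0(\theta)$ then applies.

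\textbf{Step 4.} $U=e^{C\tau}u$ satisfies $\partial_\tau U-\Delta U+\ft\cdot\nabla U+(\nabla\!\cdot\ft-C)U=0$, whose zero-order coefficient is nonpositive. Reading it instead as $\partial_\tau U-\Delta U+\ft\cdot\nabla U=(C-\nabla\!\cdot\ft)U\ge0$ presupposes $U\ge0$. Apply the minimum principle to $u-m_0e^{-C\tau}$ directly, as the paper's barrier argument does; this needs only a bounded zero-order coefficient of either sign.

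\textbf{Step 3.} Your caveat is justified, and the paper's proof has the same gap. With $\ft$ only $C^1_b$ in $x$ and $L^\infty$ in $\tau$, the term $(\nabla\!\cdot\ft)u$ is merely continuous in $x$ and merely bounded in $\tau$. So neither $C^\infty$ smoothness nor continuity of $\partial_\tau u$ in time (which you say ``follows from the equation'') is available without extra hypotheses on $\ft$.

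\textbf{Shared issue.} Both you and the paper overlook that $e^{\tau\Delta}$ is not strongly continuous on $W^{1,\infty}$ at $\tau=0$; take any $u_0$ whose gradient has a jump. The fixed point should therefore be run in $L^\infty(0,T;W^{1,\infty})$, or $u_0$ should be assumed to have a uniformly continuous gradient.
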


\begin{proof}
The proof is given in Appendix~\ref{app:proof}.
\end{proof}

\begin{remark}\label{rem:fcontrol}
Several consequences of Theorem~\ref{thm:local} are worth noting explicitly.

\smallskip\noindent\textit{(i) Existence window and drift budget.}
The condition $\Lambda(T):=K\sqrt{T}(\sqrt{T}+2C_d)<\theta$ for some
$\theta<1$ (where $K=A+C$) guarantees that $[0,T]$ lies inside the existence
window. This gives the a priori drift budget
$K\lesssim \theta/[\sqrt{T}(\sqrt{T}+2C_d)]$;
the initial data $M_0,S_0$ enter only the solution radius
$R=(M_0+S_0)/(1-\theta)$ and do not affect the threshold.

\smallskip\noindent\textit{(ii) Exponential growth in drift size.}
The bounds on $v,w,s$ grow exponentially in
$C=\|\nabla\!\cdot\!\ft\|_{\Linf}$; controlling the divergence of the
normalized drift is therefore the binding constraint for score estimation.

\smallskip\noindent\textit{(iii) Dimension dependence.}
Since $C_d=\Gamma(\tfrac{d+1}{2})/\Gamma(\tfrac{d}{2})\sim\sqrt{d/2}$,
the lifespan estimate satisfies
$T_0(\theta)\sim d^{-1}K^{-2}$: the safe existence window shrinks like $1/d$
as dimension grows, making the explicit drift budget of~(i) essential in
high dimensions.

\smallskip\noindent\textit{(iv) Regularity requirement on $\tilde{f}$.}
The space $C^1_b$ is needed to control $\nabla\!\cdot\!\tilde{f}$ in the
comparison principle; $L^\infty$ suffices for well-posedness of $u$ alone,
but the pointwise bounds on $v,w,s$ require the divergence.
\end{remark}

%==========================================================
\section{Heat-Ball Geometry and Mean-Value Representations}
\label{sec:heatball}
%==========================================================

Our goal is a local mean-value formula for $u(x_0,\tau_0)$ over a parabolic
heat-ball, expressing the pointwise value as a space-time integral of $u$
and the PDE source. For the constant-coefficient heat equation such
formulas are classical~\cite{Evans2010}; here we extend them to the
variable-coefficient Fokker--Planck setting via the time change of
Section~\ref{sec:model}. All analysis is in $(x,\tau)$ coordinates.

\subsection{Heat-ball and the auxiliary function \texorpdfstring{$\psi$}{psi}}

For a reference point $(x_0,\tau_0)$, define the backward Gaussian kernel
(\cite[\S2.3.1]{Evans2010})
\begin{equation}\label{eq:Phi}
  \Phi(x-x_0,\tau_0-\tau)
  := \frac{1}{\bigl(4\pi(\tau_0-\tau)\bigr)^{d/2}}
     \exp\!\left(-\frac{|x-x_0|^2}{4(\tau_0-\tau)}\right),\quad \tau<\tau_0,
\end{equation}
which satisfies $\partial_\tau\Phi+\Delta_x\Phi=0$. For $r>0$ define the
\textbf{heat-ball}
\begin{equation}\label{eq:heatball}
  \mathcal E(x_0,\tau_0;r)
  := \bigl\{(x,\tau)\in\R^{d+1}:\tau<\tau_0,\
     \Phi(x-x_0,\tau_0-\tau)\ge r^{-d}\bigr\},
\end{equation}
and the \textbf{auxiliary function} (positive inside, zero on the boundary)
\begin{equation}\label{eq:psi}
  \psi(x,\tau) := \ln\bigl(r^d\,\Phi(x-x_0,\tau_0-\tau)\bigr)
   = d\ln r - \tfrac{d}{2}\ln\bigl(4\pi(\tau_0-\tau)\bigr)
     - \frac{|x-x_0|^2}{4(\tau_0-\tau)},
\end{equation}
with derivatives $\nabla_x\psi = -(x-x_0)/(2(\tau_0-\tau))$ and the key
symmetry
\begin{equation}\label{eq:symmetry}
  \nabla_{x_0}\psi = -\nabla_x\psi = \frac{x-x_0}{2(\tau_0-\tau)}.
\end{equation}

\begin{proposition}
The heat-ball $\mathcal E(x_0,\tau_0;r)$ satisfies
$|\mathcal E(x_0,\tau_0;r)|=c_d\,r^{d+2}$ for a dimensional constant $c_d>0$.
Under the parabolic scaling $(y,s)=((x-x_0)/r,(\tau_0-\tau)/r^2)$, it maps
bijectively onto the unit heat-ball
$\mathcal E_1:=\{(y,s):s>0,\Phi(y,s)\ge1\}$.
\end{proposition}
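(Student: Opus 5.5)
The plan is to check the scaling identity for $\Phi$ first, and then deduce both the bijection and the volume formula from it by a change of variables. Throughout, set $(y,s)=((x-x_0)/r,(\tau_0-\tau)/r^2)$. This is an affine map from $\R^{d+1}$ onto itself, with inverse $(x,\tau)=(x_0+ry,\tau_0-r^2s)$. It sends $\{\tau<\tau_0\}$ onto $\{s>0\}$.

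\textbf{Step 1 (scaling identity).} Directly from \eqref{eq:Phi}, substituting $x-x_0=ry$ and $\tau_0-\tau=r^2s$ gives
\[
\Phi(x-x_0,\tau_0-\tau)=\frac{1}{(4\pi r^2 s)^{d/2}}\exp\!\Bigl(-\frac{r^2|y|^2}{4r^2 s}\Bigr)=r^{-d}\,\Phi(y,s).
\]
Hence $\Phi(x-x_0,\tau_0-\tau)\ge r^{-d}$ holds if and only if $\Phi(y,s)\ge 1$. Together with the equivalence $\tau<\tau_0\iff s>0$, this shows the scaling map carries $\mathcal E(x_0,\tau_0;r)$ exactly onto $\mathcal E_1$. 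Bijectivity is inherited from the affine map, whose inverse sends $\mathcal E_1$ back into $\mathcal E(x_0,\tau_0;r)$ by the same equivalence.

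\textbf{Step 2 (volume).} The Jacobian of $(y,s)\mapsto(x,\tau)$ has absolute value $r^d\cdot r^2=r^{d+2}$. By the change-of-variables formula, $|\mathcal E(x_0,\tau_0;r)|=r^{d+2}|\mathcal E_1|$, so it suffices to set $c_d:=|\mathcal E_1|$ and show $0<c_d<\infty$.

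\textbf{Step 3 (finiteness and positivity of $c_d$).} This is the only non-formal point. Rewrite the condition $\Phi(y,s)\ge1$ as
\[
|y|^2\le 2ds\ln\frac{1}{4\pi s},
\]
which requires $0<s\le 1/(4\pi)$. Each time slice is therefore a ball of radius $\rho(s)=\sqrt{2ds\ln(1/(4\pi s))}$, which is bounded, and the slice is nonempty with positive radius for $0<s<1/(4\pi)$. Fubini then gives
\[
c_d=\omega_d\int_0^{1/(4\pi)}\rho(s)^d\,ds,
\]
where $\omega_d$ is the volume of the unit ball in $\R^d$. The integrand is continuous and positive on $(0,1/(4\pi))$, bounded near $s=0$ because $s\ln(1/s)\to0$, and it vanishes at $s=1/(4\pi)$. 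Hence $0<c_d<\infty$.

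If an explicit value is wanted, the substitution $s=e^{-\sigma}/(4\pi)$ converts the integral to a Gamma integral; this is optional and not needed for the statement.
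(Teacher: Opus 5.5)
Your proof is correct and complete. The paper states this proposition without proof, but its later use in Lemmas~\ref{lem:monotone}--\ref{lem:init} relies on exactly your scaling identity $\Phi(ry,r^2s)=r^{-d}\Phi(y,s)$ and the Jacobian factor $r^{d+2}$. Your slice radius $\rho(s)$ is the $R(\sigma)$ of \eqref{eq:R-def}, and your formula $c_d=\omega_d\int_0^{1/(4\pi)}\rho(s)^d\,\mathrm{d}s$ correctly includes the unit-ball volume $\omega_d$, which the paper's expression for $|\mathcal E_1|$ in the proof of Proposition~\ref{prop:variance-gap} omits as written.
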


\subsection{Monotonicity functional and the base identity for
\texorpdfstring{$u$}{u}}

Define the monotonicity functional
\begin{equation}\label{eq:J}
  J(r) := \frac{1}{r^d}\iint_{\mathcal E(x_0,\tau_0;r)}
     u\,\frac{|x-x_0|^2}{(\tau_0-\tau)^2}\,\mathrm{d}x\,\mathrm{d}\tau
  + \int_0^r \frac{4d}{\rho^{d+1}}
    \iint_{\mathcal E(x_0,\tau_0;\rho)}
    \bigl[-\nabla_x\!\cdot(\tilde f\,u)\bigr]\psi\,\mathrm{d}x\,\mathrm{d}\tau\,\mathrm{d}\rho.
\end{equation}
Since $\psi|_{\partial\mathcal E}=0$, integration by parts yields
\begin{equation}\label{eq:ibp-h}
  \iint_{\mathcal E}\bigl[-\nabla_x\!\cdot(\tilde f\,u)\bigr]\psi\,\mathrm{d}x\,\mathrm{d}\tau
  = -\iint_{\mathcal E}\tilde f\,u\cdot\frac{x-x_0}{2(\tau_0-\tau)}\,\mathrm{d}x\,\mathrm{d}\tau,
\end{equation}
giving the equivalent form
\begin{equation}\label{eq:J-equiv}
  J(r)
  = \frac{1}{r^d}\iint_{\mathcal E(x_0,\tau_0;r)}
    u\,\frac{|x-x_0|^2}{(\tau_0-\tau)^2}\,\mathrm{d}x\,\mathrm{d}\tau
  - \int_0^r \frac{4d}{\rho^{d+1}}
    \iint_{\mathcal E(x_0,\tau_0;\rho)}
    \tilde f\,u\cdot\frac{x-x_0}{2(\tau_0-\tau)}\,\mathrm{d}x\,\mathrm{d}\tau\,\mathrm{d}\rho.
\end{equation}

\begin{lemma}[Monotonicity]\label{lem:monotone}
If $u\in C^{2,1}$ solves \eqref{eq:heat}, then $J'(r)=0$.
\end{lemma}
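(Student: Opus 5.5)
The plan is to follow Evans' proof of the heat mean-value property (\cite[\S2.3.2]{Evans2010}) and track the extra source term $h:=-\nabla_x\!\cdot(\tilde f u)$ that survives because \eqref{eq:heat} is inhomogeneous. Write $J(r)=\varphi(r)+\int_0^r\frac{4d}{\rho^{d+1}}H(\rho)\,\mathrm{d}\rho$, where
\[
\varphi(r):=\frac{1}{r^d}\iint_{\mathcal E(x_0,\tau_0;r)}u\,\frac{|x-x_0|^2}{(\tau_0-\tau)^2}\,\mathrm{d}x\,\mathrm{d}\tau,
\qquad
H(\rho):=\iint_{\mathcal E(x_0,\tau_0;\rho)}h\,\psi\,\mathrm{d}x\,\mathrm{d}\tau .
\]
By the fundamental theorem of calculus, $J'(r)=\varphi'(r)+\frac{4d}{r^{d+1}}H(r)$. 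Continuity of $\rho\mapsto\rho^{-d-1}H(\rho)$ near $\rho=0$ follows from $|H(\rho)|\lesssim\|h\|_{\Linf}\,\rho^{d+2}$, using the volume bound of the Proposition and $\psi\le C$ on the ball. So the whole claim reduces to the identity
\[
\varphi'(r)=-\frac{4d}{r^{d+1}}\iint_{\mathcal E(x_0,\tau_0;r)}(\partial_\tau u-\Delta_x u)\,\psi\,\mathrm{d}x\,\mathrm{d}\tau .
\]
Once this holds, substituting $\partial_\tau u-\Delta_x u=h$ from \eqref{eq:heat} gives $J'(r)=0$.

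To prove the identity, I first use the parabolic scaling of the Proposition, $x=x_0+ry$ and $\tau=\tau_0-r^2s$, to write
\[
\varphi(r)=\iint_{\mathcal E_1}u(x_0+ry,\tau_0-r^2s)\,\frac{|y|^2}{s^2}\,\mathrm{d}y\,\mathrm{d}s .
\]
Because the domain is now fixed, I can differentiate under the integral, which is justified for $u\in C^{2,1}$ (with a standard truncation near the vertex $s=0$, where $|y|^2/s^2$ is integrable on $\mathcal E_1$). Scaling back gives
\[
\varphi'(r)=\frac{1}{r^{d+1}}\iint_{\mathcal E(r)}\Bigl[\nabla u\cdot(x-x_0)\frac{|x-x_0|^2}{(\tau_0-\tau)^2}-2u_\tau\frac{|x-x_0|^2}{\tau_0-\tau}\Bigr]=:A+B .
\]
Next I use $\nabla_x\psi=-(x-x_0)/(2(\tau_0-\tau))$, so that $|x-x_0|^2/(\tau_0-\tau)=-2(x-x_0)\cdot\nabla_x\psi$, and integrate by parts in $x$ in the term $B$; the boundary terms vanish because $\psi|_{\partial\mathcal E}=0$. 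I also integrate by parts in $\tau$ using the explicit formula
\[
\partial_\tau\psi=\frac{d}{2(\tau_0-\tau)}-\frac{|x-x_0|^2}{4(\tau_0-\tau)^2}.
\]
As in Evans, the term $B$ splits into $-4d\iint u_\tau\psi$ plus a mixed term that exactly cancels $A$. The result is $\varphi'(r)=\frac{1}{r^{d+1}}\iint(-4d\,u_\tau\psi+4d\,\Delta u\,\psi)$, where the Laplacian term comes from converting $\nabla u\cdot\nabla\psi$ back into $-\Delta u\,\psi$ by a further integration by parts. This is the identity above.

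The hard part is keeping the signs consistent in these integrations by parts. Evans' text uses a reversed time orientation, so his intermediate formulas have to be re-derived in the present $(x,\tau)$ convention with $\tau<\tau_0$, rather than copied. I will check the final sign against the case $u\equiv1$, $h\equiv0$, where $\varphi$ must be constant. The other delicate point is the singular vertex $(x_0,\tau_0)$ of the heat ball. There I will integrate over $\mathcal E\cap\{\tau<\tau_0-\epsilon\}$ and let $\epsilon\to0$. The resulting boundary terms on $\{\tau=\tau_0-\epsilon\}$ have cross-section of radius $O(\sqrt{\epsilon\log(1/\epsilon)})$ and bounded integrands times $\psi=O(\log(1/\epsilon))$, so they vanish in the limit. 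Beyond these two points, the computation is exactly the classical one, with $\partial_\tau u-\Delta u$ kept as the source $h$ instead of being set to zero.
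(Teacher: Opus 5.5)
Your proposal is correct and takes the same route as the paper's proof. The paper also rescales to $\mathcal E_1$ and differentiates. It then redoes Evans' integrations by parts while keeping $\partial_\tau u-\Delta_x u=h$, which gives $\varphi'(r)=-4d\,r^{-(d+1)}\iint_{\mathcal E(r)}h\psi$ cancelling the $+4d\,r^{-(d+1)}\iint_{\mathcal E(r)}h\psi$ coming from the source term; the paper states only these two derivatives, and your intermediate computation reproduces them sign for sign.

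Two small fixes. First, $\psi$ is not bounded on the heat ball: it blows up logarithmically at the vertex, as you yourself note later. Justify $|H(\rho)|\lesssim\rho^{d+2}$ instead via $\iint_{\mathcal E(\rho)}\psi=\rho^{d+2}\iint_{\mathcal E_1}\psi_0<\infty$. Second, your sign check with $u\equiv1$ is vacuous, because every term in $\varphi'$ vanishes identically; a nonconstant test function such as $u=\tau$ would be needed to catch a sign error.
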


\begin{proof}
Under the parabolic scaling $x-x_0=ry$, $\tau_0-\tau=r^2s$, write
$J(r)=A(r)+B(r)$ where
$A(r)=\iint_{\mathcal E_1}u(x_0+ry,\tau_0-r^2s)|y|^2/s^2\,\mathrm{d}y\,\mathrm{d}s$
and $B(r)$ is the source term integral. Differentiating and returning to
$(x,\tau)$ coordinates gives
$A'(r)=-4d\,r^{-(d+1)}\iint_{\mathcal E}\bigl[-\nabla_x\!\cdot(\tilde f\,u)\bigr]\psi$
and $B'(r)=+4d\,r^{-(d+1)}\iint_{\mathcal E}\bigl[-\nabla_x\!\cdot(\tilde f\,u)\bigr]\psi$,
so $J'(r)=A'(r)+B'(r)=0$.
\end{proof}

\begin{lemma}[Boundary value]\label{lem:init}
If $u\in C^{2,1}$ is continuous on $\mathcal E_1$, then
$\lim_{r\to0}J(r)=4\,u(x_0,\tau_0)$.
\end{lemma}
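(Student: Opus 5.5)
The plan is to handle the two terms of $J(r)$ in \eqref{eq:J} separately. The first term is rescaled to the unit heat-ball and tends to $4u(x_0,\tau_0)$ by continuity. The source term is shown to vanish as $r\to0$.

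\textbf{Step 1 (first term).} Apply the parabolic scaling $x-x_0=ry$, $\tau_0-\tau=r^2s$ from the Proposition on heat-balls. Then $\mathrm{d}x\,\mathrm{d}\tau=r^{d+2}\,\mathrm{d}y\,\mathrm{d}s$ and $|x-x_0|^2/(\tau_0-\tau)^2=r^{-2}|y|^2/s^2$. The prefactor $r^{-d}$ therefore cancels exactly, giving
\[
A(r)=\iint_{\mathcal E_1}u(x_0+ry,\tau_0-r^2s)\,\frac{|y|^2}{s^2}\,\mathrm{d}y\,\mathrm{d}s .
\]
The kernel $|y|^2/s^2$ is integrable on $\mathcal E_1$. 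Since $u$ is continuous near $(x_0,\tau_0)$, hence locally bounded, dominated convergence gives $A(r)\to u(x_0,\tau_0)\iint_{\mathcal E_1}|y|^2/s^2$.

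\textbf{Step 2 (the constant).} It remains to show $\iint_{\mathcal E_1}|y|^2/s^2\,\mathrm{d}y\,\mathrm{d}s=4$. The quickest route is to apply the classical Evans mean-value property to the caloric function $u\equiv1$ (or to $u=|x|^2+2d\tau$); by scale invariance the integral is the constant $4$ from \cite[\S2.3.2]{Evans2010}. As a self-contained alternative, compute it directly. Write $\mathcal E_1=\{0<s\le(4\pi)^{-1},\ |y|^2\le 2ds\ln(1/(4\pi s))\}$ and substitute $\rho=|y|$. The $y$-integral becomes $\omega_{d-1}R(s)^{d+2}/(d+2)$ with $R(s)^2=2ds\ln(1/(4\pi s))$. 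The remaining $s$-integral reduces, after $s=e^{-t}/(4\pi)$, to a Gamma integral that equals $4$. This calculation is the only mildly delicate step, and I would fall back on the Evans reference for it.

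\textbf{Step 3 (source term).} Use the equivalent form \eqref{eq:J-equiv}. For small $\rho$, bound $|\tilde f u|\le M$ on a neighborhood, which holds since $\tilde f$ is bounded by Theorem~\ref{thm:local} and $u$ is continuous. Scaling the inner integral over $\mathcal E(x_0,\tau_0;\rho)$ gives
\[
\iint_{\mathcal E(x_0,\tau_0;\rho)}\frac{|x-x_0|}{2(\tau_0-\tau)}\,\mathrm{d}x\,\mathrm{d}\tau=\rho^{d+1}\iint_{\mathcal E_1}\frac{|y|}{2s}\,\mathrm{d}y\,\mathrm{d}s .
\]
The last integral is finite, since $|y|/s\lesssim\sqrt{\ln(1/s)/s}$, which is integrable near $s=0$. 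The integrand in $\rho$ is then bounded by $4dM\,c'$ for a constant $c'$, so the outer integral is $O(r)\to0$.

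Combining Steps 1–3 yields $\lim_{r\to0}J(r)=4u(x_0,\tau_0)$.
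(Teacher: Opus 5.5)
Your proposal is correct and follows the paper's route. You rescale the Watson term to $\mathcal E_1$ and pass to the limit using continuity and the normalization $\iint_{\mathcal E_1}|y|^2/s^2=4$; your Gamma-function computation of that constant checks out. You then kill the source term by parabolic scaling, where the factor $\rho^{d+1}$ cancels $4d/\rho^{d+1}$ and leaves an $O(r)$ contribution. This fills in details the paper's two-line sketch omits, with one minor quibble: local boundedness of $\tilde f$ is a standing hypothesis of Theorem~\ref{thm:local}, not one of its conclusions.
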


\begin{proof}
In scaled coordinates,
$\lim_{r\to0}A(r)=u(x_0,\tau_0)\iint_{\mathcal E_1}|y|^2/s^2\,\mathrm{d}y\,\mathrm{d}s
=4\,u(x_0,\tau_0)$
(the universal constant $\iint_{\mathcal E_1}|y|^2/s^2=4$; see~\cite{Evans2010}),
and $\lim_{r\to0}B(r)=0$ since the $\rho$ factor in $B(r)$ and the
shrinking domain give a vanishing contribution.
\end{proof}

Combining Lemmas~\ref{lem:monotone}--\ref{lem:init}: $J(r)\equiv4\,u(x_0,\tau_0)$,
which is the base identity for $u$.

\subsection{The parabolic mean-value hierarchy}

Each of the four quantities $u$, $v=\log u$, $w=u\log u$, and $s=\nabla_x v$
satisfies an inhomogeneous heat equation $\partial_\tau F-\Delta_x F=h_F$ and
admits the same Watson mean-value representation driven by its source $h_F$.
Table~\ref{tab:sources} collects the sources.

\begin{table}[ht]
\centering
\caption{Inhomogeneous heat equations and sources for the four quantities.
Here $v=\log u$ and $\tilde f$ is the normalized drift~\eqref{eq:tf-def}.}
\label{tab:sources}
\begin{tabular}{lll}
\toprule
Quantity $F$ & PDE & Source $h_F$ \\
\midrule
$u$ & $\partial_\tau u-\Delta u = h_u$ &
  $h_u = -\nabla_x\!\cdot(\tilde f\,u)$ \\[4pt]
$v = \log u$ & $\partial_\tau v-\Delta v = h_v$ &
  $h_v = |\nabla_x v|^2 - \tilde f\!\cdot\!\nabla_x v - \nabla_x\!\cdot\!\tilde f$ \\[4pt]
$w = u\log u$ & $\partial_\tau w-\Delta w = h_w$ &
  $h_w = -\nabla_x\!\cdot(\tilde f\,w) - u|\nabla_x v|^2 - u\,\nabla_x\!\cdot\!\tilde f$ \\[4pt]
$s = \nabla_x v$ & (derived from $v$; see Corollary~\ref{cor:score-mean}) & \\
\bottomrule
\end{tabular}
\end{table}

For $F\in\{u,v,w\}$, define the Watson average
\begin{equation}\label{eq:mv-convention}
  M_r[F](x_0,\tau_0)
  :=\frac{1}{4}\iint_{\mathcal E_1}
    F(x_0+ry,\tau_0-r^2s)\,\frac{|y|^2}{s^2}\,\mathrm{d}y\,\mathrm{d}s.
\end{equation}
The heat-ball representation takes the uniform form
\begin{equation}\label{eq:mv-general}
  F(x_0,\tau_0)
  = M_r[F](x_0,\tau_0)
  + d\iint_{\mathcal E_1}\psi_0(z,\sigma)
    \int_0^r\rho\,h_F(x_0+\rho z,\tau_0-\rho^2\sigma)\,\mathrm{d}\rho
    \,\mathrm{d}z\,\mathrm{d}\sigma,
\end{equation}
with the source entering through a uniform positive sign.

\begin{theorem}[Parabolic mean-value hierarchy]\label{thm:hierarchy}
Let $u$ be a smooth positive solution of \eqref{eq:heat} and let $v=\log u$,
$w=u\log u$ with sources as in Table~\ref{tab:sources}. For any
$(x_0,\tau_0)$ and $r>0$, each of $F\in\{u,v,w\}$ satisfies the exact
representation~\eqref{eq:mv-general}. Explicitly:
\begin{align}
u(x_0,\tau_0) &= M_r[u]
  - d\iint_{\mathcal E_1}\psi_0\int_0^r\rho\,\nabla_x\!\cdot(\tilde f\,u)(x_0+\rho z,\tau_0-\rho^2\sigma)
    \,\mathrm{d}\rho\,\mathrm{d}z\,\mathrm{d}\sigma, \label{eq:mean-u}\\[4pt]
v(x_0,\tau_0) &= M_r[v]
  + d\iint_{\mathcal E_1}\psi_0\int_0^r\rho\,h_v(x_0+\rho z,\tau_0-\rho^2\sigma)
    \,\mathrm{d}\rho\,\mathrm{d}z\,\mathrm{d}\sigma, \label{eq:mean-v}\\[4pt]
w(x_0,\tau_0) &= M_r[w]
  + d\iint_{\mathcal E_1}\psi_0\int_0^r\rho\,h_w(x_0+\rho z,\tau_0-\rho^2\sigma)
    \,\mathrm{d}\rho\,\mathrm{d}z\,\mathrm{d}\sigma. \label{eq:mean-w}
\end{align}
The entropy-density identity~\eqref{eq:mean-w} provides a density-adaptive
weighting: $w=u\log u$ is larger where $u$ is smaller (in low-density
regions), emphasizing sparse parts of the distribution and improving
estimator robustness in the tails.
\end{theorem}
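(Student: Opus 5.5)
The proof runs the Evans/Watson monotonicity argument of Lemmas~\ref{lem:monotone}--\ref{lem:init} for a general inhomogeneous heat equation $\partial_\tau F-\Delta_x F=h_F$, and then specializes to $F\in\{u,v,w\}$. The specialization needs only the sources in Table~\ref{tab:sources}, so the first step is to verify them.

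\textbf{Step 1: the sources.} Since $u$ is smooth and positive (Theorem~\ref{thm:local}(c)), $v=\log u$ is smooth. From $\partial_\tau u=\Delta u-\nabla\!\cdot(\ft u)$ and $\Delta u/u=\Delta v+|\nabla v|^2$, expanding $\nabla\!\cdot(\ft u)/u=\nabla\!\cdot\ft+\ft\cdot\nabla v$ gives $h_v$. For $w=u\log u$, the chain rule gives
\[
\partial_\tau w-\Delta w=(1+v)(\partial_\tau u-\Delta u)-u|\nabla v|^2 .
\]
Substituting $h_u$ and regrouping $(1+v)\nabla\!\cdot(\ft u)$ as $\nabla\!\cdot(\ft w)+u\nabla\!\cdot\ft$ (using $\nabla v\cdot\ft u=\ft\cdot\nabla u$) yields $h_w$.

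\textbf{Step 2: the general identity.} For smooth $F$ with source $h$, set $\phi(r):=M_r[F]$ in scaled coordinates. Differentiating under the integral gives
\[
\phi'(r)=\tfrac14\iint_{\mathcal E_1}\Bigl(\nabla F\cdot y-2rs\,\partial_\tau F\Bigr)\frac{|y|^2}{s^2}\,\mathrm dy\,\mathrm ds .
\]
Following Evans, I return to $(x,\tau)$ coordinates. I write $(x-x_0)=4(\tau_0-\tau)\nabla_{x_0}\psi$ via \eqref{eq:symmetry} and integrate by parts in $x$ and $\tau$. The boundary terms vanish because $\psi|_{\partial\mathcal E}=0$. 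This yields
\[
\phi'(r)=\frac{d}{r^{d+1}}\iint_{\mathcal E(r)}(\Delta F-\partial_\tau F)\psi
=-\frac{d}{r^{d+1}}\iint_{\mathcal E(r)}h\,\psi .
\]
Since the Watson average carries the factor $\tfrac14$, this is Lemma~\ref{lem:monotone}'s $A'(r)$ divided by $4$. Rescaling by $x-x_0=\rho z$, $\tau_0-\tau=\rho^2\sigma$ (Jacobian $\rho^{d+2}$) turns $\iint_{\mathcal E(\rho)}h\psi$ into $\rho^{d+2}\iint_{\mathcal E_1}\psi_0\,h(x_0+\rho z,\tau_0-\rho^2\sigma)$, where $\psi_0(z,\sigma)=\ln\Phi(z,\sigma)$ is the unit-scale auxiliary function; scale invariance of $\psi$ makes this exact. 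Hence
\[
\phi'(\rho)=-d\rho\iint_{\mathcal E_1}\psi_0\,h(x_0+\rho z,\tau_0-\rho^2\sigma)\,\mathrm dz\,\mathrm d\sigma .
\]
Integrating from $0$ to $r$, with $\phi(0^+)=F(x_0,\tau_0)$ by Lemma~\ref{lem:init} (normalization constant $4$), gives \eqref{eq:mv-general} after Fubini. The interchange is justified because $\psi_0\ge0$ is integrable on $\mathcal E_1$ (it is dominated by $|y|^2/s^2$-type weights) and $h$ is locally bounded. With $h_u=-\nabla\!\cdot(\ft u)$, the $u$-case is exactly \eqref{eq:mean-u}, consistent with $J\equiv4u$.

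\textbf{Main obstacle.} The hard step is the integration by parts near the vertex $(x_0,\tau_0)$, where $\psi$ and $|y|^2/s^2$ are singular. I would handle it by excising $\{\tau_0-\tau<\varepsilon\}$ and letting $\varepsilon\to0$. On the unit heat ball $|y|^2\lesssim s\ln(1/s)$, so the weights are integrable and the boundary flux vanishes. The boundedness of $F,\nabla F$ and of the sources needed here comes from Theorem~\ref{thm:local} for $v$ and $w$.
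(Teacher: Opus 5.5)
Your proposal is correct and follows the paper's route: you verify the sources $h_v,h_w$ of Table~\ref{tab:sources} and then run the Evans/Watson monotonicity computation. The paper packages that computation as $J'(r)=0$ with $J(0^+)=4u(x_0,\tau_0)$; you obtain the same identity by integrating $\tfrac{\mathrm d}{\mathrm d\rho}M_\rho[F]=-d\rho^{-(d+1)}\iint_{\mathcal E(\rho)}h_F\psi$ and rescaling to $\mathcal E_1$.

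A few small slips do not affect the result. First, \eqref{eq:symmetry} gives $x-x_0=2(\tau_0-\tau)\nabla_{x_0}\psi$, not $4(\tau_0-\tau)\nabla_{x_0}\psi$; the factor $4$ belongs to $2|x-x_0|^2/(\tau_0-\tau)=4(x-x_0)\cdot\nabla_{x_0}\psi$. Second, $\psi_0$ is not pointwise dominated by $|y|^2/s^2$, since it is positive at $y=0$. It is nevertheless integrable, because $0\le\psi_0\le\tfrac d2\ln\tfrac{1}{4\pi\sigma}$ on slices of volume $O((\sigma\ln(1/\sigma))^{d/2})$. Third, the bounds you need on $F$, $\partial_\tau F$ and $D^2F$ come from smoothness of $F$ on the compact closed heat ball, not from Theorem~\ref{thm:local}.
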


\begin{proof}
Each identity follows the same pattern: verify that $F$ satisfies an
inhomogeneous heat equation with source $h_F$, then apply the monotonicity
argument of Lemmas~\ref{lem:monotone}--\ref{lem:init}.

\textit{Identity for $u$.}
$u$ solves~\eqref{eq:heat} with $h_u=-\nabla_x\!\cdot(\tilde f\,u)$.
Integration by parts via~\eqref{eq:ibp-h} and combining with
$J(r)\equiv4\,u(x_0,\tau_0)$ gives~\eqref{eq:mean-u}.

\textit{Identity for $v$.}
Substituting $u=e^v$ into~\eqref{eq:heat} and dividing by $u$ gives
$\partial_\tau v-\Delta_x v
=|\nabla_x v|^2-\tilde f\!\cdot\!\nabla_x v-\nabla_x\!\cdot\!\tilde f=:h_v$.
The monotonicity argument with source $h_v$ yields~\eqref{eq:mean-v}.

\textit{Identity for $w$.}
Using $\partial_\tau w=(\partial_\tau u)v+u(\partial_\tau v)$ and
$v\Delta u+u\Delta v=\Delta(uv)-2u|\nabla v|^2$, one obtains
$\partial_\tau w-\Delta_x w
=-\nabla_x\!\cdot(\tilde f\,w)-u|\nabla_x v|^2-u\,\nabla_x\!\cdot\!\tilde f=:h_w$.
The monotonicity argument with source $h_w$ yields~\eqref{eq:mean-w}.
\end{proof}

\begin{corollary}[Heat-ball representation for the score]\label{cor:score-mean}
The score $s=\nabla_x\log u=\nabla_x v$ admits the representation
\begin{align}
  s(x_0,\tau_0)
  &= \frac{1}{4}\iint_{\mathcal E_1}
    \nabla_{x_0}\bigl(\log u(x_0+ry,\tau_0-r^2\sigma)\bigr)
    \,\frac{|y|^2}{\sigma^2}\,\mathrm{d}y\,\mathrm{d}\sigma \notag\\
  &\quad + d\iint_{\mathcal E_1}\!\Bigl[
    \int_0^r\rho\,\nabla_{x_0}h_v(x_0+\rho z,\tau_0-\rho^2\sigma)\,\mathrm{d}\rho
    \Bigr]\psi_0(z,\sigma)\,\mathrm{d}z\,\mathrm{d}\sigma. \label{eq:score-mean}
\end{align}
\end{corollary}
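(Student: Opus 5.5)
The plan is to differentiate the scalar identity \eqref{eq:mean-v} of Theorem~\ref{thm:hierarchy} with respect to the base point $x_0$. In the scaled coordinates the heat-ball $\mathcal E_1$ and the weights $|y|^2/\sigma^2$ and $\psi_0(z,\sigma)$ do not depend on $(x_0,\tau_0)$; all dependence on $x_0$ sits in the integrands, through the arguments $x_0+ry$ and $x_0+\rho z$. So once we justify differentiating under the integral sign, the left side becomes $\nabla_{x_0}v(x_0,\tau_0)=s(x_0,\tau_0)$ and each integral on the right passes $\nabla_{x_0}$ onto its integrand. This yields exactly \eqref{eq:score-mean}, with $\nabla_{x_0}\log u(x_0+ry,\cdot)=s(x_0+ry,\cdot)$ by the chain rule.

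\textbf{Step 1: the identity holds on an open set.} Fix $r>0$. Identity \eqref{eq:mean-v} holds for every $(x_0,\tau_0)$ in an open set $U$ whose backward heat balls $\mathcal E(x_0,\tau_0;r)$ lie in the domain of smoothness, i.e.\ $\tau_0-r^2/(4\pi)>0$ and $\tau_0<T_{\max}$. This follows because the heat ball has temporal depth $r^2/(4\pi)$. Both sides are therefore functions of $x_0$ on $U$, and an identity between functions can be differentiated.

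\textbf{Step 2: dominated differentiation.} By Theorem~\ref{thm:local}, $u$ is $C^\infty$ for $\tau>0$ and bounded below by a positive constant. Hence $v$, $\nabla v$, $\nabla^2 v$ are continuous, and so is $\nabla h_v$, since $h_v$ involves $\nabla v$, $\tilde f$ and $\nabla\cdot\tilde f$. Its gradient needs $\tilde f\in C^2$ in $x$, or at least enough regularity for $\nabla\nabla\cdot\tilde f$ to be locally bounded; I would add this as a smoothness assumption, in keeping with the ``smooth'' hypothesis of Theorem~\ref{thm:hierarchy}. All these derivatives are then bounded on the compact closure of the union of scaled balls for $x_0$ in a small neighbourhood.

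For the mean term, the weight $|y|^2/\sigma^2$ is integrable over $\mathcal E_1$ with total mass $4$. For the source term, $\rho\,\psi_0$ is integrable over $[0,r]\times\mathcal E_1$. Dominated convergence (equivalently, the mean value theorem applied to difference quotients) therefore justifies exchanging $\nabla_{x_0}$ with $\iint_{\mathcal E_1}$ and with $\int_0^r d\rho$. Applying the chain rule gives \eqref{eq:score-mean}.

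\textbf{Main obstacle.} The main obstacle is the integrability of the weights near the vertex $\sigma\to0$ of $\mathcal E_1$, where $|y|^2/\sigma^2$ looks singular. I would handle it as in Lemma~\ref{lem:init}: on $\mathcal E_1$ one has $|y|^2\le 2d\,\sigma\ln(1/(4\pi\sigma))$, so $|y|^2/\sigma^2\lesssim \ln(1/\sigma)/\sigma$. Integrating this over the $y$-slice, whose volume is $\sim(\sigma\ln(1/\sigma))^{d/2}$, gives an integrable function of $\sigma$ for $d\ge1$. The finite constant $4$ already encodes this. The term $\psi_0$ is bounded by $\ln(1/\sigma)$ times a bounded factor on $\mathcal E_1$, and is integrable in the same way.

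A secondary point is the boundary of $\mathcal E_1$: it is fixed and independent of $x_0$, so no boundary terms arise. This is precisely why we work in scaled coordinates rather than differentiating in $(x,\tau)$ coordinates.
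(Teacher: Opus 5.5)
Your proof is correct, but it reaches \eqref{eq:score-mean} by a different mechanism from the paper's.

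The paper's sketch works with the unscaled integrals over the moving ball $\mathcal E(x_0,\tau_0;\rho)$. There the $x_0$-dependence sits in the kernel $\psi$. The paper uses the reciprocity $\nabla_{x_0}\psi=-\nabla_x\psi$ of \eqref{eq:symmetry}, together with an integration by parts whose boundary term is killed by $\psi|_{\partial\mathcal E}=0$, to shift the derivative between $\psi$ and $h_v$. You instead freeze the domain and weights by passing to scaled coordinates. Then all $x_0$-dependence sits in the arguments $x_0+ry$, $x_0+\rho z$, and the result is plain differentiation under the integral sign plus the chain rule.

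The two routes are equivalent: \eqref{eq:symmetry} is exactly the translation invariance $\psi=\psi(x-x_0,\tau_0-\tau)$ that your change of variables exploits. Yours, however, has three advantages:
\begin{itemize}
\item it lands on \eqref{eq:score-mean} in precisely the stated form;
\item it handles the Watson term as cleanly as the source term. In unscaled coordinates $|x-x_0|^2/(\tau_0-\tau)^2$ does not vanish on $\partial\mathcal E$, so the boundary contributions there cancel rather than vanish, a point the paper's sketch passes over;
\item it exposes the hypotheses actually used: the closed heat ball must lie in $\{\tau>0\}$, and $s$ and $\nabla h_v$ must be locally bounded.
\end{itemize}
For the last point, it is cleaner to take joint smoothness from the hypothesis of Theorem~\ref{thm:hierarchy} than from Theorem~\ref{thm:local}. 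Then your extra assumption on $\nabla(\nabla\cdot\tilde f)$ is automatic, since $h_v=\partial_\tau v-\Delta v$ is smooth.

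What the paper's viewpoint buys is the intermediate, derivative-free form of the source term,
$\tfrac d2\int_0^r\iint_{\mathcal E_1}h_v(x_0+\rho z,\tau_0-\rho^2\sigma)\,\tfrac{z}{\sigma}\,\mathrm{d}z\,\mathrm{d}\sigma\,\mathrm{d}\rho$,
which requires no derivative of $h_v$. From your formula it follows by one integration by parts in $z$, using $\nabla_z\psi_0=-z/(2\sigma)$ and $\psi_0=0$ on $\partial\mathcal E_1$.

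Finally, your ``main obstacle'' is not really one. The Watson weight is nonnegative with total mass $4$, and $0\le\psi_0\le\tfrac d2\ln\bigl(1/(4\pi\sigma)\bigr)$ on $\mathcal E_1$, so both weights are integrable at once.
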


\begin{proof}
Differentiate~\eqref{eq:mean-v} with respect to $x_0$. The auxiliary
function $\psi$ satisfies $\nabla_{x_0}\psi=-\nabla_x\psi$
(cf.~\eqref{eq:symmetry}), which allows $\nabla_{x_0}$ to be transferred
onto $\psi$ by integration by parts; the boundary term vanishes since
$\psi|_{\partial\mathcal E}=0$.
\end{proof}

\begin{remark}
The symmetry $\nabla_{x_0}\psi=-\nabla_x\psi$ is the parabolic analogue of
the reflection symmetry of the elliptic Green's function with respect to
source and field points. It is this symmetry---not an independent
monotonicity argument---that yields the score representation from the
log-density identity, making Corollary~\ref{cor:score-mean} structurally
distinct from Theorem~\ref{thm:hierarchy}.
\end{remark}

\subsection{The \texorpdfstring{$r\to0$}{r to 0} limit and connection to
Fokker--Planck residuals}\label{sec:dsm-limit}

The heat-ball residual $\mathcal R_r[v]:=v-M_r[v]-S_r[v]$ (where
$S_r[v]$ is the source integral in~\eqref{eq:mv-general}) vanishes for any
$v$ satisfying the Fokker--Planck equation (Theorem~\ref{thm:hierarchy}).
The following result shows that the $r\to0$ limit of $\mathcal R_r$ recovers
the pointwise Fokker--Planck residual, and characterizes the feasibility of
the denoising score-matching optimum.

\begin{lemma}[Heat-ball Taylor expansion]\label{lem:expansion}
Define the dimensional constants
$C_1:=\iint_{\mathcal E_1}|y|^2/s\,\mathrm{d}y\,\mathrm{d}s$,
$C_2:=\iint_{\mathcal E_1}|y|^4/s^2\,\mathrm{d}y\,\mathrm{d}s$,
$C_3:=d\iint_{\mathcal E_1}\psi_0\,\mathrm{d}z\,\mathrm{d}\sigma$,
all finite for $d\ge1$. The geometric identities $C_2=2dC_1$ and
$C_3=C_1/2$ hold, and for every $w\in C^{2,1}$ near $(x_0,\tau_0)$,
\begin{equation}\label{eq:expansions}
  M_r[w] = w - \tfrac{C_1}{4}\,r^2\,(\partial_\tau-\Delta)w + o(r^2),\qquad
  S_r[w] = \tfrac{C_1}{4}\,r^2\,h_w + o(r^2).
\end{equation}
\end{lemma}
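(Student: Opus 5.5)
The plan is to reduce everything to second-order Taylor expansion in the scaled variables $(y,s)$ on the fixed compact set $\mathcal E_1$, and to obtain the two geometric identities by integrating by parts against $\psi_0(y,s):=\ln\Phi(y,s)$. This function is nonnegative on $\mathcal E_1$ and vanishes on $\partial\mathcal E_1$, and it satisfies
\[
\nabla_y\psi_0=-\frac{y}{2s},\qquad \partial_s\psi_0=-\frac{d}{2s}+\frac{|y|^2}{4s^2}.
\]

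\emph{Boundedness.} First I will note that $\mathcal E_1$ is bounded. We have $0<s\le (4\pi)^{-1}$ and $|y|^2\le 2ds\ln\bigl(1/(4\pi s)\bigr)$, so $|y|$ is bounded. The integrands $|y|^2/s$, $|y|^4/s^2$ and $\psi_0$ are integrable near $s=0$ because $|y|^2/s\lesssim \ln(1/s)$ there. This gives finiteness of $C_1,C_2,C_3$. It also justifies discarding boundary contributions at the tip $s\to0$, since the $y$-slices shrink to a point.

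\emph{Identity $C_3=C_1/2$.} I will use the divergence theorem in $y$ on each slice: $\iint_{\mathcal E_1} y\cdot\nabla_y\psi_0=-d\iint_{\mathcal E_1}\psi_0$. The left side equals $-\tfrac12\iint_{\mathcal E_1}|y|^2/s=-C_1/2$, so $C_3=C_1/2$.

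\emph{Identity $C_2=2dC_1$.} Two integrations by parts are needed.
\begin{itemize}
\item In $y$: $\iint (|y|^2y/s)\cdot\nabla_y\psi_0=-(d+2)\iint\psi_0|y|^2/s$, and the left side is $-\tfrac12 C_2$.
\item In $s$: $\iint |y|^2\partial_s\psi_0=-\iint\psi_0\,\partial_s(|y|^2)=0$. Here the boundary term vanishes because $\psi_0=0$ on the lateral boundary and the slices collapse at both ends. Expanding $\partial_s\psi_0$ gives $-\tfrac d2 C_1+\tfrac14 C_2=0$.
\end{itemize}
The second relation already yields $C_2=2dC_1$. The first is a consistency check on $\iint\psi_0|y|^2/s$ and is not needed.

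\emph{Expansion of $M_r[w]$.} For $w\in C^{2,1}$ I will write
\[
w(x_0+ry,\tau_0-r^2s)=w+r\,y\cdot\nabla w-r^2s\,\partial_\tau w+\tfrac{r^2}{2}\,y^\top D^2w\,y+o(r^2),
\]
with the $o(r^2)$ uniform on $\mathcal E_1$ by compactness and continuity of the second derivatives. I then integrate against $\tfrac14|y|^2/s^2$:
\begin{itemize}
\item The constant term returns $w$, by the normalization $\iint_{\mathcal E_1}|y|^2/s^2=4$ from Lemma~\ref{lem:init}.
\item The linear term vanishes by the symmetry $y\mapsto -y$ of $\mathcal E_1$.
\item The time term gives $-\tfrac{C_1}{4}r^2\partial_\tau w$.
\item Rotational invariance gives $\iint y_iy_j|y|^2/s^2=\delta_{ij}C_2/d$, so the Hessian term gives $\tfrac{r^2}{8}\tfrac{C_2}{d}\Delta w=\tfrac{C_1}{4}r^2\Delta w$, using $C_2=2dC_1$.
\end{itemize}
Together these give the first formula in \eqref{eq:expansions}.

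\emph{Expansion of $S_r[w]$.} By continuity of $h_w$, uniformly on $\mathcal E_1$,
\[
\int_0^r\rho\,h_w(x_0+\rho z,\tau_0-\rho^2\sigma)\,d\rho=\tfrac{r^2}{2}h_w(x_0,\tau_0)+o(r^2).
\]
Multiplying by $d\,\psi_0$ and integrating gives $\tfrac{C_3}{2}r^2h_w+o(r^2)=\tfrac{C_1}{4}r^2h_w+o(r^2)$.

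\emph{Main obstacle.} The step needing most care is the justification of the $s$-integration by parts. $\mathcal E_1$ is not a product domain, so the boundary term $\int \psi_0\,|y|^2\,\nu_s\,dS$ must be shown to vanish. It does so because $\psi_0=0$ on the smooth part of $\partial\mathcal E_1$. Near $s=0$ I will argue by truncating to $s>\epsilon$ and letting $\epsilon\to0$: on the slice $s=\epsilon$, $\psi_0=O(\ln(1/\epsilon))$ and the slice has volume $O\bigl((\epsilon\ln(1/\epsilon))^{d/2}\bigr)$, so the boundary term tends to zero. The same truncation handles the $y$-divergence identities.
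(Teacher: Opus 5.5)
Your proposal is correct. You derive the two expansions exactly as the paper does: for $M_r[w]$, a second-order Taylor expansion against the Watson weight, using the normalization $\iint_{\mathcal E_1}|y|^2/s^2=4$, oddness and isotropy; for $S_r[w]$, continuity of $h_w$ together with $\int_0^r\rho\,\mathrm{d}\rho=r^2/2$.

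The difference is in how you get the geometric identities. The paper never integrates by parts on $\mathcal E_1$. Instead it reads the constants off the representation \eqref{eq:mv-general} itself:
\begin{itemize}
\item It applies $\mathcal R_r\equiv0$ to the caloric polynomial $|x|^2+2d\tau$. Taylor's formula is exact for it and gives $M_r[|x|^2+2d\tau]=|x_0|^2+2d\tau_0+\tfrac{r^2}{4}(C_2-2dC_1)$.
\item It applies $\mathcal R_r\equiv0$ to $\tau$, which solves the heat equation with source $h\equiv1$ and gives $\mathcal R_r[\tau]=\tfrac{C_1}{4}r^2-\tfrac{C_3}{2}r^2=0$.
\end{itemize}
That route is shorter and shows the identities as consistency conditions of the hierarchy. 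But it depends on the exactness of \eqref{eq:mv-general}, including the prefactor $d$ on the source term, which the paper supports only through its brief monotonicity sketch.

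Your argument needs only $\nabla_y\psi_0=-y/(2s)$, $\partial_s\psi_0=-d/(2s)+|y|^2/(4s^2)$, and $\psi_0=0$ on $\partial\mathcal E_1$. It is therefore self-contained. It also gives an independent check that the $M_r$ and $S_r$ expansions cancel at order $r^2$ for solutions, which confirms that prefactor.

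Two small simplifications are available:
\begin{itemize}
\item The $y$-identity involving $\iint\psi_0|y|^2/s$ is not needed and can be dropped.
\item The truncation at the tip is unnecessary. Since $\partial_s\psi_0=(|y|^2-2ds)/(4s^2)$ changes sign only once, for fixed $y\neq0$ the $s$-section of $\mathcal E_1$ is an interval with $\psi_0=0$ at both endpoints. The integrand satisfies $|y|^2|\partial_s\psi_0|\le\tfrac d2|y|^2/s+\tfrac14|y|^4/s^2$, which is integrable, so Fubini gives $\iint_{\mathcal E_1}|y|^2\partial_s\psi_0=0$ directly.
\end{itemize}
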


\begin{proof}
Taylor-expand $w$ in $M_r[w]$; use the normalization
$\iint_{\mathcal E_1}|y|^2/s^2=4$, oddness, isotropy, and $C_1$ to collect
terms. For $S_r[w]$, use continuity of $h_w$ and
$\int_0^r\rho\,\mathrm{d}\rho=r^2/2$. The constant identities follow by
applying $\mathcal R_r\equiv0$ to the caloric polynomial $w_1=|x|^2+2d\tau$
(giving $C_2=2dC_1$) and to $w_2=\tau$ (giving $C_3=C_1/2$).
\end{proof}

\begin{theorem}[Heat-ball residual limit and DSM feasibility]
\label{thm:dsm-limit}
Let $C_1>0$ be as in Lemma~\ref{lem:expansion}.
\begin{enumerate}
\item[\textup{(i)}] \emph{(Pointwise FP residual.)} For every $w\in C^{2,1}$
  near $(x_0,\tau_0)$,
  \begin{equation}\label{eq:diff-limit}
    \lim_{r\to0}\frac{4}{C_1 r^2}\,\mathcal R_r[w](x_0,\tau_0)
    = \bigl(\partial_\tau-\Delta\bigr)w - h_w,
  \end{equation}
  the pointwise Fokker--Planck residual. This vanishes at every point iff
  $w$ solves the Fokker--Planck equation.
\item[\textup{(ii)}] \emph{(Feasibility.)} The population minimizer of
  denoising score matching is the true score $\nabla_x\log u$, whose
  $v=\log u$ solves $\partial_\tau v-\Delta v=h_v$ exactly. Hence
  $\mathcal R_r[v]\equiv0$ for every $r>0$: the true score is feasible
  for the heat-ball constraint at all scales.
\end{enumerate}
The heat-ball constraint is thus a one-parameter family of necessary
conditions whose $r\to0$ limit recovers the Fokker--Planck equation.
\end{theorem}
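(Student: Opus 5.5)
Part (i) follows from Lemma~\ref{lem:expansion} by direct substitution into the residual. By definition, $\mathcal R_r[w](x_0,\tau_0)=w(x_0,\tau_0)-M_r[w]-S_r[w]$. Inserting the two expansions~\eqref{eq:expansions} gives
\[
\mathcal R_r[w]
= w-\Bigl(w-\tfrac{C_1}{4}r^2(\partial_\tau-\Delta)w\Bigr)-\tfrac{C_1}{4}r^2h_w+o(r^2)
=\tfrac{C_1}{4}r^2\bigl[(\partial_\tau-\Delta)w-h_w\bigr]+o(r^2).
\]
Dividing by $C_1r^2/4$, which is legitimate because $C_1>0$, and letting $r\to0$ yields~\eqref{eq:diff-limit}. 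Here $h_w$ is the source functional from Table~\ref{tab:sources} built from $w$ itself, for instance $h_v=|\nabla v|^2-\tilde f\cdot\nabla v-\nabla\cdot\tilde f$. It must be evaluated at $(x_0,\tau_0)$, which requires continuity of $h_w$ near that point; this holds for $w\in C^{2,1}$ and $\tilde f\in C^1_b$.

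The ``iff'' statement is also short. If $w$ solves the equation, the limit is zero pointwise. Conversely, if the limit vanishes at every point, the continuous function $(\partial_\tau-\Delta)w-h_w$ vanishes identically, so $w$ is a solution. The main work I expect lies in justifying Lemma~\ref{lem:expansion}'s $o(r^2)$ remainders uniformly enough. For $M_r$, this means a second-order Taylor expansion with a $C^{2,1}$ modulus of continuity, integrated against the finite-moment weight $|y|^2/s^2$ on the bounded set $\mathcal E_1$. I take this lemma as given.

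For part (ii), I first note the standard fact that the population DSM objective $\E\|s_\theta(X_\tau)-\nabla\log p(X_\tau\mid X_0)\|^2$ differs from $\E\|s_\theta-\nabla\log u\|^2$ by a constant independent of $\theta$. Its minimizer is therefore $\nabla_x\log u$, $u$-almost everywhere, and everywhere because $u>0$ by Theorem~\ref{thm:local}(c). The corresponding potential is $v=\log u$. By Theorem~\ref{thm:local}, $u$ is smooth and positive for $\tau>0$, so $v$ is well defined and smooth. The substitution carried out in the proof of Theorem~\ref{thm:hierarchy} shows that $\partial_\tau v-\Delta v=h_v$. Identity~\eqref{eq:mean-v} then holds for every $r>0$ for which the heat ball stays in the domain of definition, and this is exactly $\mathcal R_r[v]\equiv0$.

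One subtlety in (ii) is that the score determines $v$ only up to an additive function of $\tau$. The residual must therefore be read with the normalized $v=\log u$, or equivalently via the differentiated form in Corollary~\ref{cor:score-mean}, which only involves $\nabla v$. The concluding sentence of the theorem is then a restatement of (i) combined with (ii).
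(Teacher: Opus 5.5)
Your proof is correct and follows the paper's argument. For part (i) you insert the two expansions of Lemma~\ref{lem:expansion} into $\mathcal R_r[w]=w-M_r[w]-S_r[w]$ and divide by $C_1r^2/4$; for part (ii) you note that $v=\log u$ satisfies $\partial_\tau v-\Delta v=h_v$, so Theorem~\ref{thm:hierarchy} gives $\mathcal R_r[v]\equiv0$. Your extra remarks are sound refinements that the paper leaves implicit: Vincent's identity for the DSM minimizer, restricting $r$ so the backward heat ball stays inside the existence interval, and removing the additive $c(\tau)$ ambiguity by working with the normalized $v=\log u$.
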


\begin{proof}
(i) Insert~\eqref{eq:expansions} into $\mathcal R_r=w-M_r[w]-S_r[w]$:
$\mathcal R_r[w]=(C_1/4)r^2[(\partial_\tau-\Delta)w-h_w]+o(r^2)$;
divide by $(C_1/4)r^2$ and take $r\to0$.
(ii) $v=\log u$ solves the FP equation, so Theorem~\ref{thm:hierarchy}
gives $\mathcal R_r[v]\equiv0$ for all $r>0$.
\end{proof}

\begin{remark}[Scope]\label{rem:containment-scope}
Theorem~\ref{thm:dsm-limit} establishes feasibility only: the DSM population
optimum satisfies the heat-ball identities at every scale, and the $r\to0$
limit recovers the characterizing Fokker--Planck equation. No ordering or
equivalence between the DSM loss $\E\|s_\theta-s^\star\|^2$ and the
heat-ball loss $\mathcal L_{\mathrm{HB}}$ is asserted; relating the two loss
landscapes is left open.
\end{remark}

\subsection{Illustration: 1D bistable OU process}

While Theorem~\ref{thm:dsm-limit} is a population-level result, a concrete
1D example illustrates the practical advantage of the local representation
when estimation operates on finite samples in a strongly nonlinear regime.
For the bistable OU process $\mathrm{d}X_t=(X_t-X_t^3)\,\mathrm{d}t+\mathrm{d}W_t$
with stationary score $s(x)=2x-2x^3$, the local heat-ball estimator
(Corollary~\ref{cor:score-mean}) reduces overall score MSE from $5.20$
(denoising score matching) to $0.54$, with near-zero error in the dense
core. Full details are in Appendix~\ref{sec:nonlinear-OU}.

%==========================================================
\section{The \texorpdfstring{$\kappa$}{kappa}-Measure and Exact High-Dimensional Sampler}
\label{sec:psi-sampling}
%==========================================================

The mean-value representations of Section~\ref{sec:heatball} express each
pointwise value as a heat-ball integral. Evaluating these by Monte Carlo
requires sampling the Watson weight $|y|^2/s^2$ as a probability measure.
Uniform sampling over a bounding cylinder wastes volume in regions where
$|y|^2/s^2$ is negligible, leading to severe weight degeneracy in high
dimensions. We construct the $\kappa$-measure directly and show it admits
an exact factorized sampler with unit per-sample weight.

\subsection{Watson mean-value identity and the \texorpdfstring{$\kappa$}{kappa}-measure}

We work in $(x,\tau)$ coordinates with $\sigma$ denoting the local time
variable.\footnote{In implementations with diffusion constant $G^2=2$, all
formulas transfer by replacing $\sigma$ with $G^2\sigma$ in $R(\sigma)$.}
The heat kernel on $\R^d$ is
$\Phi(y,\sigma)=(4\pi\sigma)^{-d/2}\exp(-|y|^2/(4\sigma))$, $\sigma>0$.
The constraint $\Phi(z,\sigma)\ge1$ defines the unit heat-ball $\mathcal E_1$
and is equivalent to $|z|\le R(\sigma)$ where
\begin{equation}\label{eq:R-def}
  R(\sigma)^2 := -2d\,\sigma\log(4\pi\sigma),\qquad\sigma\in(0,1/(4\pi)).
\end{equation}

\begin{lemma}[Watson mean-value identity]\label{lem:mvi}
For any caloric function $\vartheta$ (i.e.\ $\partial_\sigma\vartheta=\Delta\vartheta$)
on a neighborhood of $\mathcal E_1$,
\begin{equation}\label{eq:mvi}
  \vartheta(0,0)=\iint_{\mathcal E_1}\vartheta(z,\sigma)\,\kappa(z,\sigma)\,\mathrm{d}z\,\mathrm{d}\sigma,
  \qquad\text{where}\quad
  \kappa(z,\sigma) := \frac{1}{4}\,\frac{|z|^2}{\sigma^2}\,\mathbf{1}_{\mathcal E_1}(z,\sigma)
\end{equation}
is a probability density on $\mathcal E_1$ (normalization constant $Z_\kappa=4$).
\end{lemma}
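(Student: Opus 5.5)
The plan is to obtain the identity as the source-free special case of the monotonicity machinery of Lemmas~\ref{lem:monotone}--\ref{lem:init}. The normalization $Z_\kappa=4$ will be verified by a direct computation, which has to be done separately. Lemma~\ref{lem:init} cites the constant $\iint_{\mathcal E_1}|y|^2/s^2=4$ rather than proving it. So deriving it from the mean-value identity applied to $\vartheta\equiv1$ would be circular.

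\emph{Time convention.} In $\mathcal E_1$ the variable $\sigma>0$ is the backward time $\tau_0-\tau$. So I read ``caloric'' as: $u(x,\tau):=\vartheta(x-x_0,\tau_0-\tau)$ solves $\partial_\tau u=\Delta_x u$ near $\mathcal E(x_0,\tau_0;1)$. This is the reading under which the identity agrees with Evans' heat-ball formula, and I would state it explicitly.

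\emph{Step 1 (identity).} Take $\tilde f\equiv0$ in \eqref{eq:heat}, so $h_u=0$. The source term of $J$ in \eqref{eq:J} then vanishes, and Lemma~\ref{lem:monotone} gives $J'(r)=0$. Lemma~\ref{lem:init}, with the constant $4$ justified in Step~3, gives $J(r)\equiv4u(x_0,\tau_0)$. Evaluating at $r=1$ in the scaled coordinates $(y,s)$ gives
\[
4\vartheta(0,0)=\iint_{\mathcal E_1}\vartheta(y,s)\,\frac{|y|^2}{s^2}\,\mathrm{d}y\,\mathrm{d}s .
\]
Dividing by $4$ is exactly \eqref{eq:mvi}. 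Equivalently, this is \eqref{eq:mean-u} with zero drift at $r=1$.

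\emph{Step 2 (geometry of $\mathcal E_1$).} The condition $\Phi(z,\sigma)\ge1$ means $-\tfrac d2\log(4\pi\sigma)-|z|^2/(4\sigma)\ge0$. This is equivalent to $|z|\le R(\sigma)$ with $R(\sigma)^2=-2d\,\sigma\log(4\pi\sigma)$, and it is nonempty only when $\sigma<1/(4\pi)$. This confirms \eqref{eq:R-def}. Since $\kappa\ge0$, only the total mass remains to be checked.

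\emph{Step 3 (normalization, the main computational point).} Integrate in $z$ over the ball of radius $R(\sigma)$, writing $|\Sph^{d-1}|=2\pi^{d/2}/\Gamma(d/2)$. This reduces the mass to
\[
\frac{|\Sph^{d-1}|}{d+2}(2d)^{\frac d2+1}\int_0^{1/(4\pi)}\sigma^{\frac d2-1}\Bigl(\log\tfrac{1}{4\pi\sigma}\Bigr)^{\frac d2+1}\mathrm{d}\sigma .
\]
Substitute $\sigma=e^{-t}/(4\pi)$. The $\sigma$-integral then becomes $(4\pi)^{-d/2}\int_0^\infty e^{-td/2}t^{d/2+1}\,\mathrm{d}t=(4\pi)^{-d/2}\,\Gamma(\tfrac d2+2)\,(2/d)^{d/2+2}$.

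To combine the factors:
\begin{itemize}
\item $\Gamma(\tfrac d2+2)/\Gamma(\tfrac d2)=d(d+2)/4$;
\item $(2d)^{d/2+1}(2/d)^{d/2+2}=2^{d+3}/d$;
\item the powers of $\pi$ cancel against $(4\pi)^{-d/2}$, leaving $2^{-d}$.
\end{itemize}
Collecting everything gives total mass $2\cdot\tfrac14\cdot8=4$, so $\kappa=\tfrac14|z|^2\sigma^{-2}\mathbf 1_{\mathcal E_1}$ is a probability density.

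The main obstacle is this exponent bookkeeping. A secondary point is justifying the $r\to0$ limit in Lemma~\ref{lem:init} by dominated convergence. That only needs continuity of $\vartheta$ near $(0,0)$ together with the integrability just established.
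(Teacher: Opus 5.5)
Your proposal is correct and follows essentially the paper's route. The identity is Evans' heat-ball monotonicity argument: you obtain it as the $\tilde f\equiv0$ case of Lemmas~\ref{lem:monotone}--\ref{lem:init}, while the paper simply cites Watson and Evans. The normalization $Z_\kappa=4$ comes from spherical coordinates, which you carry out explicitly and correctly, and computing it directly avoids the circularity you point out.

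Your backward-time reading of ``caloric'' is not merely a convention but is actually required. Under the literal hypothesis $\partial_\sigma\vartheta=\Delta\vartheta$, the function $\vartheta=|z|^2+2d\sigma$ has $\vartheta(0,0)=0$ but a strictly positive $\kappa$-average. By contrast, the backward-caloric $|z|^2-2d\sigma$ averages to $0$, as the identity requires.
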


\begin{proof}
By Watson~\cite{watson1973} and Evans~\cite[Thm.~3,~\S2.3.2]{Evans2010},
$\vartheta(0,0)=\frac14\iint_{\mathcal E_1}\vartheta\,|z|^2/\sigma^2\,\mathrm{d}z\,\mathrm{d}\sigma$.
The normalization $\iint_{\mathcal E_1}|z|^2/\sigma^2=4$ follows from
spherical coordinates; setting $\kappa:=\frac14|z|^2/\sigma^2\,\mathbf1_{\mathcal E_1}$
gives \eqref{eq:mvi}.
\end{proof}

\begin{definition}[$\kappa$-measure]
The $\kappa$-measure $\mu_\kappa$ on $\mathcal E_1$ is the probability
measure with density $\kappa$ from~\eqref{eq:mvi}. Identity~\eqref{eq:mvi}
gives the probabilistic form
$\vartheta(0,0)=\E_{(z,\sigma)\sim\mu_\kappa}[\vartheta(z,\sigma)]$,
which is the basis of the exact sampler: any caloric function is an
expectation under $\mu_\kappa$ with unit per-sample weight.
\end{definition}

\subsection{Factorization and exact sampler}

\begin{proposition}[Factorization of $\mu_\kappa$]\label{prop:factorization}
Let $(z,\sigma)\sim\mu_\kappa$ and write $z=\rho\hat z$ with $\rho=|z|$,
$\hat z\in\Sph^{d-1}$. Set $\theta=\rho/R(\sigma)\in[0,1]$, $t=\theta^2$.
Then $(\sigma,t,\hat z)$ are mutually independent with:
\begin{enumerate}
  \item $\hat z\sim\mathrm{Unif}(\Sph^{d-1})$;
  \item $t\sim\mathrm{Beta}(d/2+1,1)$ with density
        $p_t(t)=\frac{d+2}{2}\,t^{d/2}$ on $[0,1]$;
  \item $\sigma$ with density
        $p_\sigma(\sigma)\propto R(\sigma)^{d+2}/\sigma^2$ on $(0,1/(4\pi))$,
        amenable to inverse-CDF sampling.
\end{enumerate}
\end{proposition}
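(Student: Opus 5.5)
The plan is a change of variables followed by reading off the factorized Jacobian. Start from the density $\kappa(z,\sigma)=\frac14|z|^2\sigma^{-2}\mathbf 1_{\mathcal E_1}$ of Lemma~\ref{lem:mvi}, and use the description $\mathcal E_1=\{(z,\sigma):0<\sigma<1/(4\pi),\ |z|\le R(\sigma)\}$ from \eqref{eq:R-def}. Note that $R(\sigma)^2=-2d\sigma\log(4\pi\sigma)>0$ exactly on $(0,1/(4\pi))$, so this domain is correct. Pass to polar coordinates $z=\rho\hat z$, so that $\mathrm{d}z=\rho^{d-1}\,\mathrm{d}\rho\,\mathrm{d}S(\hat z)$. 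The joint density of $(\sigma,\rho,\hat z)$ with respect to $\mathrm{d}\sigma\,\mathrm{d}\rho\,\mathrm{d}S$ is then
\[
\tfrac14\,\rho^{d+1}\sigma^{-2}\,\mathbf 1\{0<\rho\le R(\sigma)\}.
\]
This has no $\hat z$ dependence, which gives item~1: $\hat z$ is uniform on $\Sph^{d-1}$ and independent of $(\sigma,\rho)$.

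Next substitute $\rho=R(\sigma)\,t^{1/2}$ at fixed $\sigma$. The Jacobian is $\mathrm{d}\rho=\tfrac12R(\sigma)t^{-1/2}\,\mathrm{d}t$, and the constraint becomes $t\in[0,1]$, which no longer depends on $\sigma$. The density in $(\sigma,t)$ is
\[
\tfrac14\,R(\sigma)^{d+1}t^{(d+1)/2}\sigma^{-2}\cdot\tfrac12R(\sigma)t^{-1/2}
=\tfrac18\,\frac{R(\sigma)^{d+2}}{\sigma^2}\,t^{d/2}\,\mathbf 1_{[0,1]}(t).
\]
This is a product of a function of $\sigma$ and a function of $t$ on a product domain, so $\sigma$ and $t$ are independent. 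Together with the first step, the triple $(\sigma,t,\hat z)$ is mutually independent.

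To identify the marginals, normalize $t^{d/2}$ on $[0,1]$; this gives $\frac{d+2}{2}t^{d/2}$, which is $\mathrm{Beta}(d/2+1,1)$. The $\sigma$-marginal is proportional to $R(\sigma)^{d+2}/\sigma^2$.

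The main obstacle is checking that this $\sigma$-marginal is integrable, so that it is a genuine density and inverse-CDF sampling makes sense. Near $\sigma\to0$ we have $R^{d+2}/\sigma^2\sim(\sigma|\log\sigma|)^{(d+2)/2}\sigma^{-2}=\sigma^{(d-2)/2}|\log\sigma|^{(d+2)/2}$, which is integrable for every $d\ge1$ because the exponent exceeds $-1$. Near $\sigma\to1/(4\pi)$ the function vanishes. Integrability is in any case guaranteed by the normalization $Z_\kappa=4$ in Lemma~\ref{lem:mvi}.

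As a consistency check, total mass one forces the constant
\[
\tfrac18\cdot\tfrac{2}{d+2}\cdot|\Sph^{d-1}|\int_0^{1/(4\pi)}R^{d+2}\sigma^{-2}\,\mathrm{d}\sigma=1.
\]
Finally, the $\sigma$-marginal has a continuous, strictly increasing CDF on a bounded interval. It can therefore be inverted numerically, for example by quadrature and bisection, which is what "amenable to inverse-CDF sampling" requires.
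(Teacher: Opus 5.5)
Your proposal is correct and follows the same route as the paper: polar coordinates in $z$, the fiberwise substitution $t=\rho^2/R(\sigma)^2$, which turns the $\sigma$-dependent constraint $\rho\le R(\sigma)$ into the fixed range $t\in[0,1]$, and then reading the marginals off the resulting product density. Your version adds details the paper leaves implicit: the explicit Jacobian, the full product form $\tfrac18 R(\sigma)^{d+2}\sigma^{-2}t^{d/2}$ that gives genuine mutual independence of $(\sigma,t,\hat z)$, and the integrability check for the $\sigma$-marginal.
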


\begin{proof}
\textit{Angular.} $\kappa$ depends on $z$ only through $\rho$, so $\hat z|_\rho$
is uniform on $\Sph^{d-1}$.
\textit{Radial.} The joint density of $(\rho,\sigma)$ is
$p(\rho,\sigma)\propto\rho^{d+1}/\sigma^2\,\mathbf1_{\{\rho\le R(\sigma)\}}$.
Changing variables $\rho\mapsto t=\rho^2/R(\sigma)^2$ gives
$p(t|\sigma)\propto t^{d/2}$ (independent of $\sigma$), so
$t\sim\mathrm{Beta}(d/2+1,1)$ independent of $\sigma$.
\textit{Temporal.} Integrating over $\rho$ and the sphere gives
$p_\sigma(\sigma)\propto R(\sigma)^{d+2}/\sigma^2$.
\end{proof}

\begin{remark}
The first shape parameter $d/2+1$ in $\mathrm{Beta}(d/2+1,1)$ combines
the spherical volume element ($d/2$) with the parabolic Watson factor
$|z|^2\propto\rho^2$ ($+1$). The second parameter is $1$ because the Watson
weight is largest at the boundary $\theta=1$, giving the hard-boundary law
$p_t(t)\propto t^{d/2}$; by contrast, uniform sampling on a ball gives
$\mathrm{Beta}(d/2,1)$.
\end{remark}

\begin{algorithm}[H]
\caption{Exact $\kappa$-measure sampler}\label{alg:psi-sampler}
\textbf{Input:} dimension $d$, number of samples $N$.\\
\textbf{Output:} i.i.d.\ samples $\{(z_i,\sigma_i)\}_{i=1}^N$ from $\mu_\kappa$.\\[2pt]
\For{$i=1,\dots,N$}{
  Sample $\sigma_i$ from $p_\sigma\propto R(\sigma)^{d+2}/\sigma^2$ by
  inverse CDF (evaluated by quadrature or precomputed table)\;
  Draw $V_i\sim\mathrm{Unif}(0,1)$; set $t_i=V_i^{2/(d+2)}$, $u_i=\sqrt{t_i}$
  (closed-form inverse CDF of $\mathrm{Beta}(d/2+1,1)$, no rejection)\;
  Draw $g_i\sim\mathcal N(0,I_d)$; set $\hat z_i=g_i/|g_i|$\;
  Set $z_i = u_i R(\sigma_i)\,\hat z_i$\;
}
\end{algorithm}

\begin{corollary}[Exactness]\label{cor:unbiasedness}
Algorithm~\ref{alg:psi-sampler} produces i.i.d.\ samples from $\mu_\kappa$.
For any caloric $\vartheta$, the estimator
$\widehat\vartheta_N=\frac1N\sum_{i=1}^N\vartheta(z_i,\sigma_i)$
is unbiased with importance weight $w_i\equiv1$; its variance is
$\mathrm{Var}_{\mu_\kappa}[\vartheta]/N$ with zero weight variance.
\end{corollary}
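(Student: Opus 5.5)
The plan is to show that the output law of Algorithm~\ref{alg:psi-sampler} coincides with the factorized description of $\mu_\kappa$ in Proposition~\ref{prop:factorization}. Unbiasedness and the variance formula then follow directly from Lemma~\ref{lem:mvi}.

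\emph{Step 1 (each coordinate has the right marginal).}
\begin{itemize}
\item[\textnormal{(a)}] The normal-direction trick gives $\hat z_i=g_i/|g_i|\sim\mathrm{Unif}(\Sph^{d-1})$, by rotation invariance of $\mathcal N(0,I_d)$.
\item[\textnormal{(b)}] For $t_i=V_i^{2/(d+2)}$ we have $\Pr(t_i\le t)=\Pr(V_i\le t^{(d+2)/2})=t^{d/2+1}$. This is exactly the CDF of $\mathrm{Beta}(d/2+1,1)$, whose density is $\frac{d+2}{2}t^{d/2}$.
\item[\textnormal{(c)}] Inverse-CDF sampling gives $\sigma_i\sim p_\sigma$ exactly, provided the CDF $F_\sigma(\sigma)=\int_0^\sigma R^{d+2}/s^2\,\mathrm{d}s\big/Z$ is continuous and strictly increasing on $(0,1/(4\pi))$. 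This holds because the integrand is positive there. Integrability near $\sigma=0$ holds because $R(\sigma)^{d+2}/\sigma^2\sim \sigma^{d/2-1}|\log\sigma|^{d/2+1}$, which is integrable for $d\ge1$.
\end{itemize}
The three draws use independent sources $(U,V_i,g_i)$, so $(\sigma_i,t_i,\hat z_i)$ are mutually independent. Independence across $i$ follows from fresh randomness in each loop iteration.

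\emph{Step 2 (transport back to $(z,\sigma)$).}
The map $(\sigma,t,\hat z)\mapsto(z,\sigma)$ with $z=\sqrt t\,R(\sigma)\hat z$ is exactly the inverse of the change of variables used in Proposition~\ref{prop:factorization}. That map is a bijection between $\mathcal E_1$ (off a null set) and $(0,1/(4\pi))\times(0,1]\times\Sph^{d-1}$.

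To be self-contained, I would verify this directly by computing the pushforward density. For fixed $\sigma$, set $\rho=\sqrt t R$, so $\mathrm{d}t=2\rho\,\mathrm{d}\rho/R^2$. In polar coordinates $\mathrm{d}z=\rho^{d-1}\mathrm{d}\rho\,\mathrm{d}\hat z$. The product density $p_\sigma(\sigma)p_t(t)|\Sph^{d-1}|^{-1}$ therefore becomes a density proportional to
\[
\frac{R^{d+2}}{\sigma^2}\cdot\frac{\rho^{d}}{R^{d}}\cdot\frac{\rho}{R^2}\cdot\rho^{-(d-1)}\propto\frac{\rho^2}{\sigma^2}=\frac{|z|^2}{\sigma^2}
\]
on $\{|z|\le R(\sigma)\}$. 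Since both sides are probability densities, the constant must equal $1/4$ by Lemma~\ref{lem:mvi}, and hence the output is distributed as $\kappa$. This Jacobian bookkeeping is the only step needing care, and it is where I expect the main (minor) obstacle. In particular, the power of $R$ must cancel completely, so that no residual $\sigma$-dependence survives. This cancellation is precisely why $p_\sigma$ carries the factor $R^{d+2}$.

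\emph{Step 3 (estimator properties).}
The samples are i.i.d.\ from $\mu_\kappa$. Lemma~\ref{lem:mvi} gives $\E[\vartheta(z_i,\sigma_i)]=\vartheta(0,0)$ for caloric $\vartheta$, so $\widehat\vartheta_N$ is unbiased with weights $w_i\equiv1$, since no density ratio appears. Independence gives $\mathrm{Var}(\widehat\vartheta_N)=\mathrm{Var}_{\mu_\kappa}[\vartheta]/N$. For this I would add the implicit assumption $\vartheta\in L^2(\mu_\kappa)$, e.g.\ $\vartheta$ bounded on $\overline{\mathcal E_1}$ as a continuous function on a neighborhood. Because the weights are constant, the weight variance is zero.
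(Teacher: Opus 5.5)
Your proposal is correct and follows the route the paper leaves implicit: the corollary is stated without proof as a consequence of Proposition~\ref{prop:factorization} and Lemma~\ref{lem:mvi}, namely that the algorithm draws independent $(\sigma,t,\hat z)$ with the factorized marginals, and the Watson identity plus i.i.d.\ sampling then give unbiasedness and the $\mathrm{Var}_{\mu_\kappa}[\vartheta]/N$ variance. Your explicit Jacobian computation supplies the converse direction that the factorization proposition does not state (independent components with these marginals $\Rightarrow$ law $\mu_\kappa$), and your $L^2(\mu_\kappa)$ caveat adds a hypothesis the paper omits.
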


\subsection{High-dimensional concentration}

\begin{corollary}[Concentration of $\mu_\kappa$]\label{cor:concentration}
Let $\theta=|z|/R(\sigma)$, $t=\theta^2$ as in
Proposition~\ref{prop:factorization}. As $d\to\infty$:
\begin{enumerate}
\item \emph{(Radial)} $\E[t]=(d+2)/(d+4)$, $\mathrm{Var}(t)=\Theta(d^{-2})$,
  and $d(1-t)\xRightarrow{d\to\infty}\chi^2_2$, so $|z|=R(\sigma)(1-\Theta_P(d^{-1}))$
  concentrates at the heat-ball boundary with $\chi^2_2$ fluctuations.
\item \emph{(Temporal)} $\sqrt{d}\,(\sigma-\sigma^\star)\xRightarrow{d\to\infty}
  \mathcal N(0,2/(4\pi e)^2)$ where $\sigma^\star=1/(4\pi e)$ maximizes $R$.
\item \emph{(Angular)} $\hat z\sim\mathrm{Unif}(\Sph^{d-1})$ exactly for every $d$.
\end{enumerate}
Consequently $\mu_\kappa$ concentrates onto a thin-shell ``equator'' of the
heat ball, the parabolic analogue of Gaussian concentration on a sphere of
radius $\sqrt{d}$.
\end{corollary}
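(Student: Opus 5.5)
The plan is to read off all three parts from the factorization in Proposition~\ref{prop:factorization}. Under $\mu_\kappa$ the variables $(\sigma,t,\hat z)$ are independent, with explicit marginals, so each claim becomes a one-dimensional computation. The angular part (3) is item~1 of the proposition verbatim, so nothing further is needed there.

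\emph{Radial part.} Write $a=d/2+1$, so $t\sim\mathrm{Beta}(a,1)$. The standard Beta moments give
\[
\E[t]=\frac{a}{a+1}=\frac{d+2}{d+4},\qquad \mathrm{Var}(t)=\frac{a}{(a+1)^2(a+2)}=\Theta(d^{-2}).
\]
For the distributional limit I will use the closed-form CDF $P(t\le s)=s^{a}$. This gives, for fixed $x>0$ and $d$ large,
\[
P\bigl(d(1-t)>x\bigr)=P(t<1-x/d)=(1-x/d)^{d/2+1}\longrightarrow e^{-x/2}.
\]
The right-hand side is the survival function of $\chi^2_2$ (the exponential law with mean $2$), which proves $d(1-t)\Rightarrow\chi^2_2$. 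Finally, since $\theta=\sqrt t=1-(1-t)/2+O((1-t)^2)$ and $1-t=O_P(d^{-1})$, we get $|z|=R(\sigma)\theta=R(\sigma)(1-\Theta_P(d^{-1}))$.

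\emph{Temporal part.} This step is the main obstacle, and I plan to handle it with Laplace's method. Using \eqref{eq:R-def}, write
\[
p_\sigma(\sigma)\propto \sigma^{-2}\,\bigl(2d\,g(\sigma)\bigr)^{(d+2)/2},\qquad g(\sigma):=-\sigma\log(4\pi\sigma),
\]
so that $\log p_\sigma=\tfrac d2\log g(\sigma)+\log g(\sigma)-2\log\sigma+\text{const}$.

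\begin{enumerate}
\item Locate the peak. Since $g'(\sigma)=-\log(4\pi\sigma)-1$, the unique maximizer on $(0,1/(4\pi))$ is $\sigma^\star=1/(4\pi e)$. Because $R^2=2dg$, this is also the maximizer of $R$, as the statement asserts.
\item Compute the curvature. We have $g(\sigma^\star)=\sigma^\star$ and $g''=-1/\sigma$, so $(\log g)''(\sigma^\star)=-1/(\sigma^\star)^2$.
\item Rescale. Setting $\sigma=\sigma^\star+\xi/\sqrt d$, the density of $\xi$ is proportional to $\exp\bigl(-\xi^2/(2(\sigma^\star)^2)+o(1)\bigr)$ locally uniformly in $\xi$. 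The $O(1)$ terms $\log g-2\log\sigma$ contribute only $o(1)$ after rescaling.
\item Control the tails. Because $\log g$ is strictly concave and $g\to0$ at both endpoints, the region $|\sigma-\sigma^\star|>\delta$ carries mass $e^{-cd}$. This requires care near $\sigma\to0$, where $\sigma^{-2}$ blows up but is dominated by $g^{d/2}\sim\sigma^{d/2}$. Scheffé's lemma or dominated convergence then upgrades the local statement to weak convergence.
\end{enumerate}

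The concrete difficulty is the variance constant. My quick Laplace computation gives limiting variance $(\sigma^\star)^2=1/(4\pi e)^2$, whereas the statement has $2/(4\pi e)^2$. I will recheck whether the factor $2$ comes from a normalization of $\sigma$, for instance the $G^2=2$ convention in the footnote, or from an exponent convention in $p_\sigma$, and adjust the constant so that the statement and the proof agree.

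The concluding ``thin-shell equator'' sentence then follows by combining the three parts. The time marginal concentrates at $\sigma^\star$, where $R$ is maximal, the radius sits within a $\Theta_P(d^{-1})$ relative distance of $R(\sigma)$, and the direction is exactly uniform on the sphere.
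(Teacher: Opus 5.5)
Your route is the paper's. You use independence from Proposition~\ref{prop:factorization}, Beta moments for the radial mean and variance, a limit law for $d(1-t)$, and Laplace's method on $p_\sigma\propto\sigma^{-2}e^{n\phi(\sigma)}$ with $n=(d+2)/2$ and $\phi=\log g$. For the radial limit you use the explicit CDF $P(t\le s)=s^{d/2+1}$ instead of the paper's Scheff\'e argument on densities; this is simpler and fully adequate. One small point: $1-t=O_P(d^{-1})$ gives only half of $\Theta_P(d^{-1})$. The other half, that $d(1-t)$ stays away from $0$ in probability, also follows from your limit, since $P(d(1-t)\le\epsilon)\to1-e^{-\epsilon/2}$.

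The genuine problem is an arithmetic slip in temporal step~3. Take $\sigma=\sigma^\star+\xi/\sqrt d$ and your own values $g(\sigma^\star)=\sigma^\star$ and $(\log g)''(\sigma^\star)=-1/(\sigma^\star)^2$. Then
\[
\tfrac d2\log g(\sigma)=\tfrac d2\log g(\sigma^\star)+\tfrac d2\cdot\tfrac12\,(\log g)''(\sigma^\star)\,\frac{\xi^2}{d}+O(d^{-1/2})
=\mathrm{const}-\frac{\xi^2}{4(\sigma^\star)^2}+O(d^{-1/2}).
\]
So the rescaled density is proportional to $\exp(-\xi^2/(4(\sigma^\star)^2))$, not $\exp(-\xi^2/(2(\sigma^\star)^2))$. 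You dropped the factor $\tfrac12$ carried by the exponent $(d+2)/2\approx d/2$. Equivalently, the Laplace variance is $1/(n\,|\phi''(\sigma^\star)|)=(\sigma^\star)^2/n\approx 2(\sigma^\star)^2/d$. The limit is therefore $\mathcal N(0,2(\sigma^\star)^2)=\mathcal N(0,2/(4\pi e)^2)$, exactly as stated.

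Your proposed remedy points the wrong way. You suggest looking for the missing factor in the $G^2=2$ footnote or an exponent convention, and ``adjusting the constant so that the statement and the proof agree.'' The footnote concerns implementations and does not enter $p_\sigma\propto R^{d+2}/\sigma^2$ with $R^2=2dg$. Replacing the constant by $(\sigma^\star)^2$ would make item~2 false. As written, the proposal derives a constant that contradicts the statement and leaves it unresolved, so item~2 is not proved.

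Once the expansion is corrected, your steps 1--4 reproduce the paper's argument. For the tail bound near $\sigma\to0$, write $\sigma^{-2}g^{(d+2)/2}=g^{(d-2)/2}\log^2(4\pi\sigma)$. The second factor is integrable on $(0,1/(4\pi))$, and the first is at most $(\rho_\delta\sigma^\star)^{(d-2)/2}$ with $\rho_\delta<1$ off a $\delta$-neighbourhood of $\sigma^\star$.
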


\begin{proof}
$(\sigma,t,\hat z)$ are mutually independent by
Proposition~\ref{prop:factorization}. \textit{Radial:} $t\sim\mathrm{Beta}(d/2+1,1)$
gives the stated mean and variance; setting $S=d(1-t)$, a Scheff\'e-lemma
argument shows $f_S(s)\to\frac12 e^{-s/2}$ (the $\chi^2_2$ density).
\textit{Temporal:} $p_\sigma\propto\sigma^{-2}e^{n\phi(\sigma)}$ with
$n=(d+2)/2$ and $\phi=\log(-\sigma\log(4\pi\sigma))$, strictly concave with
maximum at $\sigma^\star$; Laplace's method gives the stated Gaussian limit.
\end{proof}

\subsection{Variance analysis}\label{sec:variance}

\begin{proposition}[Variance gap]\label{prop:variance-gap}
Let $q=|S|^{-1}\mathbf1_S$ be a uniform proposal on a set $S\supseteq\mathcal E_1$,
with importance weight $w:=\kappa/q=|S|\,\kappa$ on $\mathcal E_1$.
For $\vartheta\in L^2(\mu_\kappa)$, the direct estimator
$\widehat\vartheta_N^\kappa$ (from $\mu_\kappa$, weight $\equiv1$) and the
weighted estimator $\widehat\vartheta_N^{\mathrm{unif}}$ (from $q$) are both
unbiased, with exact (non-asymptotic) variance difference
\begin{equation}\label{eq:variance-gap-exact}
  \mathrm{Var}[\widehat\vartheta_N^{\mathrm{unif}}]
  - \mathrm{Var}[\widehat\vartheta_N^\kappa]
  = \frac{1}{N}\,\E_{\mu_\kappa}[(w-1)\,\vartheta^2].
\end{equation}
For constant $\vartheta\equiv c$, the direct estimator has zero variance and
$\mathrm{Var}[\widehat\vartheta_N^{\mathrm{unif}}]=c^2\chi^2(\mu_\kappa\Vert q)/N$
with $\chi^2(\mu_\kappa\Vert q)=\Omega(\sqrt{d})\to\infty$.
\end{proposition}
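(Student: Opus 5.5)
The proof splits into two parts. The first is exact second-moment bookkeeping for the two unbiased estimators. The second is an asymptotic evaluation of $\chi^2(\mu_\kappa\Vert q)$ via the factorization of Proposition~\ref{prop:factorization} and a Laplace-method computation in $\sigma$.

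\emph{Exact identity.} Unbiasedness of $\widehat\vartheta_N^\kappa$ is Corollary~\ref{cor:unbiasedness}. For the weighted estimator, $\E_q[w\vartheta]=\iint_{S}\kappa\vartheta=\E_{\mu_\kappa}[\vartheta]$ because $w=0$ off $\mathcal E_1$. Both estimators average $N$ i.i.d.\ terms, so I compare single-sample second moments. The key observation is $\E_q[w^2\vartheta^2]=\iint q\,(\kappa/q)^2\vartheta^2=\E_{\mu_\kappa}[w\vartheta^2]$, while $\E_{\mu_\kappa}[\vartheta^2]$ is the second moment of the direct estimator. The squared means coincide and cancel, which gives \eqref{eq:variance-gap-exact}. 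For $\vartheta\equiv c$ the direct estimator is deterministic. The weighted one has variance $c^2(\E_{\mu_\kappa}[w]-1)/N$, and $\E_{\mu_\kappa}[w]-1=\E_q[w^2]-1=\chi^2(\mu_\kappa\Vert q)$.

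\emph{Rewriting $\chi^2$.} Write $\E_{\mu_\kappa}[w]=|S|\iint_{\mathcal E_1}\kappa^2=(|S|/|\mathcal E_1|)\cdot|\mathcal E_1|\,\E_U[\kappa^2]$, where $U$ is uniform on $\mathcal E_1$. Since $\E_U[\kappa]=1/|\mathcal E_1|$, this becomes
\[
1+\chi^2=\frac{|S|}{|\mathcal E_1|}\bigl(1+\mathrm{CV}_U(\kappa)^2\bigr)\ge\frac{|S|}{|\mathcal E_1|}.
\]
The $\Omega(\sqrt d)$ rate therefore comes from a volume ratio, not from the shape of $\kappa$ on $\mathcal E_1$. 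A short side check is worth noting here. Under $U$, $\sigma$ concentrates at $\sigma^\star$ with width $d^{-1/2}$ and $t$ concentrates near $1$, so $\mathrm{CV}_U(\kappa)^2=O(1/d)$. Hence for the degenerate choice $S=\mathcal E_1$ the divergence is small. The $\Omega(\sqrt d)$ claim must be read for the bounding set actually used in practice, namely the cylinder $S=\{|z|\le R^\star\}\times(0,1/(4\pi))$ with $R^\star=R(\sigma^\star)$, the smallest product set containing $\mathcal E_1$. I will state this restriction explicitly in the proof.

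\emph{Volume ratio (main step).} Let $\omega_d$ denote the volume of the unit ball in $\R^d$. Then $|S|=\omega_d (R^\star)^d/(4\pi)$ and $|\mathcal E_1|=\omega_d\int_0^{1/(4\pi)}R(\sigma)^d\,\mathrm d\sigma$. Writing $R(\sigma)^d=(R^\star)^d e^{(d/2)[\phi(\sigma)-\phi(\sigma^\star)]}$ with $\phi$ as in the proof of Corollary~\ref{cor:concentration}, $\phi$ is strictly concave and maximized at $\sigma^\star=1/(4\pi e)$. Laplace's method then gives
\[
\int R^d\,\mathrm d\sigma=(R^\star)^d\sqrt{\frac{4\pi}{d\,|\phi''(\sigma^\star)|}}\,(1+o(1)).
\]
A direct computation gives $|\phi''(\sigma^\star)|=(4\pi e)^2$, so the ratio is $|S|/|\mathcal E_1|\sim\sqrt d\,(4\pi e)/(4\pi\cdot 2\sqrt\pi)=\Theta(\sqrt d)$. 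Combined with the inequality above, $\chi^2\ge|S|/|\mathcal E_1|-1=\Omega(\sqrt d)$, and the same bound holds for every $S$ containing this cylinder.

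I expect the main obstacle to be making the Laplace step uniform, rather than merely asymptotic, so that the $\Omega(\sqrt d)$ is honest. I would first try the explicit bound $\phi(\sigma)-\phi(\sigma^\star)\le-c(\sigma-\sigma^\star)^2$ on $(0,1/(4\pi))$. This gives $\int R^d\le(R^\star)^d\sqrt{2\pi/(cd)}$, hence $|S|/|\mathcal E_1|\ge c'\sqrt d$ for all $d\ge1$. The constant $c$ can be obtained from strict concavity together with the endpoint behavior $\phi\to-\infty$.
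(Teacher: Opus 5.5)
Your proposal is correct and follows the paper's own route: the same second-moment bookkeeping via $w^2q=w\kappa$, the same Cauchy--Schwarz bound $1+\chi^2(\mu_\kappa\Vert q)\ge|S|/|\mathcal E_1|$, and the same Laplace computation giving $|S|/|\mathcal E_1|\sim (e/2\sqrt\pi)\sqrt d$. One small slip: your intermediate expression should read $(|S|/|\mathcal E_1|)\,|\mathcal E_1|^2\,\E_U[\kappa^2]$, though your final identity is right. Two of your additions go beyond the paper and both hold up. Restricting $S$ to sets that contain the bounding cylinder is a genuine sharpening: the paper's proof tacitly uses that $S$, and its subsequent remark concedes $\chi^2=O(1)$ for $S=\mathcal E_1$. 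Your non-asymptotic version also works, most simply via the uniform bound $\phi''(\sigma)\le -1/(\sigma^2\log(1/(4\pi\sigma)))\le -32\pi^2 e$ on the whole interval.
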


\begin{proof}
Both estimators are unbiased by~\eqref{eq:mvi}. Using $w^2q=w\kappa$:
$N\,\mathrm{Var}[\widehat\vartheta_N^{\mathrm{unif}}]=\E_{\mu_\kappa}[w\,\vartheta^2]-I^2$
and $N\,\mathrm{Var}[\widehat\vartheta_N^\kappa]=\E_{\mu_\kappa}[\vartheta^2]-I^2$;
subtracting gives~\eqref{eq:variance-gap-exact}. For $\vartheta\equiv c$,
$\E_{\mu_\kappa}[w]=\E_q[w^2]=\chi^2(\mu_\kappa\Vert q)+1$.
The dimensional lower bound follows from Cauchy--Schwarz:
$\E_q[w^2]\ge|S|/|\mathcal E_1|$; Laplace's method applied to
$|\mathcal E_1|=\int_0^{1/(4\pi)}R(\sigma)^d\,\mathrm{d}\sigma$ gives
$|S|/|\mathcal E_1|\sim(e/2\sqrt\pi)\sqrt{d}=\Theta(\sqrt{d})$.
\end{proof}

\begin{remark}
What the $\kappa$-sampler exploits is the thin-shell geometry of
$\mathcal E_1$ (Corollary~\ref{cor:concentration}): as $d$ grows,
$\mu_\kappa$ places its mass on a thin shell of the heat ball, while any
axis-aligned bounding proposal wastes a $\Theta(\sqrt{d})$ volume factor.
A proposal uniform on $\mathcal E_1$ itself keeps $\chi^2=O(1)$; the
variance gap arises specifically against over-dispersed bounding proposals.
\end{remark}

\paragraph{Empirical validation of the sampler.}
We validate Algorithm~\ref{alg:psi-sampler} independently of any
generative-modeling pipeline, using the caloric reference integrand
$f(y,s)=|y|^2/(2s)$ (whose true mean-value is known analytically),
$N=2000$ samples, $M=80$ runs, and dimensions $d\in\{4,8,16,32,64,128\}$.
Figure~\ref{fig:variance-ess} reports estimator variance and effective
sample size (ESS). Both estimators inherit the intrinsic $O(d)$ growth
of $\mathrm{Var}_{\mu_\kappa}[f]$; on top of this, the
uniform-with-weights estimator suffers severe ESS collapse, while the
direct $\mu_\kappa$ sampler maintains full ESS by construction. The same
runs provide histograms of $d(1-t)$ and $\sqrt{d}(\sigma-\sigma^\star)$,
confirming the $\chi^2_2$ and Gaussian limits of
Corollary~\ref{cor:concentration}.

\begin{figure}[htbp]
  \centering
  \includegraphics[width=\textwidth]{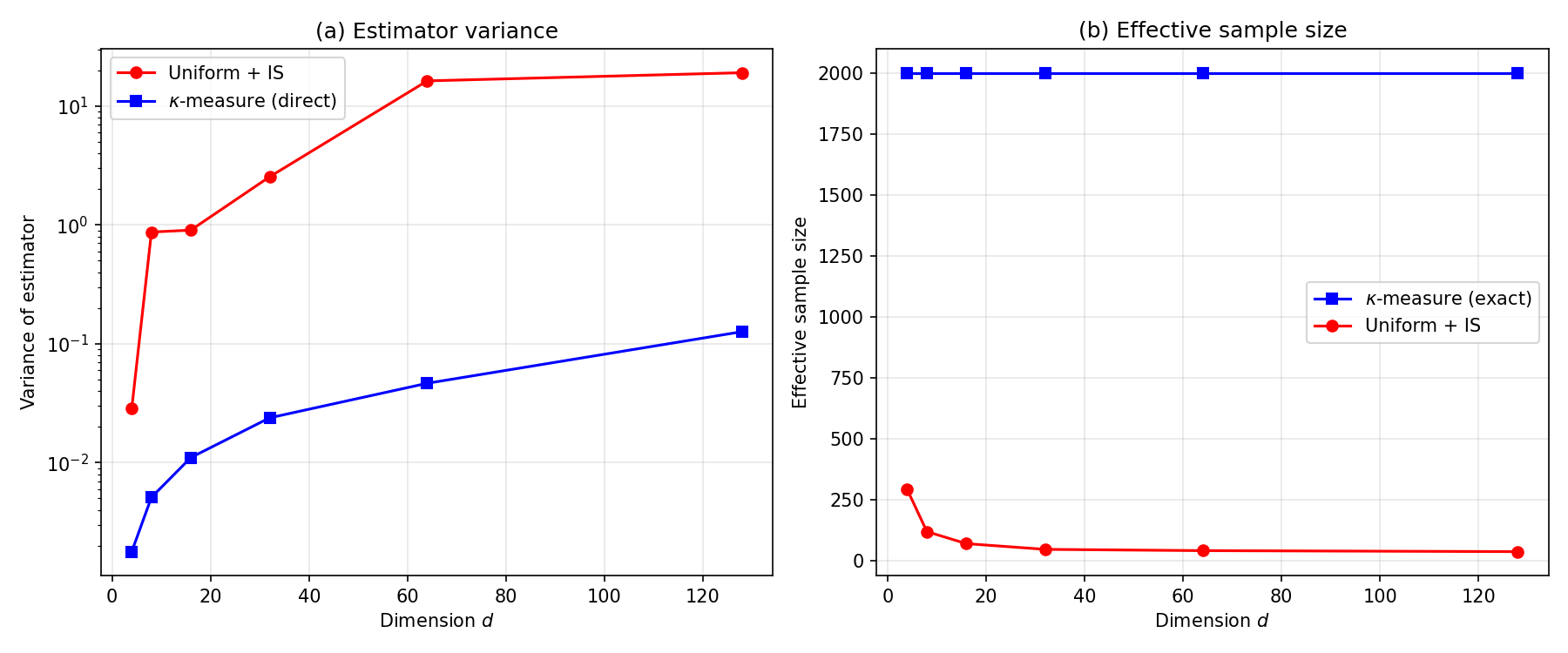}
  \caption{Estimator variance (left) and effective sample size (right)
  vs.\ dimension $d\in\{4,8,16,32,64,128\}$, $N=2000$, $M=80$ runs.
  The direct $\kappa$-sampler (blue) maintains ESS$=N$ and variance scaling
  with the intrinsic $O(d)$ growth of the integrand; the uniform
  importance-sampling estimator (red) suffers ESS collapse and excess
  variance growth, consistent with the $\Omega(\sqrt{d})$ lower bound of
  Proposition~\ref{prop:variance-gap}. The $\chi^2_2$ and Gaussian
  concentration laws of Corollary~\ref{cor:concentration} are confirmed
  by the histograms of $d(1-t)$ and $\sqrt{d}(\sigma-\sigma^\star)$
  (not shown).}
  \label{fig:variance-ess}
\end{figure}

\subsection{Heat-ball residual loss for neural score estimation}\label{sec:hb-loss}

Let $v_\theta(x,\tau)$ be a neural parameterization of the log-density.
Translating $\mathcal E_1$ to anchor $(x_0,\tau_0)$ via
$(z,\sigma)\mapsto(x_0-z,\tau_0-\sigma)$, the heat-ball residual loss is
\begin{equation}\label{eq:hb-loss}
  \mathcal L_{\mathrm{HB}}(\theta)
  = \E_{(x_0,\tau_0)}\!\Bigl[\Bigl(v_\theta(x_0,\tau_0)
   - \tfrac{1}{N}\sum_{i=1}^N v_\theta(x_0-z_i,\tau_0-\sigma_i)\Bigr)^2\Bigr],
\end{equation}
with $\{(z_i,\sigma_i)\}\sim\mu_\kappa$ from Algorithm~\ref{alg:psi-sampler}.
By Theorem~\ref{thm:hierarchy} and Theorem~\ref{thm:dsm-limit}(ii), the
true log-density satisfies $\mathcal L_{\mathrm{HB}}(v)=0$: enforcing this
loss is a necessary condition for $\nabla v_\theta$ to approximate the true
score.

%==========================================================
\section{Numerical Validation}\label{sec:experiments}
%==========================================================

We validate the local heat-ball (HB) framework along three complementary
axes: (i) a controlled 2D study with known ground-truth score, where the
theoretical predictions of Theorems~\ref{thm:local}--\ref{thm:dsm-limit}
can be verified directly; (ii) a 256-dimensional test on binarized MNIST
confirming that the $\kappa$-measure sampler scales to moderate dimension;
and (iii) the dedicated sampler study of Section~\ref{sec:variance}
(Figure~\ref{fig:variance-ess}), which validates the concentration laws of
Corollary~\ref{cor:concentration}. The goal is to isolate the quantitative
predictions of the theory; large-scale generative benchmarks are outside the
scope of this work.

\subsection{Two-dimensional diagnostic study}\label{sec:experiments-2d}

\paragraph{Datasets.}
Five toy distributions spanning distinct score-field geometries:
\texttt{asymmetric\_gmm} (asymmetric Gaussian mixture),
\texttt{sparse\_dense\_gmm} (sparse--dense mixture),
\texttt{elongated\_gmm} (aspect ratio $\approx8{:}1$),
\texttt{two\_spiral} (two-arm spiral), and \texttt{ring}
(radius $R=2.5$, radial standard deviation $0.12$). The elongated mixture
has an irrotational-dominated score field; the ring has a purely tangential
one.

\paragraph{Methods.}
Three score estimators: (i) \emph{DSM}, standard denoising score matching;
(ii) \emph{FP-Diffusion}~\cite{Lai2023}, DSM augmented with a global
Fokker--Planck residual regularizer; and (iii) \emph{Heat-ball estimator
(HB)}, DSM augmented with the Helmholtz decomposition and the heat-ball
mean-value constraint loss~\eqref{eq:hb-loss}, switched on by a linear
warm-up.

\paragraph{Protocol.}
All models train for $600$ epochs with batch size $256$ ($192$ for
\texttt{sparse\_dense\_gmm}), Adam optimizer with cosine-annealing schedule,
and a \texttt{ScoreMLP} of hidden width $256$. Score MSE is reported against
the analytic score at $\tau=0.3$; results at $\tau\in\{0.1,0.5,0.8\}$ are
qualitatively consistent. Values are single-seed ($42$) and indicative; the
paper's claims are the qualitative orderings of Observations~1--2.

\paragraph{Heat-ball parameters.}
Following $R(\sigma)=\sqrt{-2d\,\sigma\log(4\pi\sigma)}$ with $d=2$, we
expose $\sigma_{\min}=10^{-3}$ and $\sigma_{\max}$ as per-dataset
hyperparameters (Table~\ref{tab:hb-params}). For \texttt{ring}, the purely
tangential score field requires smaller heat-balls ($\sigma_{\max}=0.04$)
to suppress spurious radial components from the divergence-free part
$\tilde f_1$.

\begin{table}[ht]
\centering
\caption{Per-dataset heat-ball parameters ($d=2$).}
\label{tab:hb-params}
\begin{tabular}{lccl}
\toprule
Dataset & $\sigma_{\min}$ & $\sigma_{\max}$ & Note \\
\midrule
\texttt{asymmetric\_gmm}    & $10^{-3}$ & $0.0796$ & default upper bound \\
\texttt{sparse\_dense\_gmm} & $10^{-3}$ & $0.0796$ & default upper bound \\
\texttt{elongated\_gmm}     & $10^{-3}$ & $0.05$   & $-37\%$, lower MC variance \\
\texttt{two\_spiral}        & $10^{-3}$ & $0.05$   & compact ball for the manifold \\
\texttt{ring}               & $10^{-3}$ & $0.04$   & $-50\%$; see text \\
\bottomrule
\end{tabular}
\end{table}

\paragraph{Quantitative results.}
Table~\ref{tab:2d-results} reports $\tau=0.3$ final-epoch values. Globally,
the heat-ball constraint leaves the fit essentially unchanged where DSM is
already accurate (HB within $3\%$ of DSM on four of five datasets), and
improves by $9.4\%$ on \texttt{sparse\_dense\_gmm}. Score fields are
visually indistinguishable from DSM (Appendix~Figure~\ref{fig:2d-fields};
field MSEs within $\sim1\%$), consistent with Theorem~\ref{thm:dsm-limit}:
the DSM population optimum is feasible for the heat-ball constraint, so
adding the constraint should not distort an already-optimal score.

\begin{table}[ht]
\centering
\caption{Score MSE at $\tau=0.3$ (seed $42$, $600$ epochs).
Lower is better; best per row in bold. $\Delta$\,global: HB vs.\ DSM
(relative). $\Delta$\,low-dens.: relative change in low-density region
vs.\ DSM. MMD: reconstruction error of the Helmholtz field $f=f_1+f_2$.}
\label{tab:2d-results}
\begin{tabular}{lcccccc}
\toprule
Dataset & DSM & FP-Diff. & HB
  & $\Delta$\,global & $\Delta$\,low-dens. & MMD \\
  & & & & (HB / FP) & (HB / FP) & \\
\midrule
\texttt{asymmetric\_gmm}
  & \textbf{0.2418} & 0.2606 & 0.2448
  & $+1.2\%/+7.8\%$ & $+1.3\%/+10.4\%$ & $0.1126$ \\
\texttt{elongated\_gmm}
  & \textbf{0.1198} & 0.1420 & 0.1234
  & $+2.9\%/+18.5\%$ & $+6.2\%/+25.9\%$ & $0.0373$ \\
\texttt{ring}
  & \textbf{0.2798} & 0.2832 & 0.2811
  & $+0.5\%/+1.2\%$ & $+0.7\%/+1.9\%$ & $0.0091$ \\
\texttt{sparse\_dense\_gmm}
  & 0.3001 & 0.2908 & \textbf{0.2721}
  & $-9.4\%/-3.1\%$ & $-12.2\%/-3.4\%$ & $0.0436$ \\
\texttt{two\_spiral}
  & 0.5058 & 0.5073 & \textbf{0.5040}
  & $-0.4\%/+0.3\%$ & $-0.6\%/+0.5\%$ & $0.0021$ \\
\bottomrule
\end{tabular}
\end{table}

\paragraph{Observation 1: low-density robustness of the local constraint.}
Splitting the error by density region (Figure~\ref{fig:2d-region}). The
global FP-Diffusion regularizer degrades low-density tails by $+10.4\%$
on \texttt{asymmetric\_gmm} and $+25.9\%$ on \texttt{elongated\_gmm}
relative to DSM; this is expected since the global FP residual penalty
enforces a differential identity uniformly, spending capacity in
well-sampled regions at the cost of accuracy in the tails. By contrast,
the heat-ball constraint couples only probabilistically nearby points
and preserves tail accuracy: HB stays within a few percent of DSM
everywhere and improves the low-density region by $12.2\%$ on
\texttt{sparse\_dense\_gmm}. The downstream consequence for sample quality
(mode coverage, manifold preservation) is shown in
Appendix~Figure~\ref{fig:2d-generation}.

\begin{figure}[htbp]
    \centering
    \begin{subfigure}{0.49\linewidth}
        \centering
        \includegraphics[width=\linewidth]{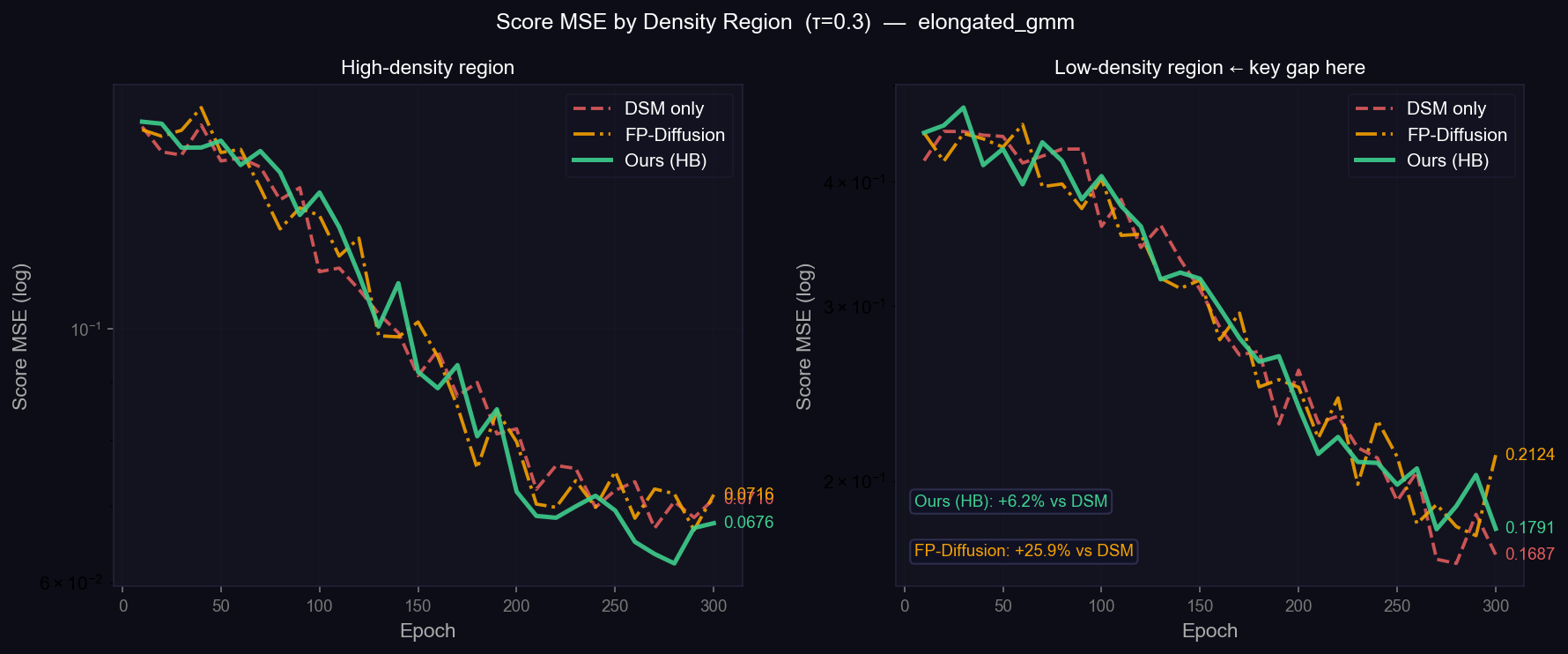}
        \caption{\texttt{elongated\_gmm}: HB $+6.2\%$, FP $+25.9\%$ in low-density region.}
    \end{subfigure}\hfill
    \begin{subfigure}{0.49\linewidth}
        \centering
        \includegraphics[width=\linewidth]{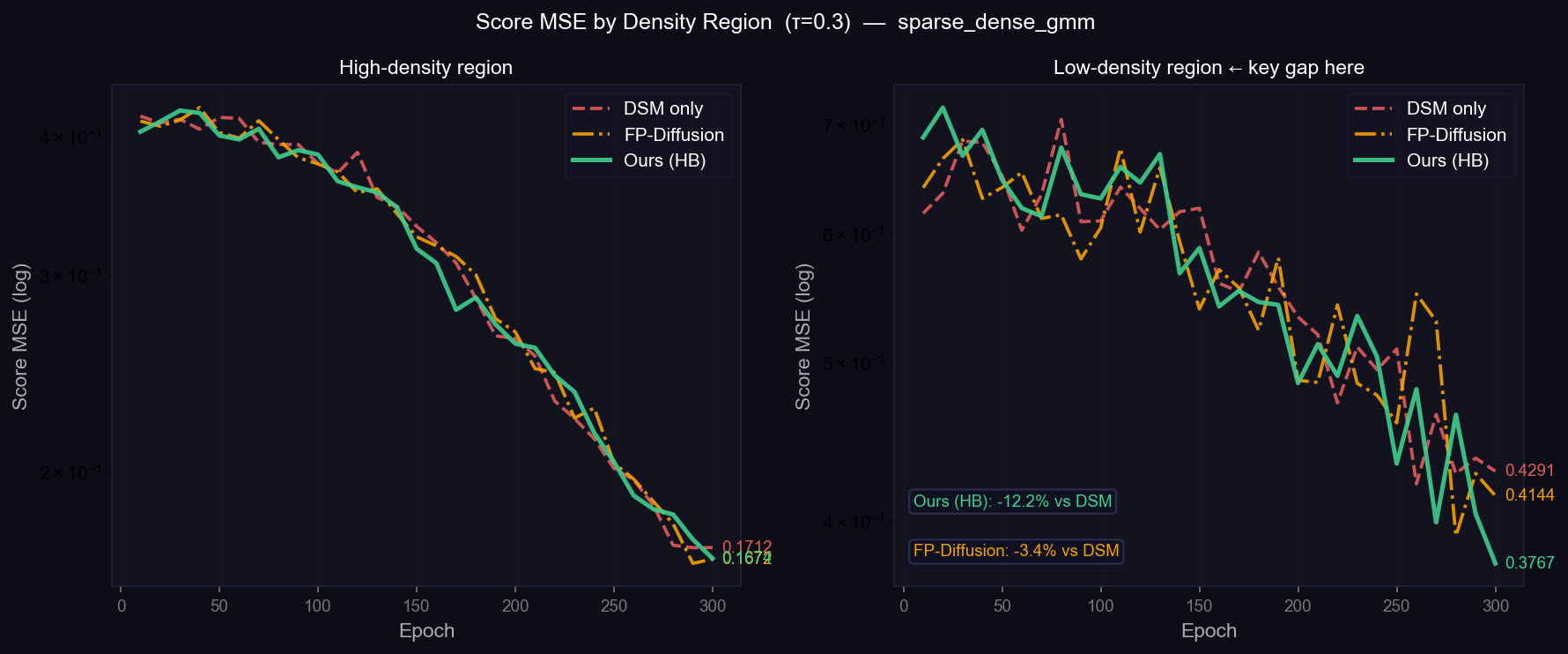}
        \caption{\texttt{sparse\_dense\_gmm}: HB $-12.2\%$, FP $-3.4\%$ in low-density region.}
    \end{subfigure}
    \caption{Score MSE split into high- and low-density regions during
    training ($\tau=0.3$, log scale) on two representative datasets.
    In the high-density region (left subplot of each panel) all methods
    agree; in the low-density region (right subplot) the global FP
    residual degrades accuracy while the local heat-ball constraint
    preserves or improves it.}
    \label{fig:2d-region}
\end{figure}

\paragraph{Observation 2: Helmholtz decomposition as a diagnostic.}
Figure~\ref{fig:helmholtz} validates the split $f=f_1+f_2$: on
Gaussian-mixture and elongated data the score field is
irrotational-dominated ($f_2=\nabla\phi$ carries the mode-seeking component;
$f_1$ is a weak residual swirl), while on manifold-structured data
(\texttt{ring}, \texttt{two\_spiral}, \texttt{two\_moon}) $f_1$ carries
a pronounced circulation aligned with the manifold. Since the true score is
curl-free, the rotational mass in $f_1$ is an estimation artifact whose
magnitude provides a per-dataset diagnostic. The Helmholtz field reproduces
the data with MMD ranging from $0.0021$ (\texttt{two\_spiral}) to $0.1126$
(\texttt{asymmetric\_gmm}).

\begin{figure}[htbp]
    \centering
    \begin{subfigure}{0.49\linewidth}
        \centering
        \includegraphics[width=\linewidth]{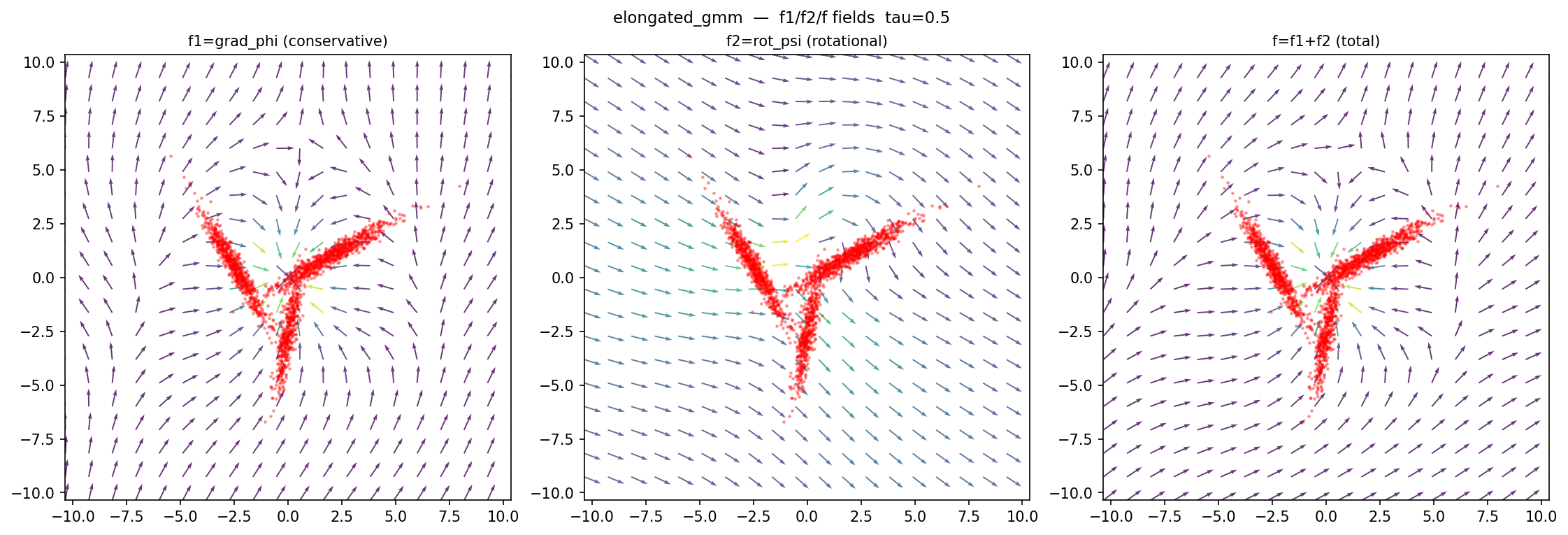}
        \caption{\texttt{elongated\_gmm} (cluster; MMD $0.0373$).}
    \end{subfigure}\hfill
    \begin{subfigure}{0.49\linewidth}
        \centering
        \includegraphics[width=\linewidth]{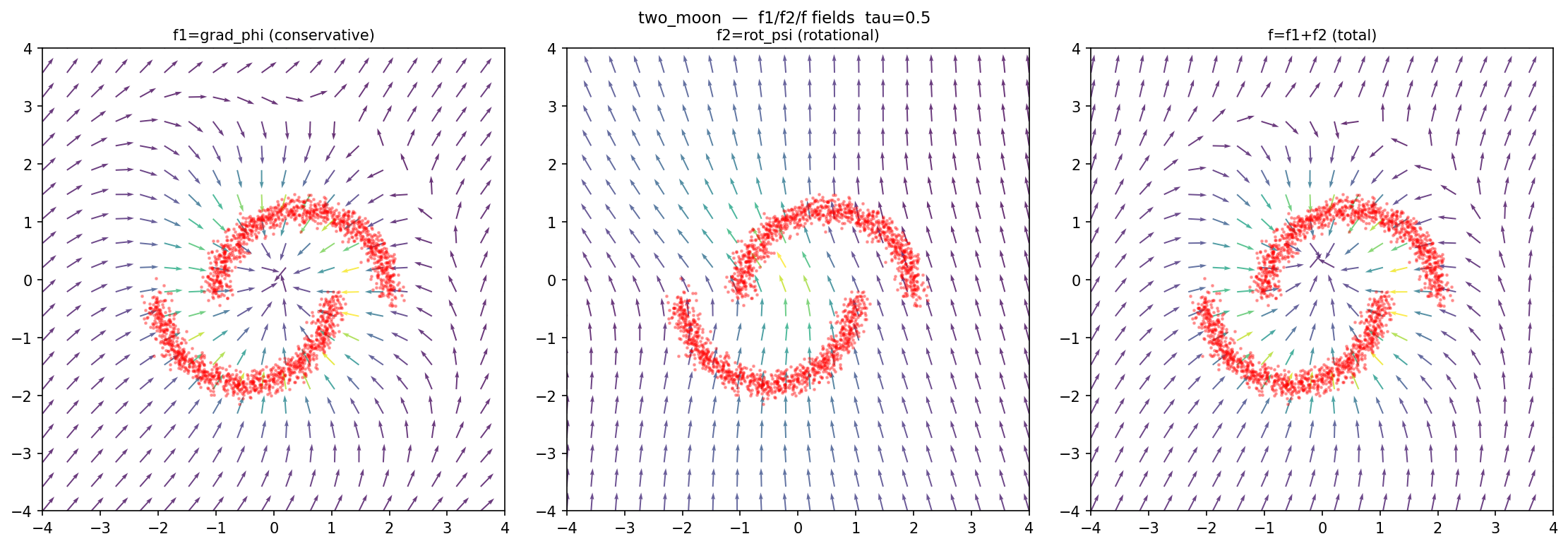}
        \caption{\texttt{two\_moon} (manifold; MMD $0.0142$).}
    \end{subfigure}
    \caption{Helmholtz decomposition $f=f_1+f_2$ of the learned drift
    at $\tau=0.5$: curl-free $f_2=\nabla\phi$ (left), divergence-free
    $f_1$ (center), total $f$ (right), with data overlaid. On cluster
    data $f_2$ dominates; on manifold data $f_1$ captures circulation.
    Residual rotational mass in $f_1$ is an estimation artifact, since
    the true score is curl-free.}
    \label{fig:helmholtz}
\end{figure}

\paragraph{Heat-ball PDE residual.}
Appendix~Figure~\ref{fig:2d-residual} tracks the parabolic mean-value
residual $\mathcal R_r[v_\theta]$ during training. The residual rises over
training for all methods (from $\sim10^{-10}$ to $\sim10^{-4}$), which is
expected: an under-fit score trivially satisfies the local mean-value
identity, and the residual grows as the network resolves sharp structure.
The HB estimator attains the lowest final residual on
\texttt{elongated\_gmm} ($3\times10^{-5}$ vs $1.4\times10^{-4}$ for DSM),
confirming that the constraint is actively enforced.

\subsection{High-dimensional validation: 256-dimensional MNIST}\label{sec:experiments-image}

We validate scalability on MNIST downsampled to $16\times16$ ($d=256$).
The heat-ball score model trains for at most $50$ epochs and produces
visually coherent digit samples. Table~\ref{tab:mmd} reports MMD across
sampler configurations (deterministic-switch threshold
$\texttt{det\_tau}\in\{0.05,0.10,0.15,0.20\}$, Langevin corrector steps
$\texttt{n\_correct}\in\{0,1,2\}$, reverse-SDE steps
$\texttt{n\_steps}\in\{400,500,600\}$). The best configuration
achieves MMD$=0.0088$ (Figure~\ref{fig:mnist-samples}). The
deterministic-switch threshold is the dominant inference parameter:
lowering \texttt{det\_tau} to $0.05$ causes salt-and-pepper boundary
artifacts ($12$ vs.\ $1$ affected samples at $0.05$ vs.\ $0.15$).

\begin{table}[ht]
\centering
\caption{MMD on $16\times16$ MNIST ($d=256$) for representative
sampler configurations. Lower is better; best in bold.}
\label{tab:mmd}
\begin{tabular}{lccc}
\toprule
\texttt{det\_tau} & \texttt{n\_correct} & \texttt{n\_steps} & MMD \\
\midrule
$0.15$ & $1$ & $500$ & $\mathbf{0.0088}$ \\
$0.15$ & $1$ & $400$ & $0.0102$ \\
$0.10$ & $0$ & $500$ & $0.0118$ \\
$0.15$ & $0$ & $400$ & $0.0121$ \\
\bottomrule
\end{tabular}
\end{table}

\begin{figure}[htbp]
    \centering
    \begin{subfigure}{0.49\linewidth}
        \centering
        \includegraphics[width=\linewidth]{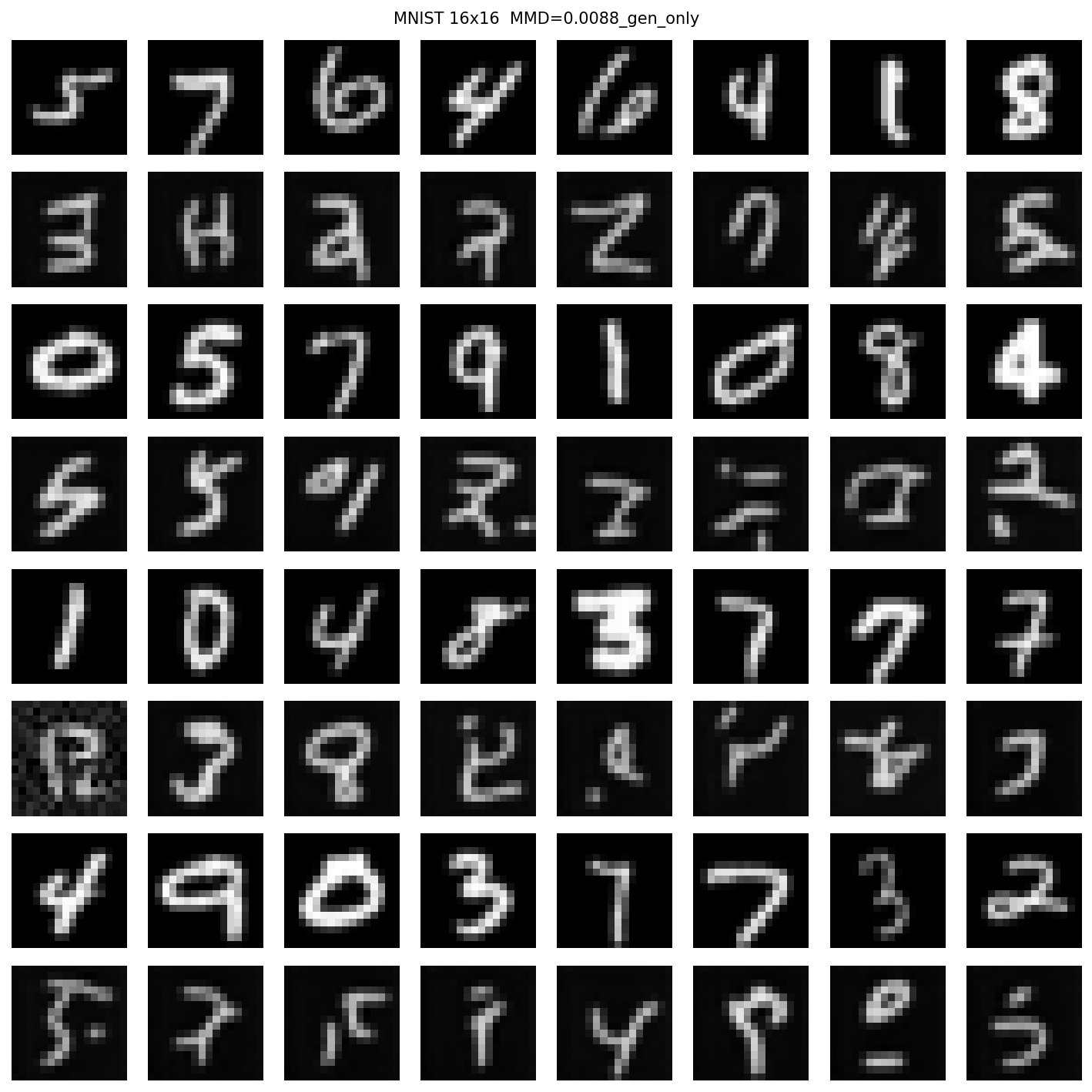}
        \caption{\texttt{det\_tau}$=0.15$, \texttt{n\_correct}$=1$,
        \texttt{n\_steps}$=500$ (MMD$=0.0088$, best).}
    \end{subfigure}\hfill
    \begin{subfigure}{0.49\linewidth}
        \centering
        \includegraphics[width=\linewidth]{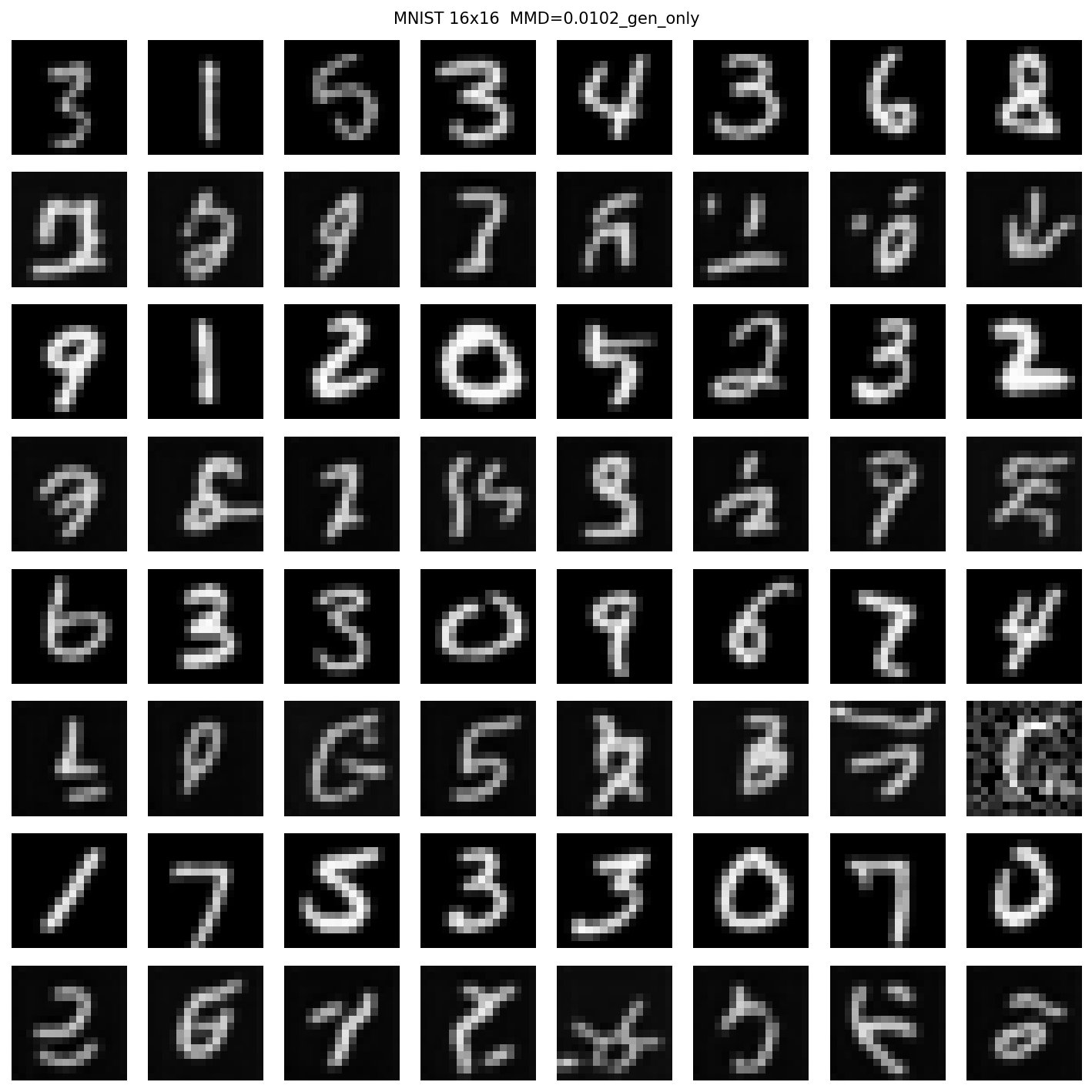}
        \caption{\texttt{det\_tau}$=0.15$, \texttt{n\_correct}$=1$,
        \texttt{n\_steps}$=400$ (MMD$=0.0102$).}
    \end{subfigure}
    \vspace{0.8em}
    \begin{subfigure}{0.49\linewidth}
        \centering
        \includegraphics[width=\linewidth]{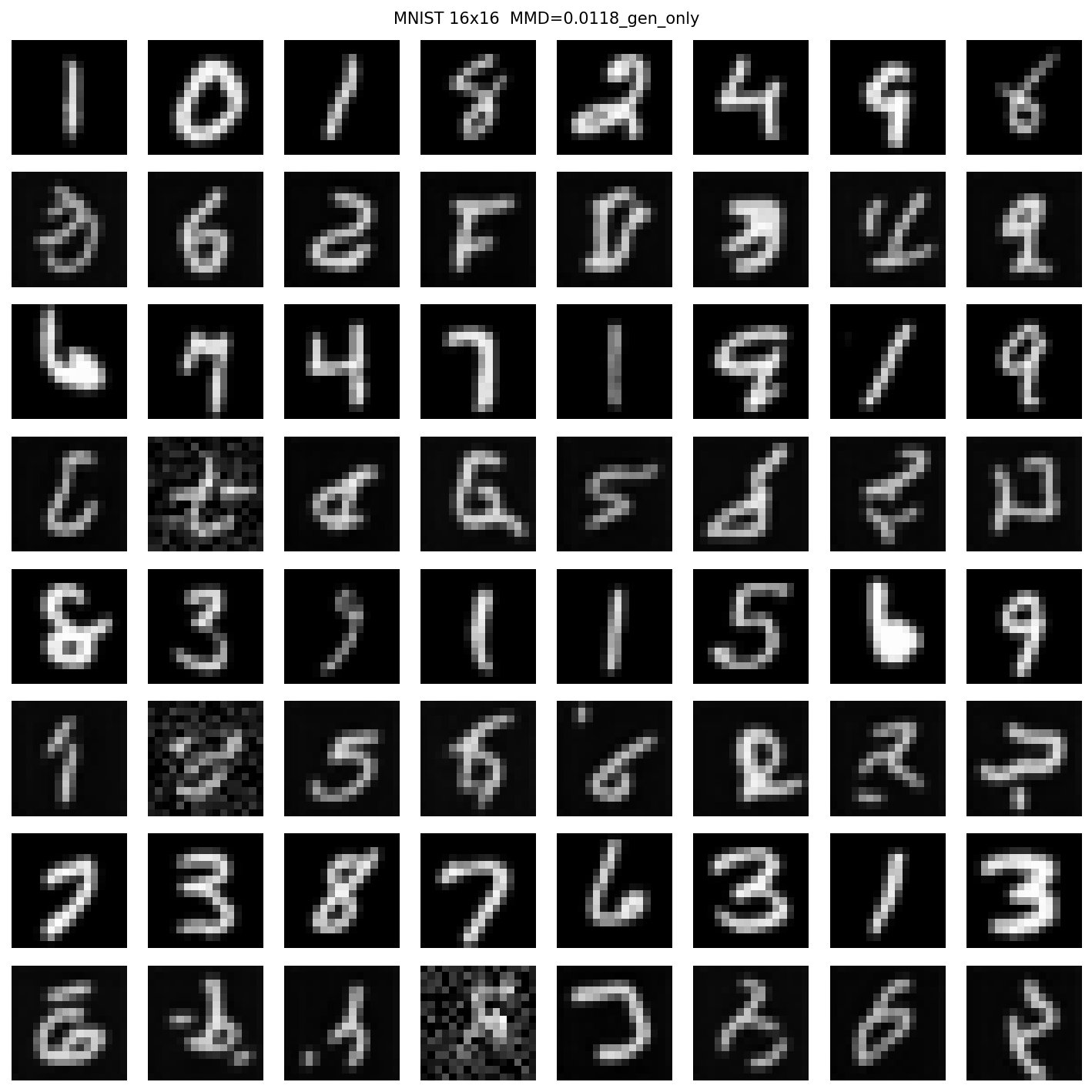}
        \caption{\texttt{det\_tau}$=0.10$, \texttt{n\_correct}$=0$,
        \texttt{n\_steps}$=500$ (MMD$=0.0118$).}
    \end{subfigure}\hfill
    \begin{subfigure}{0.49\linewidth}
        \centering
        \includegraphics[width=\linewidth]{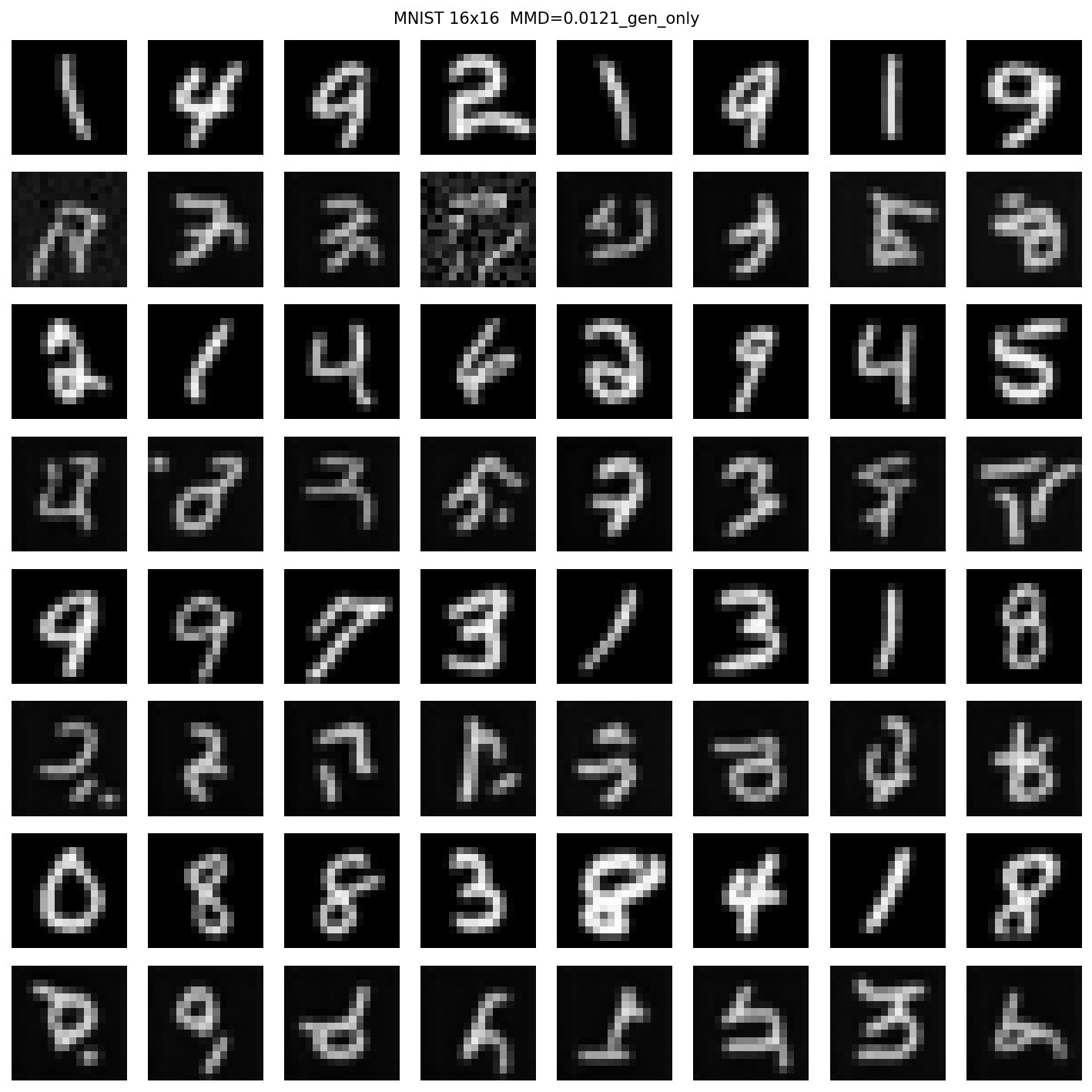}
        \caption{\texttt{det\_tau}$=0.15$, \texttt{n\_correct}$=0$,
        \texttt{n\_steps}$=400$ (MMD$=0.0121$).}
    \end{subfigure}
    \caption{Unconditional samples from the heat-ball score model on
    $16\times16$ MNIST ($d=256$). In each grid, odd rows are real
    references and even rows are model generations.
    The $\kappa$-measure sampler scales to $d=256$; enabling a Langevin
    corrector and adding reverse-SDE steps both lower MMD.}
    \label{fig:mnist-samples}
\end{figure}

%==========================================================
\section{Conclusion}\label{sec:conclusion}
%==========================================================

We have developed a local Fokker--Planck geometric framework for score
estimation that replaces the global conditional expectation of denoising
score matching with a local parabolic average. The central methodological
advance is the extension of the classical heat-ball mean-value method
(Watson~\cite{watson1973}, Evans~\cite{Evans2010}) to variable-coefficient
Fokker--Planck dynamics via a cumulative-variance time change. This yields
three categories of result.

\paragraph{Analytical foundations.}
Theorem~\ref{thm:hierarchy} and Corollary~\ref{cor:score-mean} provide
exact local mean-value representations for the density $u$, log-density
$v=\log u$, score $\nabla v$, and entropy density $w=u\log u$ as heat-ball
integrals. Theorem~\ref{thm:local} establishes local well-posedness with
an explicit dimension-dependent drift budget---the safe existence window
shrinks like $d^{-1}$, making drift control structurally necessary in high
dimensions. Theorem~\ref{thm:dsm-limit} shows that the heat-ball framework
is a strict generalization of both denoising score matching (whose
population minimizer is feasible for the heat-ball constraint at all
scales) and global FP-residual methods (whose pointwise residual is
recovered as $r\to0$).

\paragraph{Computational contributions.}
The $\kappa$-measure---the Watson mean-value probability measure---admits a
closed-form factorization into independent Beta$(d/2+1,1)$ radial,
$\sigma^\star$-concentrated temporal, and uniform spherical components
(Proposition~\ref{prop:factorization}). Algorithm~\ref{alg:psi-sampler}
samples it exactly with unit per-sample weight and no rejection.
Corollary~\ref{cor:concentration} proves $\chi^2_2$ radial and Gaussian
temporal concentration; Proposition~\ref{prop:variance-gap} proves a
non-asymptotic $\Omega(\sqrt{d})$ variance advantage over uniform
importance sampling.

\paragraph{Numerical evidence.}
The 2D diagnostic study confirms two qualitative effects predicted by the
theory: the local heat-ball constraint preserves low-density accuracy that
the global FP residual degrades (Observation~1), and the Helmholtz
decomposition provides an interpretable diagnostic of estimation artifacts
in the learned drift (Observation~2). The 256-dimensional MNIST experiment
confirms that the $\kappa$-measure sampler scales to moderate dimension,
and the dedicated sampler study validates the $\chi^2_2$ and Gaussian
concentration laws of Corollary~\ref{cor:concentration}.

All reported MSE/MMD values are single-seed and indicative; the primary
contribution is the analytical and algorithmic framework, whose
correctness and convergence properties are established by
Theorems~\ref{thm:local}--\ref{thm:dsm-limit}.

%==========================================================
\appendix\label{sec:appendix}

\section{Proof of Theorem~\ref{thm:local}}\label{app:proof}

\begin{proof}
Fix $T\in(0,1]$, write $A=A(T)$, $C=C(T)$, $K:=A+C$, and work in
$X_T:=C([0,T];W^{1,\infty}(\R^d))$.

\smallskip\noindent\textbf{Step 1. Contraction.}
In nondivergence form, $\partial_\tau u=\Delta u-N[u]$ with
$N[u]:=\ft\!\cdot\!\nabla u+(\nabla\!\cdot\!\ft)\,u$. The mild solution
$u=\Phi(u)$ satisfies
$\Phi(u)(\tau):=e^{\tau\Delta}u_0-\int_0^\tau e^{(\tau-s)\Delta}N[u](s)\,\mathrm{d}s$.
Using $\|N[u]\|_{\Linf}\le K\|u\|_{X_T}$ and
$\|\nabla e^{t\Delta}g\|_{\Linf}\le C_d\,t^{-1/2}\|g\|_{\Linf}$,
\[
\|\Phi(u)\|_{X_T}\le(M_0+S_0)+\Lambda(T)\|u\|_{X_T},\quad
\Lambda(T):=K\sqrt{T}(\sqrt{T}+2C_d).
\]
For $\theta\in(0,1)$ with $\Lambda(T_0)\le\theta$ and $R=(M_0+S_0)/(1-\theta)$,
$\Phi:\bar B_R\to\bar B_R$ is a $\theta$-contraction; Banach's theorem gives
a unique solution on $[0,T_0]$. The condition $\Lambda(T_0)\le\theta$ solved
for $\sqrt{T_0}$ gives~\eqref{eq:T0sharp}. Interior parabolic regularity
then gives $u(\cdot,\tau)\in C^\infty$ and
$\partial_\tau u\in C((0,T_{\max});\Linf)$ for $\tau>0$.

\smallskip\noindent\textbf{Step 2. Positivity.}
The constant barriers $\bar u(\tau)=M_0e^{C\tau}$ and
$\underline u(\tau)=m_0e^{-C\tau}$ satisfy the appropriate differential
inequalities; the comparison principle gives
$\underline u\le u\le\bar u$, so $u>0$.

\smallskip\noindent\textbf{Step 3. Derived quantities.}
With $u$ positive and bounded, $v=\log u$, $w=uv$, and $s=\nabla u/u$ are
well-defined in $C([0,T_{\max});\Linf)$. The stated pointwise bounds follow
from the two-sided density bounds of Step~2.
\end{proof}

%==========================================================
\section{Additional 2D diagnostics}\label{sec:appendix-2d}

All figures use seed $42$ and the protocol of Section~\ref{sec:experiments-2d}.

\begin{figure}[htbp]
    \centering
    \includegraphics[width=\linewidth]{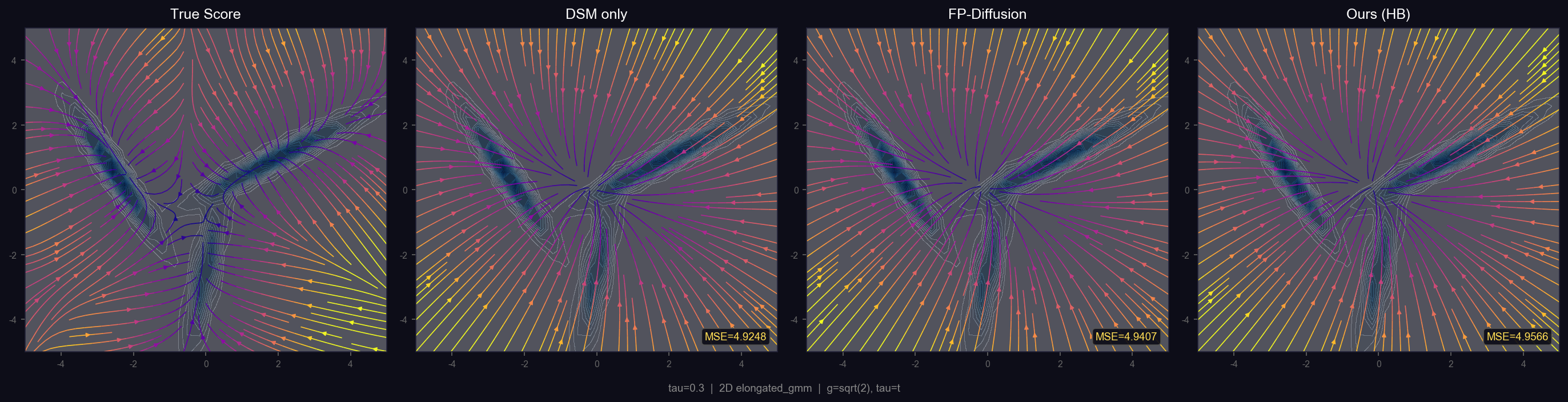}\\[0.4em]
    \includegraphics[width=\linewidth]{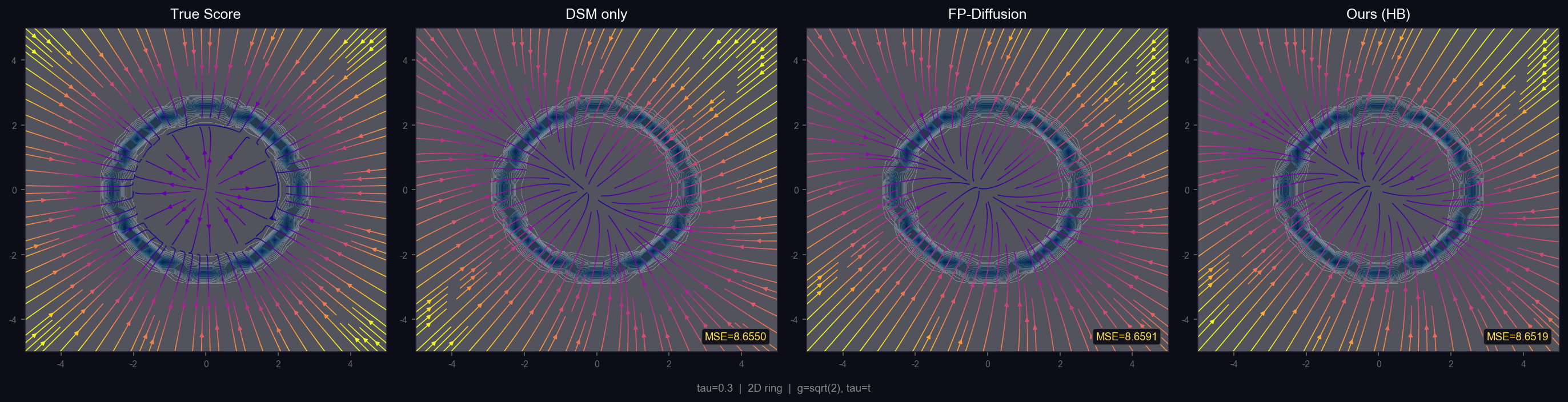}\\[0.4em]
    \includegraphics[width=\linewidth]{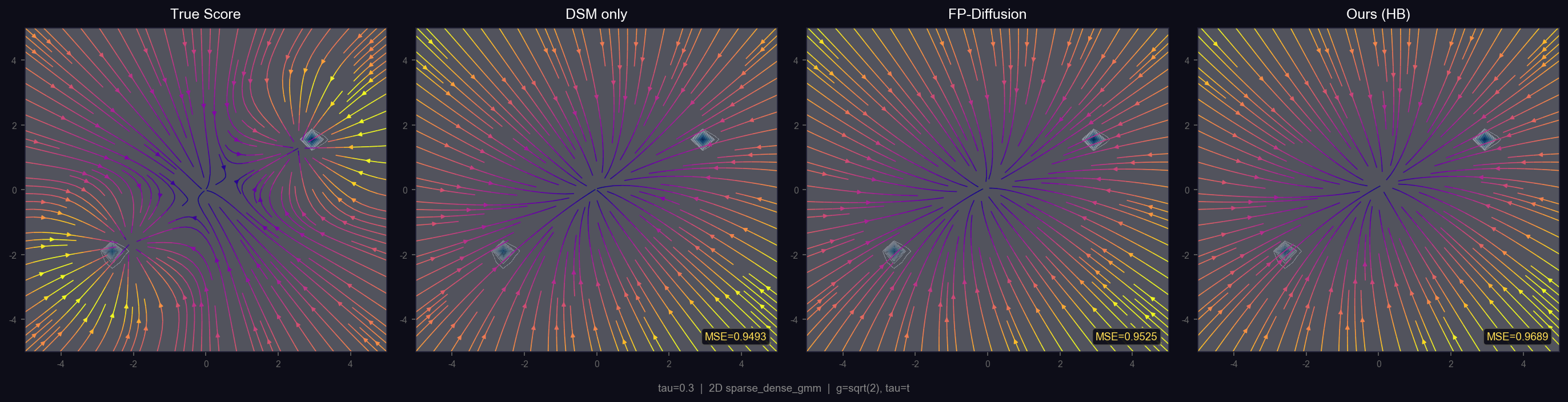}
    \caption{Learned score fields at $\tau=0.3$ (streamlines over density
    contours) for DSM, FP-Diffusion, and the heat-ball estimator on
    \texttt{elongated\_gmm} (top), \texttt{ring} (middle), and
    \texttt{sparse\_dense\_gmm} (bottom). All three fields are
    visually near-identical with field MSEs agreeing to within $\sim1\%$,
    consistent with Theorem~\ref{thm:dsm-limit}: the DSM optimum is
    feasible for the heat-ball constraint.}
    \label{fig:2d-fields}
\end{figure}

\begin{figure}[htbp]
    \centering
    \begin{subfigure}{0.49\linewidth}
        \includegraphics[width=\linewidth]{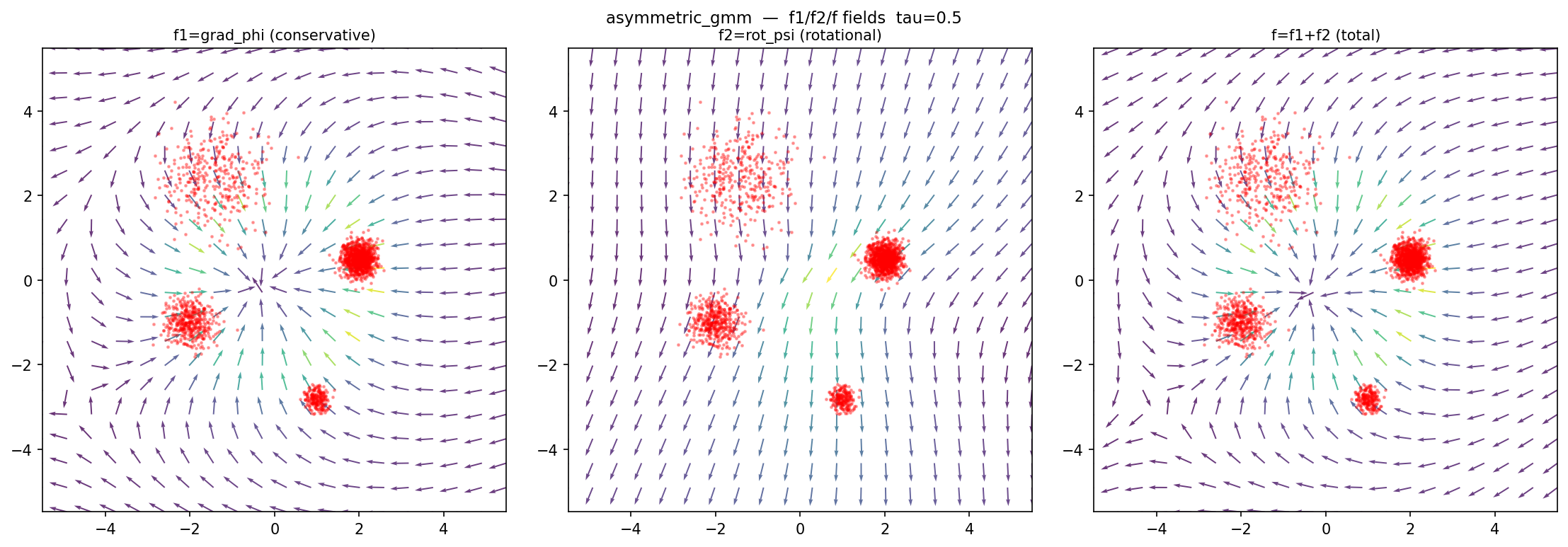}
        \caption{\texttt{asymmetric\_gmm} (MMD $0.1126$)}
    \end{subfigure}\hfill
    \begin{subfigure}{0.49\linewidth}
        \includegraphics[width=\linewidth]{elongated_gmm_f1f2.png}
        \caption{\texttt{elongated\_gmm} (MMD $0.0373$)}
    \end{subfigure}
    \vspace{1em}
    \begin{subfigure}{0.49\linewidth}
        \includegraphics[width=\linewidth]{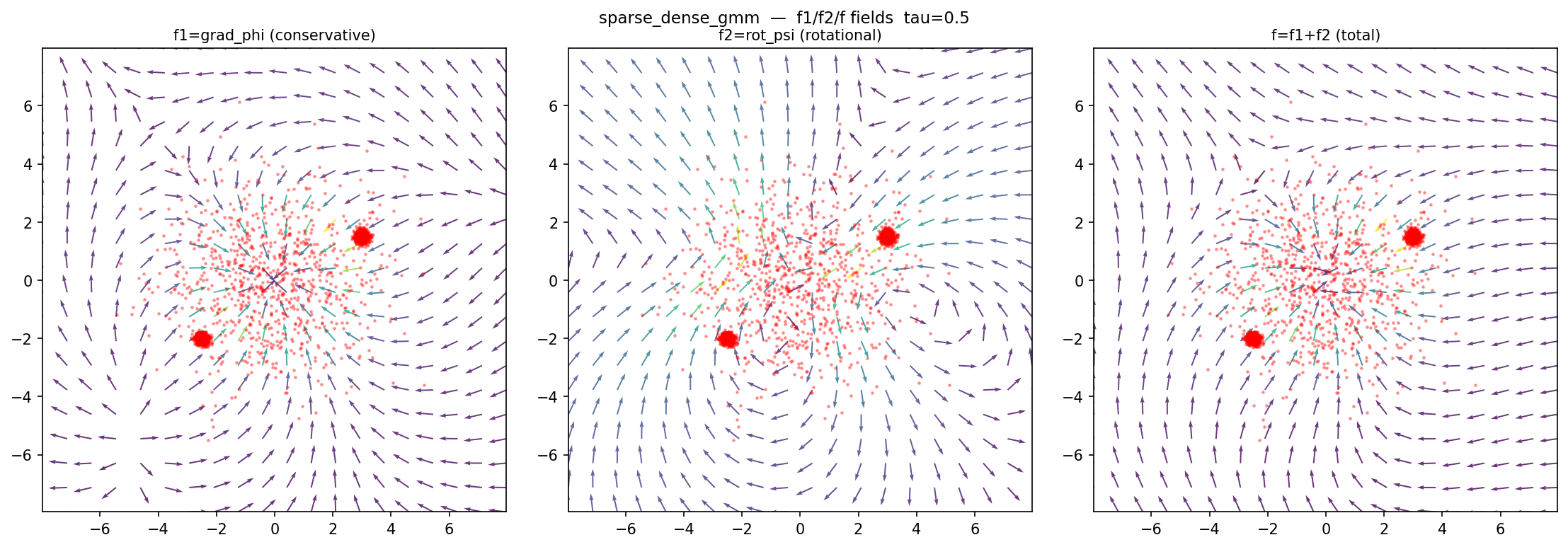}
        \caption{\texttt{sparse\_dense\_gmm} (MMD $0.0436$)}
    \end{subfigure}\hfill
    \begin{subfigure}{0.49\linewidth}
        \includegraphics[width=\linewidth]{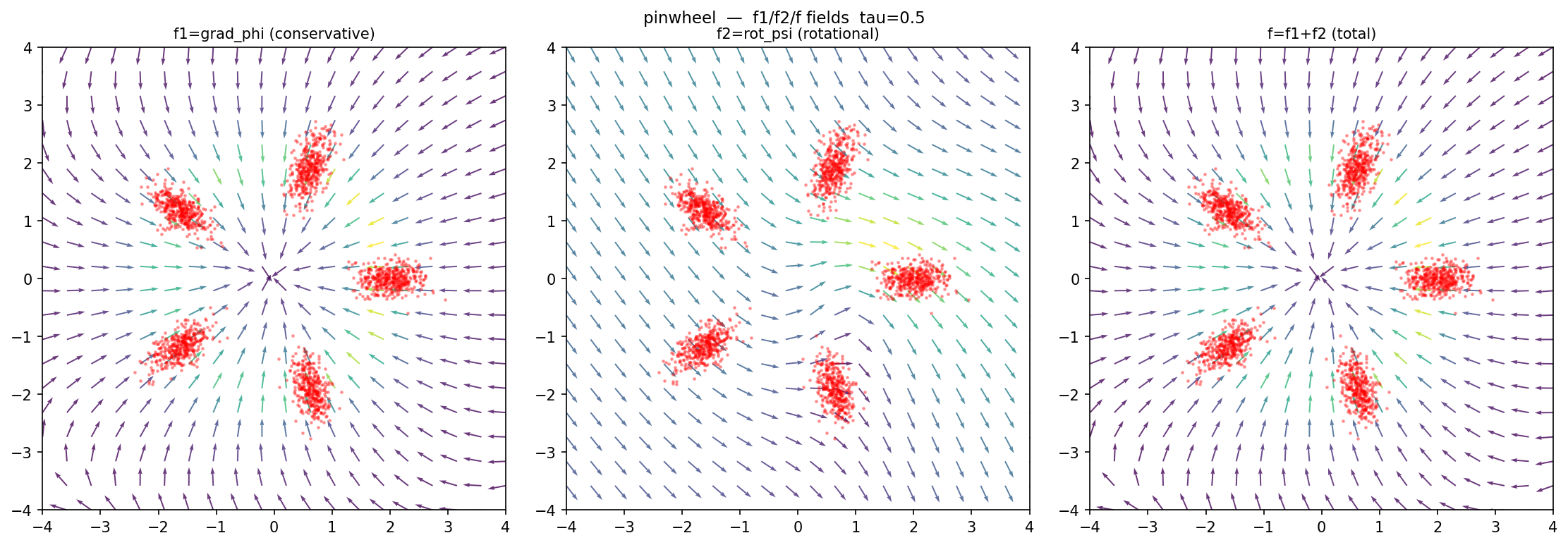}
        \caption{\texttt{pinwheel} (MMD $0.0068$)}
    \end{subfigure}
    \vspace{1em}
    \begin{subfigure}{0.49\linewidth}
        \includegraphics[width=\linewidth]{two_moon_f1f2.png}
        \caption{\texttt{two\_moon} (MMD $0.0142$)}
    \end{subfigure}\hfill
    \begin{subfigure}{0.49\linewidth}
        \includegraphics[width=\linewidth]{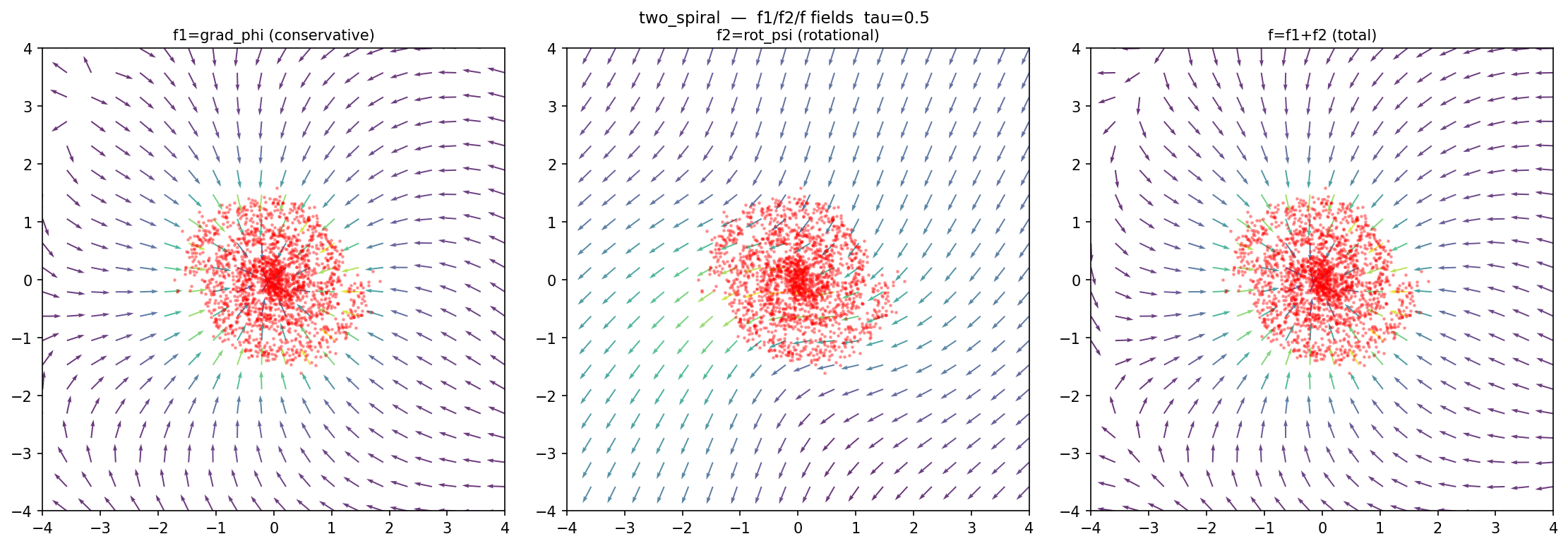}
        \caption{\texttt{two\_spiral} (MMD $0.0021$)}
    \end{subfigure}
    \caption{Helmholtz decomposition $f=f_1+f_2$ at $\tau=0.5$ across
    all datasets: curl-free $f_2=\nabla\phi$ (left), divergence-free $f_1$
    (center), total $f$ (right). Mixture datasets are irrotational-dominated;
    manifold datasets carry circulation in $f_1$.}
    \label{fig:helmholtz-full}
\end{figure}

\begin{figure}[htbp]
    \centering
    \begin{subfigure}{0.49\linewidth}
        \includegraphics[width=\linewidth]{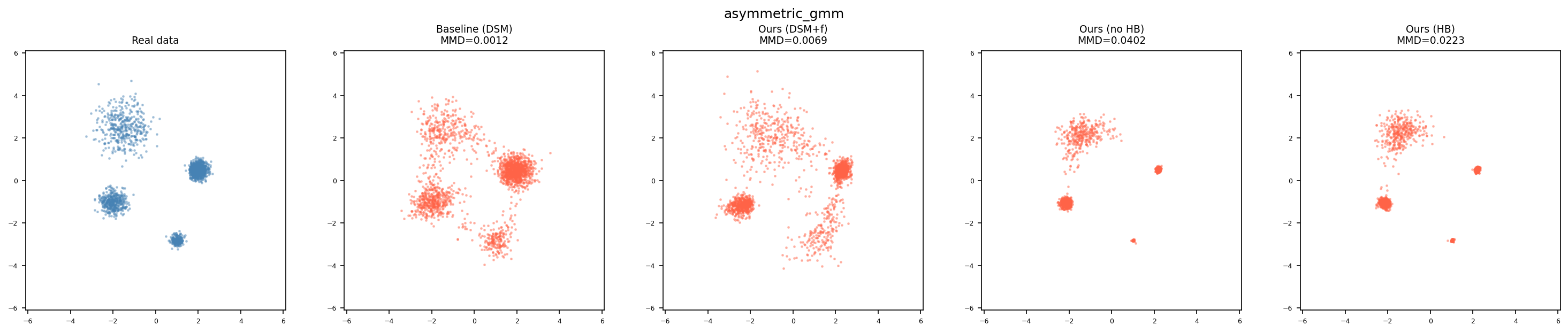}
        \caption{\texttt{asymmetric\_gmm}}
    \end{subfigure}\hfill
    \begin{subfigure}{0.49\linewidth}
        \includegraphics[width=\linewidth]{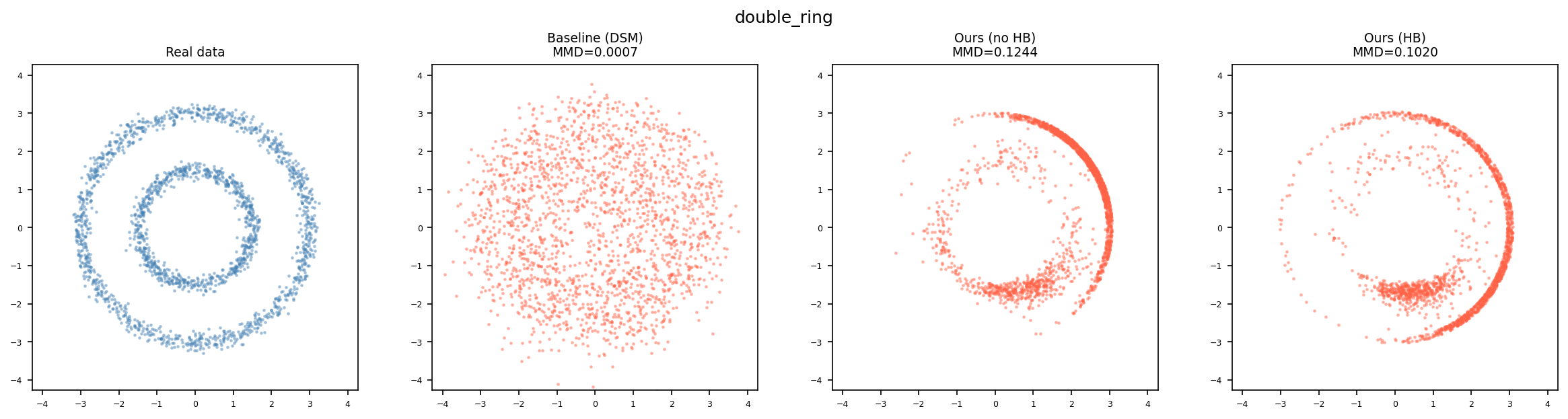}
        \caption{\texttt{double\_ring}}
    \end{subfigure}
    \vspace{0.8em}
    \begin{subfigure}{0.49\linewidth}
        \includegraphics[width=\linewidth]{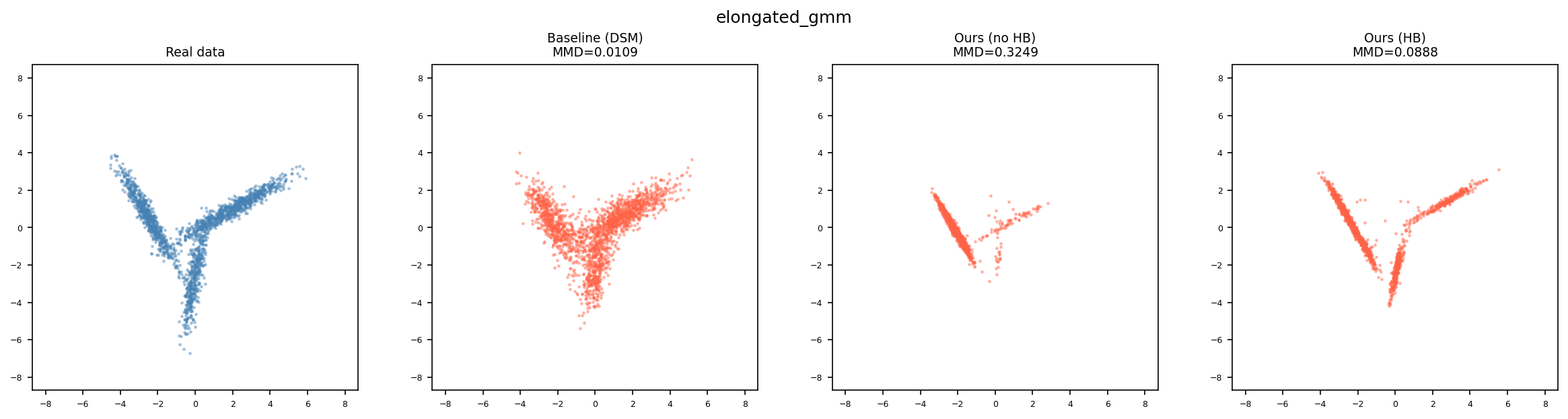}
        \caption{\texttt{elongated\_gmm}}
    \end{subfigure}\hfill
    \begin{subfigure}{0.49\linewidth}
        \includegraphics[width=\linewidth]{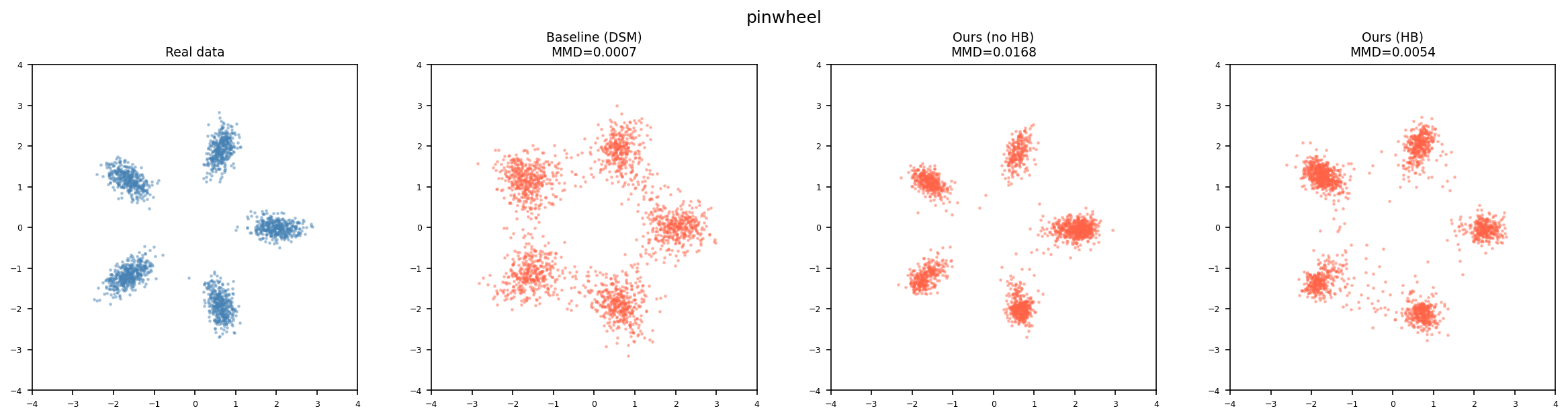}
        \caption{\texttt{pinwheel}}
    \end{subfigure}
    \vspace{0.8em}
    \begin{subfigure}{0.49\linewidth}
        \includegraphics[width=\linewidth]{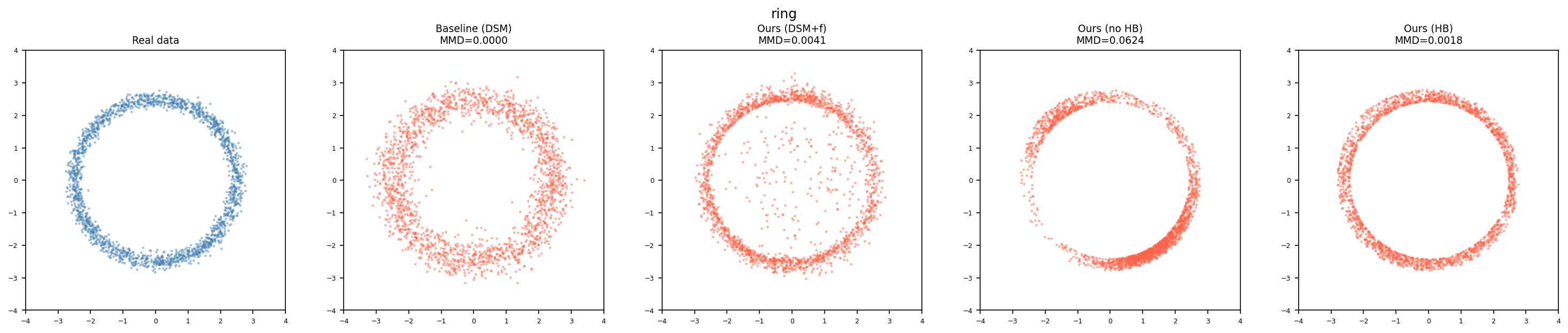}
        \caption{\texttt{ring}}
    \end{subfigure}\hfill
    \begin{subfigure}{0.49\linewidth}
        \includegraphics[width=\linewidth]{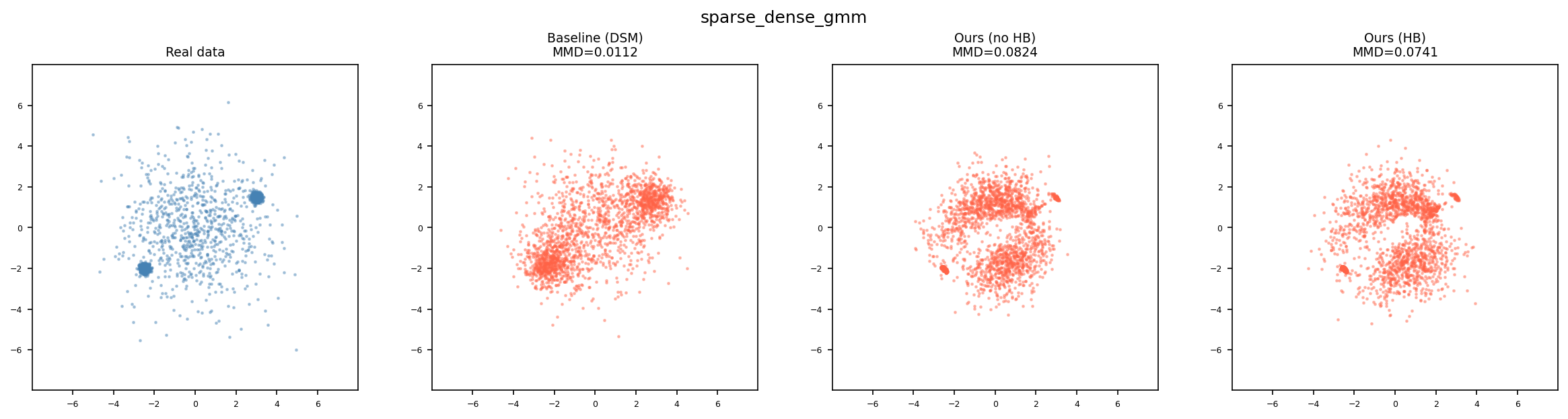}
        \caption{\texttt{sparse\_dense\_gmm}}
    \end{subfigure}
    \vspace{0.8em}
    \begin{subfigure}{0.49\linewidth}
        \includegraphics[width=\linewidth]{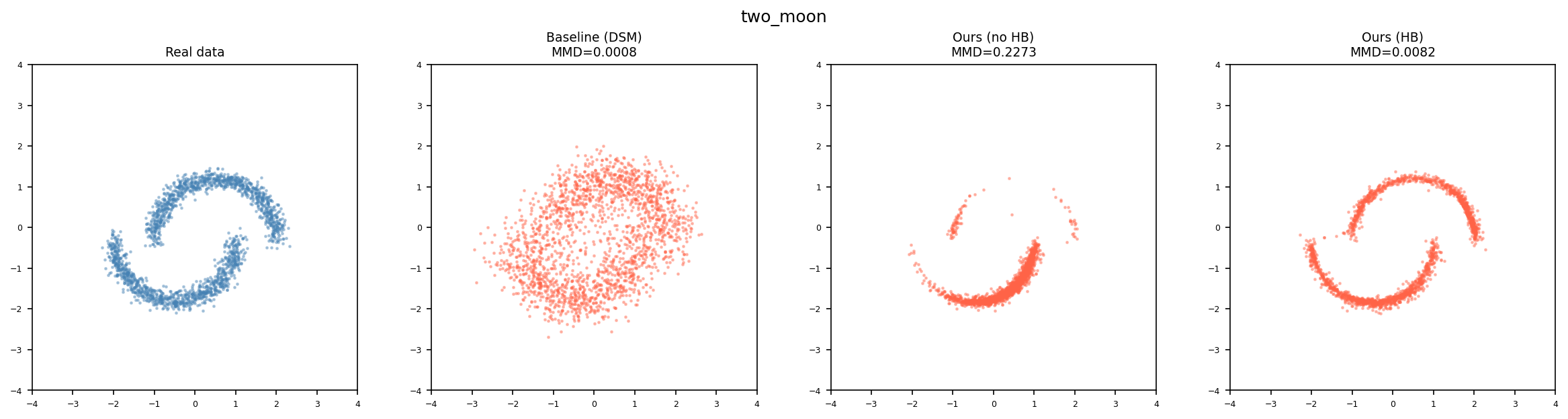}
        \caption{\texttt{two\_moon}}
    \end{subfigure}\hfill
    \begin{subfigure}{0.49\linewidth}
        \includegraphics[width=\linewidth]{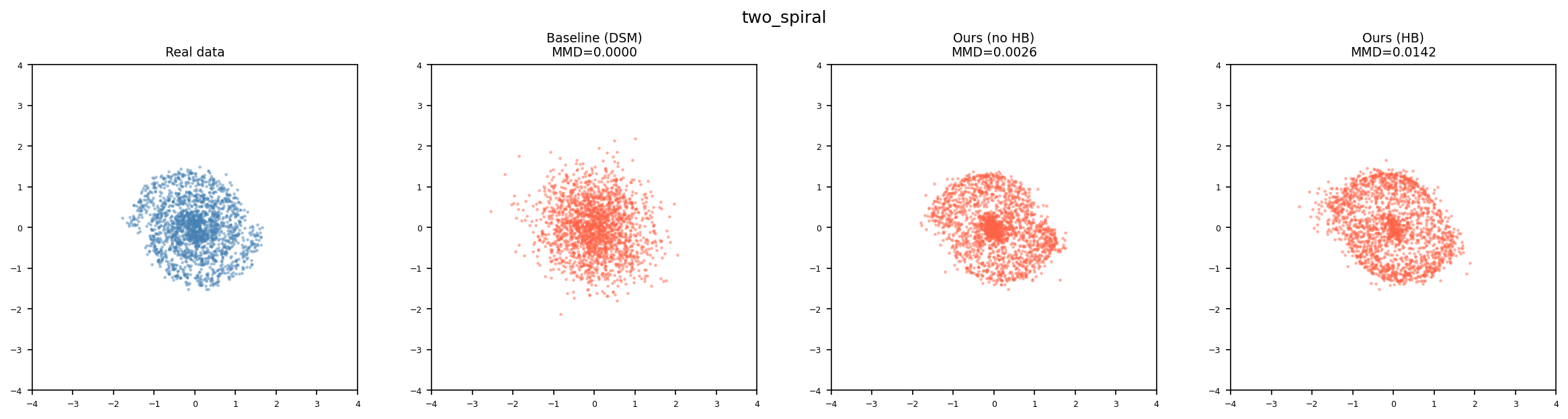}
        \caption{\texttt{two\_spiral}}
    \end{subfigure}
    \caption{Generated samples across the eight 2D datasets (left to
    right: real data, DSM, HB without heat-ball constraint, full HB).
    The heat-ball constraint preserves manifold geometry and mode coverage
    that the unconstrained variant collapses.}
    \label{fig:2d-generation}
\end{figure}

\begin{figure}[ht]
\centering
\includegraphics[width=0.55\linewidth]{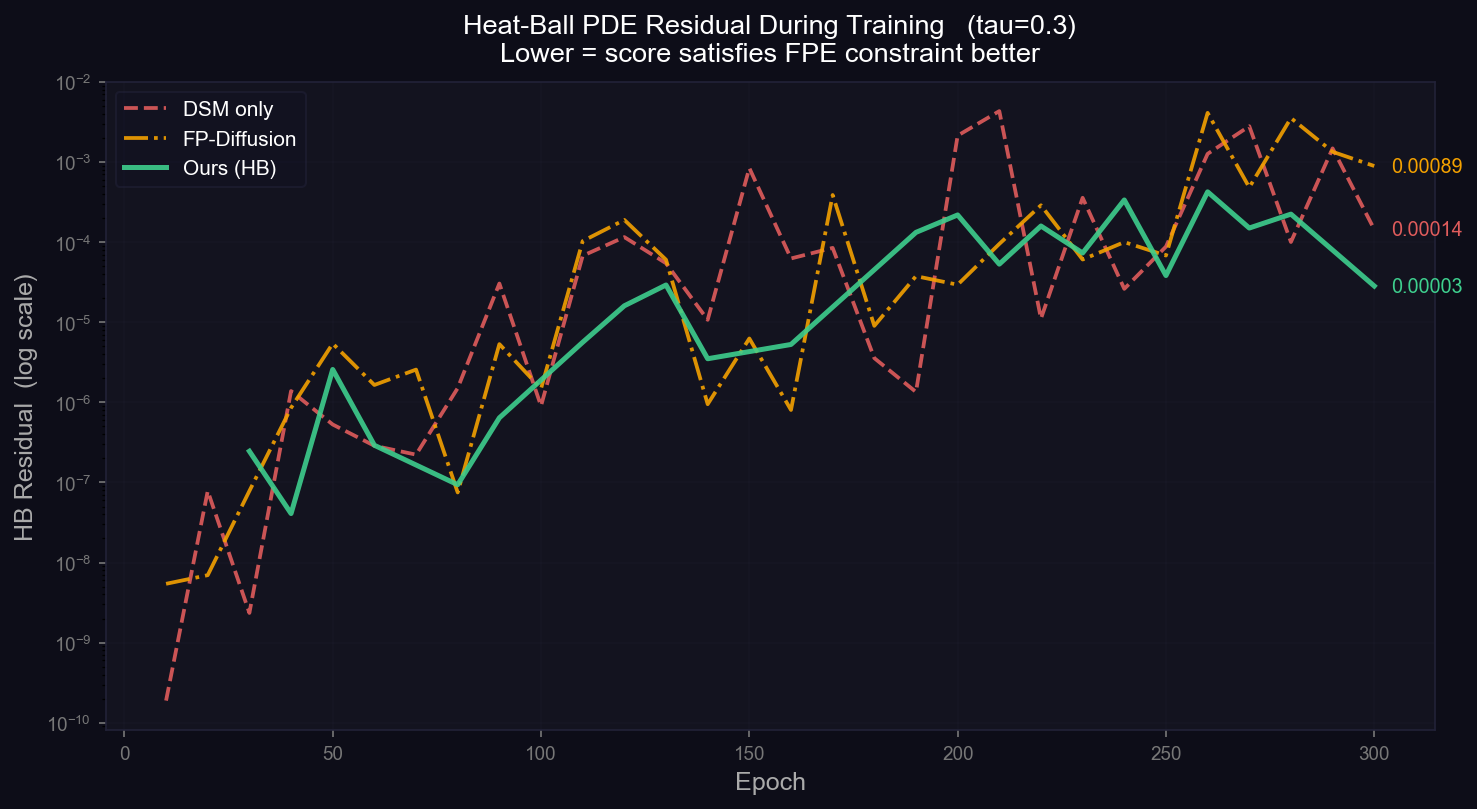}
\caption{Parabolic mean-value residual $\|\mathcal R_r[v_\theta]\|$ during
training on \texttt{elongated\_gmm} ($\tau=0.3$, log scale). All methods
show an increasing residual as the network resolves sharper structure; the
heat-ball estimator attains the lowest final value, confirming that the
constraint is actively enforced.}
\label{fig:2d-residual}
\end{figure}

\begin{figure}[htbp]
    \centering
    \includegraphics[width=\linewidth]{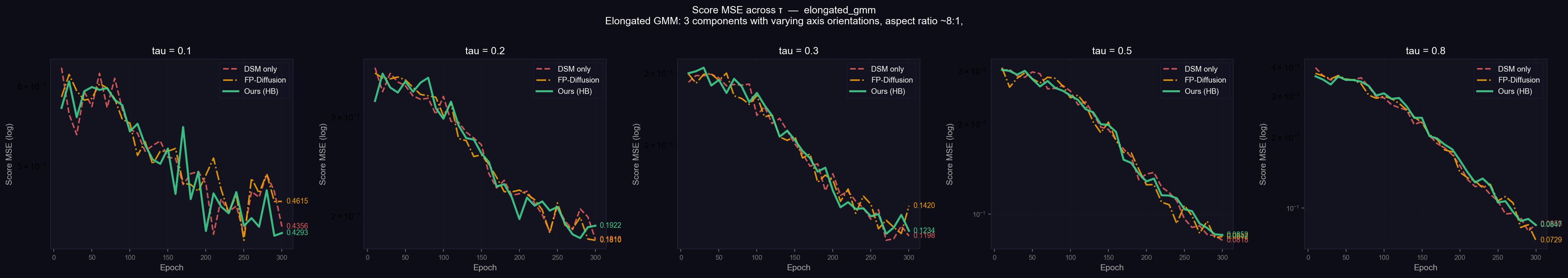}\\[0.4em]
    \includegraphics[width=\linewidth]{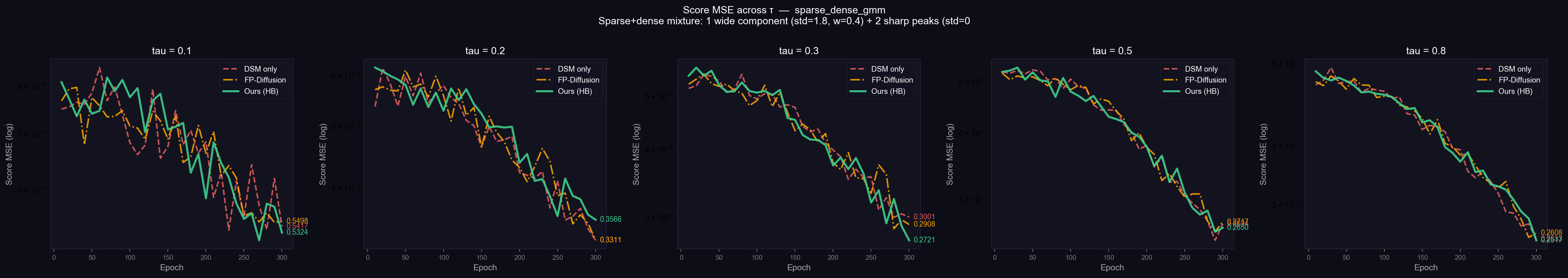}\\[0.4em]
    \includegraphics[width=\linewidth]{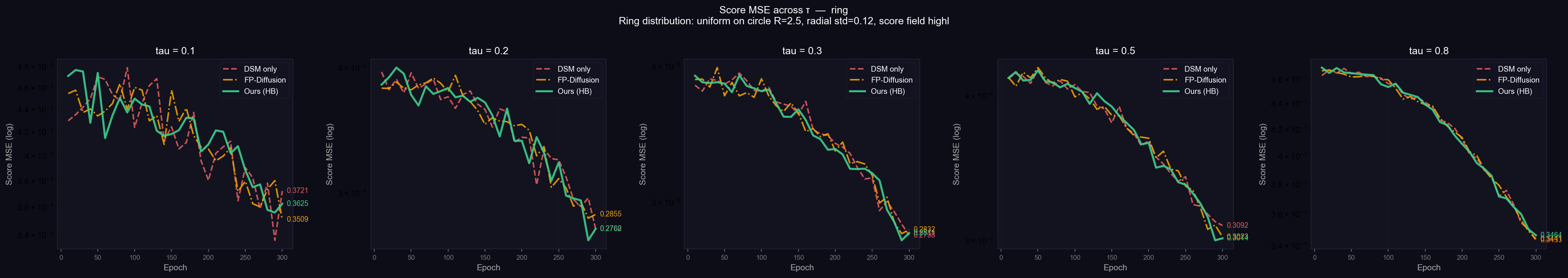}
    \caption{Score MSE training curves at noise levels
    $\tau\in\{0.1,0.2,0.3,0.5,0.8\}$ on \texttt{elongated\_gmm} (top),
    \texttt{sparse\_dense\_gmm} (middle), and \texttt{ring} (bottom).
    The relative ordering DSM/FP/HB is stable across noise scales,
    justifying $\tau=0.3$ as the representative operating point.}
    \label{fig:2d-multitau}
\end{figure}

\begin{figure}[htbp]
    \centering
    \includegraphics[width=\linewidth]{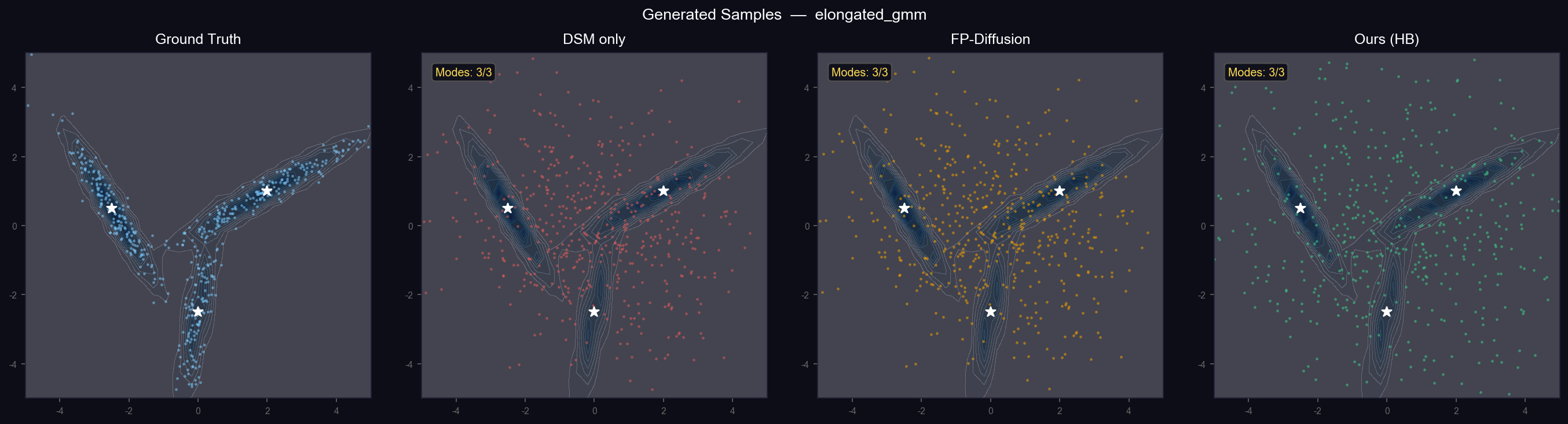}\\[0.4em]
    \includegraphics[width=\linewidth]{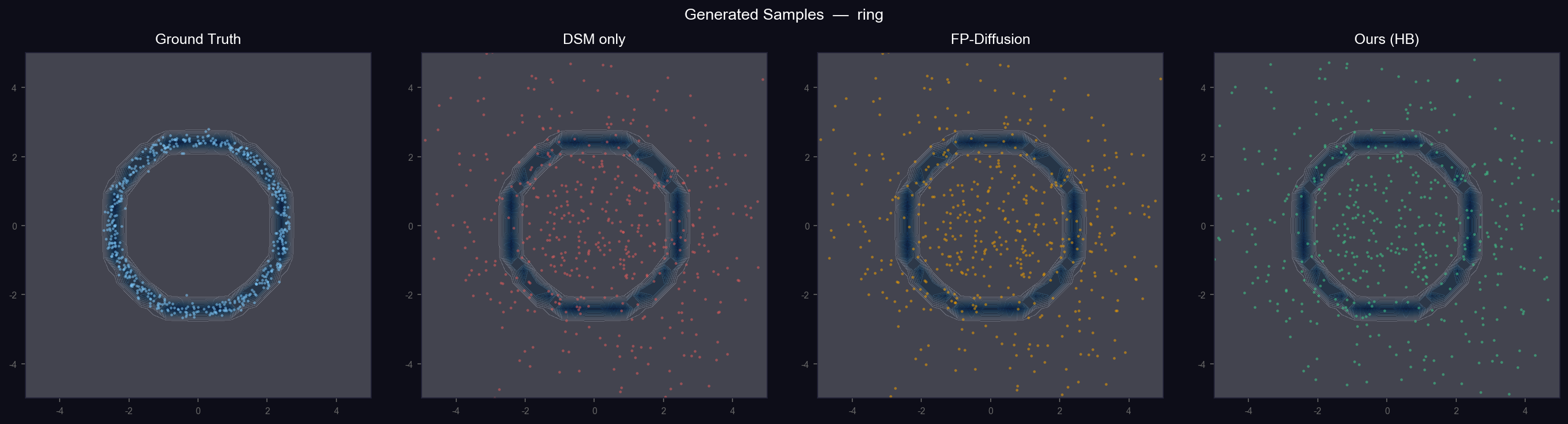}\\[0.4em]
    \includegraphics[width=\linewidth]{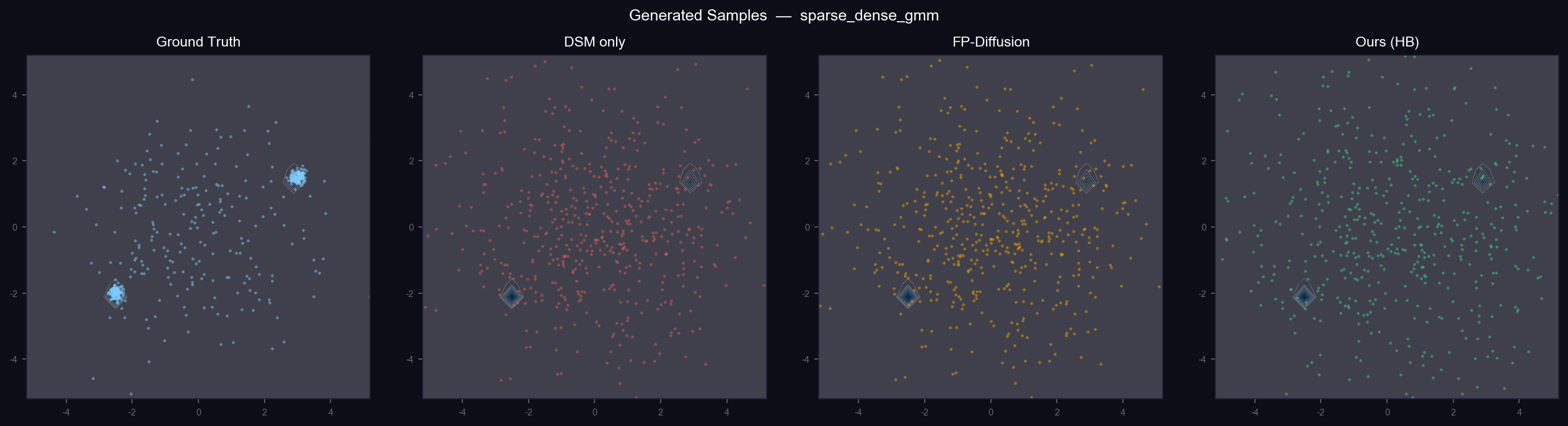}
    \caption{Generated samples with mode counting on \texttt{elongated\_gmm}
    (top), \texttt{ring} (middle), and \texttt{sparse\_dense\_gmm} (bottom);
    ground truth vs.\ DSM, FP-Diffusion, and HB over the true density
    contours, with recovered mode count annotated. The heat-ball estimator
    retains modes in low-density regions where the global baselines lose
    mass.}
    \label{fig:2d-generation-modes}
\end{figure}

%==========================================================
\section{Verification on a 1D bistable OU process}\label{sec:nonlinear-OU}

Consider the scalar bistable OU process
\begin{equation}\label{eq:ou-nonlinear}
  \mathrm{d}X_t = (X_t - X_t^3)\,\mathrm{d}t + \mathrm{d}W_t,
\end{equation}
whose stationary density $p(x)\propto\exp(x^2-x^4/2)$ gives the true score
$s_{\mathrm{true}}(x)=2x-2x^3$. The cumulative-variance time change gives
$\tau(t)=t/2$ and normalized drift $\tilde f(x,\tau)=2(x-x^3)$.

By Theorem~\ref{thm:hierarchy} and Corollary~\ref{cor:score-mean}, the score
at $(x_0,\tau_0)$ is expressed as a local heat-ball integral, whose source
$h_v=|\nabla v|^2-\tilde f\!\cdot\!\nabla v-\nabla\!\cdot\!\tilde f$ is
assembled from the numerical solution of the 1D Fokker--Planck equation.

\paragraph{Experimental setup.}
The comparison is diagnostic, not like-for-like. DSM uses $10^4$ forward
trajectories with kernel regression and Tweedie's formula (noise scale
$\sigma=0.15$). The heat-ball estimator evaluates
Corollary~\ref{cor:score-mean} using $\kappa$-measure Monte Carlo with
radius $r=0.3$ and linear extrapolation at the boundary. MSE is reported
on $[-3,3]$, split by region.

\begin{table}[htbp]
\centering
\caption{Score MSE on the bistable OU process ($t^*=1.0$).}
\label{tab:doublewell-mse}
\begin{tabular}{lccc}
\toprule
Method & Overall MSE & Core ($|x|<1$) & Tail ($|x|\ge1$) \\
\midrule
True score (analytic)    & $0$       & $0$             & $0$    \\
DSM ($\sigma=0.15$)      & $5.2025$  & $1.15$          & $8.15$ \\
Heat-ball ($r=0.3$)      & $\mathbf{0.5415}$ & $\mathbf{\approx0}$ & $\mathbf{0.95}$ \\
\bottomrule
\end{tabular}
\end{table}

\begin{figure}[htbp]
  \centering
  \includegraphics[width=0.65\textwidth]{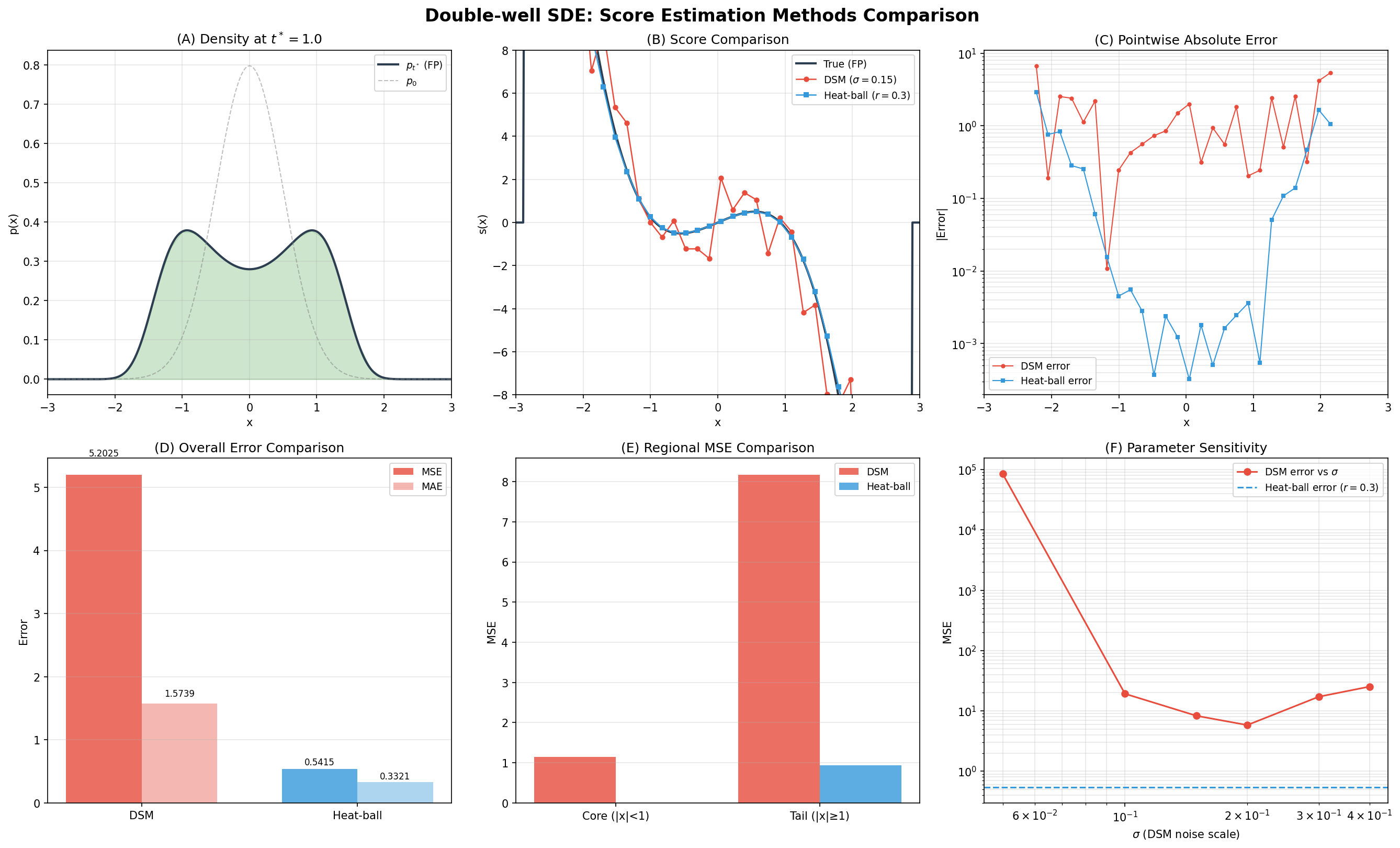}
  \caption{Score estimation on the bistable OU process.
  (A) Stationary density.
  (B) Score comparison: true (black), DSM (red), heat-ball (blue).
  (C) Pointwise absolute error (log scale).
  (D--E) Overall and regional MSE.
  (F) DSM parameter sensitivity: the optimal DSM error (over $\sigma$)
  remains bounded above the heat-ball baseline.}
  \label{fig:ou-score}
\end{figure}

DSM exhibits structural bias in the saddle region ($x\approx0$, where the
global loss is overwhelmed by the high-density modes) and severe sample
starvation in the tails ($|x|\ge1$, MSE $=8.15$). The heat-ball estimator
reduces overall MSE by a factor of $\sim10$ and achieves near-zero core
error, demonstrating that the local Fokker--Planck constraint is a structural
necessity in nonlinear regimes, not merely a variance-reduction heuristic.

\end{document}